\def\arXiv{1}
\newcommand{\notarxiv}[1]{foo}
\newcommand{\arxiv}[1]{ba}
	\renewcommand{\arxiv}[1]{#1}%
	\renewcommand{\notarxiv}[1]{\ignorespaces}%
	\renewcommand{\arxiv}[1]{\ignorespaces}%
	\renewcommand{\notarxiv}[1]{#1}%
\def\secref#1{section~\ref{#1}}
\def\eqref#1{equation~\ref{#1}}
\def\1{\bm{1}}
\def\rvu{{\mathbf{i}}}
\def\rvm{{\mathbf{m}}}
\def\rvn{{\mathbf{n}}}
\def\rvu{{\mathbf{u}}}
\def\rvv{{\mathbf{v}}}
\def\rvx{{\mathbf{x}}}
\def\rmG{{\mathbf{G}}}
\def\rmI{{\mathbf{I}}}
\def\rmM{{\mathbf{M}}}
\def\rmP{{\mathbf{P}}}
\def\rmR{{\mathbf{R}}}
\def\rmU{{\mathbf{U}}}
\def\rmV{{\mathbf{V}}}
\def\rmZ{{\mathbf{Z}}}
\def\vmu{{\bm{\mu}}}
\def\vv{{\bm{v}}}
\def\vw{{\bm{w}}}
\def\vx{{\bm{x}}}
\def\mP{{\bm{P}}}
\def\mBeta{{\bm{\beta}}}
\DeclareMathAlphabet{\mathsfit}{\encodingdefault}{\sfdefault}{m}{sl}
\SetMathAlphabet{\mathsfit}{bold}{\encodingdefault}{\sfdefault}{bx}{n}
\def\sP{{\mathbb{P}}}
\newcommand{\E}{\mathbb{E}}
\newcommand{\R}{\mathbb{R}}
\newtheorem{definition}{Definition}
\newtheorem*{definition*}{Definition}
\newtheorem*{lemma*}{Lemma}
\newtheorem*{theorem*}{Theorem}
\newtheorem{lemma}{Lemma}
\newtheorem{corollary}{Corollary}
\newtheorem{theorem}{Theorem}
\newtheorem{proposition}{Proposition}
\newtheorem{assumption}{Assumption}
\newtheorem{claim}{Claim}
\newcommand{\idxset}{\bar{S}}
\newcommand{\normspu}{r_s}
\newcommand{\normcore}{r_c}
\newcommand{\bv}{{\mathbf v}}
\newcommand{\x}{{\mathbf x}}
\newcommand{\z}{{\mathbf z}}
\newcommand{\w}{{\mathbf w}}
\newcommand{\ones}{{\mathbf 1}}
\newcommand{\G}{{\mathbf{G}}}
\newcommand{\I}{{\mathbf{I}}}
\newcommand{\succesprobpos}{$1-34\exp(-t^2)$ }
\newcommand{\reals}{\mathbb{R}}
\newcommand{\Y}{\mathcal{Y}}
\newcommand{\cX}{\mathcal{X}}
\newcommand{\F}{\mathcal{F}}
\newcommand{\N}{\mathcal{N}}
\newcommand{\cL}{\mathcal{L}}
\newcommand{\half}{\frac{1}{2}}
\newcommand{\defeq}{\coloneqq}
\newcommand{\learningsetting}{Linear Two Environment Problem\xspace}
\DeclareMathOperator*{\maximize}{maximize}
\DeclarePairedDelimiterX{\inp}[2]{\langle}{\rangle}{#1, #2}
\DeclarePairedDelimiterX{\abs}[1]{|}{|}{#1}
\DeclarePairedDelimiterX{\norm}[1]{\|}{\|}{#1}
\renewcommand{\eqref}[1]{Equation~(\ref{#1})}
\newcommand{\thmref}[1]{Theorem ~\ref{#1}}
\newcommand{\lemref}[1]{Lemma~\ref{#1}}
\newcommand{\corref}[1]{Corollary~\ref{#1}}
\newcommand{\assumref}[1]{Assumption~\ref{#1}}
\newlength\myindent 
\newcommand*{\indep}{%
  \mathbin{%
    \mathpalette{\@indep}{}%
  }%
}
\newcommand*{\nindep}{%
  \mathbin{
    \mathpalette{\@indep}{\not}
  }%
}
\newcommand*{\@indep}[2]{%
  \sbox0{$#1\perp\m@th$}
  \sbox2{$#1=$}
  \sbox4{$#1\vcenter{}$}
  \rlap{\copy0}
  \dimen@=\dimexpr\ht2-\ht4-.2pt\relax
  \kern\dimen@
  {#2}%
  \kern\dimen@
  \copy0 
} 
\renewcommand{\P}{\mathbb{P}}
\newcommand{\yw}[1]{{\color{blue}}}
\newcommand{\gy}[1]{{\color{purple}}}
\newcommand{\us}[1]{{\color{cyan}}}
\newcommand{\yc}[1]{{\color{green!50!black}}}
\newcommand{\ywsuggestion}[1]{{\color{brown}}}
\newtheorem*{rep@theorem}{\rep@title}
\newcommand{\newreptheorem}[2]{%
\newenvironment{rep#1}[1]{%
 \def\rep@title{#2 \ref{##1}}%
 \begin{rep@theorem}}%
 {\end{rep@theorem}}}
\algrenewcommand\algorithmicrequire{\textbf{Input:}}
\algrenewcommand\algorithmicensure{\textbf{Output:}}
\title{Malign Overfitting: Interpolation Can Provably Preclude Invariance}
\author{Yoav Wald\thanks{
Johns Hopkins University,
\texttt{ywald1@jhu.edu}}
~~
Gal Yona \thanks{
Weizmann Institute of Science}
~~
Uri Shalit \thanks{
Technion}
~~
Yair Carmon \thanks{
Tel Aviv University}
}
\date{}
\begin{document}

\maketitle

\begin{abstract}
Learned classifiers should often possess certain invariance properties meant to encourage fairness, robustness, or out-of-distribution generalization. 
However, multiple recent works empirically demonstrate that common invariance-inducing regularizers are ineffective in the over-parameterized regime, in which classifiers perfectly fit (i.e. interpolate) the training data. This suggests that the phenomenon of ``benign overfitting," in which models generalize well despite interpolating, might not favorably extend to settings in which robustness or fairness are desirable. 

In this work, we provide a theoretical justification for these observations. We prove that---even in the simplest of settings---any interpolating learning rule (with an arbitrarily small margin) will not satisfy these invariance properties. We then propose and analyze an algorithm that---in the same setting---successfully learns a non-interpolating classifier that is provably invariant. We validate our theoretical observations  on simulated data and the Waterbirds dataset.



\end{abstract}

\section{Introduction}
Modern machine learning applications often call for models which are not only accurate, but which are also robust to distribution shifts or satisfy fairness constraints. For example, we might wish to avoid using hospital-specific traces in X-ray images \citep{degrave2021ai, zech2018variable}, as they rely on spurious correlations that will not generalize to a new hospital, or we might seek ``Equal Opportunity'' models attaining similar error rates across protected demographic groups, e.g., in the context of loan applications \citep{byanjankar2015predicting,hardt2016equality}. A developing paradigm for fulfilling such requirements is learning models that satisfy some notion of \emph{invariance} \citep{peters2016causal, peters2017elements} across environments or sub-populations. For example, in the X-ray case, spurious correlations can be formalized as relationships between a feature and a label which vary across hospitals \citep{zech2018variable}. Equal Opportunity \citep{hardt2016equality} can be expressed as a statistical constraint on the outputs of the model, where the false negative rate is invariant to membership in a protected group. 
Many techniques for learning invariant models have been proposed
including
penalties that encourage invariance \citep{arjovsky2019invariant, krueger2020out, veitch2021counterfactual, wald2021calibration, puli2021out,makar2021causally,pmlr-v162-rame22a,kaur2022modeling}, data re-weighting \citep{sagawa2019distributionally, wang2021is, idrissi2022simple}, causal graph analysis \citep{subbaswamy2019preventing, subbaswamy2022unifying}, and more \citep{ahuja2020invariant}.



While the invariance paradigm holds promise for delivering robust and fair models, many current invariance-inducing methods often fail to improve over naive approaches. This is especially noticeable when these methods are used with overparameterized deep models capable of \emph{interpolating}, i.e., perfectly fitting the training data \citep{gulrajani2020search, NEURIPS2021_972cda1e, guo2022evaluation, zhou2022sparse, menon2021overparameterisation, veldanda2022fairness,cherepanova2021technical}. Existing theory explains why overparameterization hurts invariance for standard interpolating learning rules, such as empirical risk minimization and max-margin classification~\citep{pmlr-v119-sagawa20a,nagarajan2020understanding,d2020underspecification}, and also why reweighting and some types of distributionally robust optimization face challenges when used with overparameterized models~\citep{pmlr-v97-byrd19a,sagawa2019distributionally}.
In contrast, training overparameterized models to interpolate the training data typically results in good \emph{in-distribution} generalization, and such ``benign overfitting'' \citep{kini2021label, wang2021is} is considered a key characteristic of modern deep learning~\citep{cao2021risk, wang2021benign, pmlr-v178-shamir22a}. \yc{todo: better refs here?} 
Consequently, a number of works attempt to extend benign overfitting to robust or fair generalization by designing new interpolating learning rules~\citep{cao2019learning, kini2021label, wang2021is,lu2022importance}. 

In this paper, we demonstrate that such attempts face a fundamental obstacle, because \emph{all} interpolating learning rules (and not just maximum-margin classifiers) fail to produce invariant models in certain high-dimensional settings where invariant learning (without interpolation) is possible. This \emph{does not} occur because there are no invariant models that separate the data, but because interpolating learning rules \emph{cannot find them}. In other words, beyond identically-distributed test sets, overfitting is no longer benign. More concretely, we consider linear classification in a basic overparameterized Gaussian mixture model with invariant ``core'' features as well as environment-dependent ``spurious'' features, similar to models used in previous work to gain insight into robustness and invariance \citep{schmidt2018adversarially, rosenfeld2020risks, pmlr-v119-sagawa20a}. We show that any learning rule producing a classifier that separates the data with non-zero margin must necessarily rely on the spurious features in the data, and therefore cannot be invariant. Moreover, in the same setting we analyze a simple two-stage algorithm that can find accurate and nearly invariant linear classifiers, i.e., with almost no dependence on the spurious feature. 

Thus, we establish a separation between the level of invariance attained by interpolating and non-interpolating learning rules.
We believe that learning rules which fail in the simple over-parameterized linear classification setting we consider are not likely to succeed in more complicated, real-world settings. Therefore, our analysis provides useful guidance for future research into robust and fair machine learning models, as well as theoretical support for the recent success of non-interpolating robust learning schemes \citep{rosenfeld2022erm,veldanda2022fairness, kirichenko2022last,menon2021overparameterisation, pmlr-v180-kumar22a, pmlr-v162-zhang22u,idrissi2022simple,chatterji2022under}.

\textbf{Paper organization.} \yc{I think the paragraph needs to start with something like ``paper organization,'' since otherwise it's not immediately clear what it's about} The next section formally states our full result (\Cref{thm:main_statement}). In \Cref{sec:negative_result} we outline the arguments leading to the negative part of \Cref{thm:main_statement}, i.e., the failure of interpolating classifiers to be invariant in our model. In \Cref{sec:positive_result} we establish the positive part \Cref{thm:main_statement}, by providing and analyzing a non-interpolating algorithm that, in our model, achieves low robust error.
We validate our theoretical findings with simulations and experiments on the Waterbirds dataset in \Cref{sec:validation}, \yc{todo: mention additional experiments if they exist} and conclude with a discussion of additional related results and directions for future research in \Cref{sec:discussion}.

\section{Statement of Main Result} \label{sec:overview}

\subsection{Preliminaries}

\paragraph{Data model.}
Our analysis focuses on learning linear models over covariates $\x$ distributed as a mixture of two Gaussian distributions corresponding to the label $y$. 
\begin{definition} \label{def:environment}
An \emph{environment} is a distribution parameterized by $(\vmu_c, \vmu_s, d, \sigma, \theta)$ where $\theta\in{[-1, 1]}$ and $\vmu_c, \vmu_s\in{\reals^d}$ satisfy $\vmu_c \perp \vmu_s$ and with 
samples 
generated according to:
\begin{align} 
\label{eq:DGP}
\sP_{\theta}(y) = \textnormal{Unif}{\{-1, 1\}}, \quad 
\sP_{\theta}(\x|y) = \N(y\vmu_{c}+y\theta\vmu_{s}, \sigma^{2}I).
\end{align}
\end{definition}

Our goal is to find a (linear) classifier that predicts $y$ from $\x$ and is robust to the value of $\theta$ (we discuss the specific robustness metric below). To do so, the classifier will need to have significant inner product with the ``core'' signal component $\vmu_c$ and be approximately orthogonal to the ``spurious'' component $\vmu_s$. We focus on learning problems where we are given access to samples from two environments that share all their parameters other than $\theta$, as we define next. We illustrate our setting with \Cref{fig:projections} in \Cref{sec:app_helpers}.

\begin{definition}[\learningsetting] 
\label{def:learning_setup}
In a \learningsetting\ we have datasets $S_1=\{\x_i^{(1)},y_i^{(1)}\}_{i=1}^{N_1}$ and $S_2=\{\x_i^{(2)},y_i^{(2)}\}_{i=1}^{N_2}$ \yc{todo maybe: fanicer notation than $S_i$} of sizes $N_1, N_2$ drawn from $\sP_{\theta_1}$ and $\sP_{\theta_2}$ respectively. A learning algorithm is a (possibly randomized) mapping from the tuple $(S_1, S_2)$ to a linear classifier $\w\in{\reals^d}$. We let  $S=\{\x_i,y_i\}_{i=1}^{N}$ denote that dataset pooled from $S_1$ and $S_2$ where $N=N_1+N_2$. 
Finally we let $r_c\defeq\norm{\vmu_c}$ and $r_s\defeq\norm{\vmu_s}$.

\end{definition}
We study settings where $\theta_1, \theta_2$ are fixed and $d$ is large compared to $N$, i.e.\ the overparameterized regime.
We refer to the two distributions $\sP_{\theta_e}$ for $e\in{\{1,2\}}$ as ``training environments", following \cite{peters2016causal, arjovsky2019invariant}. In the context of Out-of-Distribution (OOD) generalization, environments correspond to different experimental conditions, e.g., collection of medical data in two hospitals. In a fairness context, we may think of these distributions as subpopulations (e.g., demographic groups).\footnote{We note that in some settings, more commonly in the fairness literature, $e$ is treated as a feature given to the classifier as input. Our focus is on cases where this is either impossible or undesired. For instance, because at test time $e$ is unobserved or ill-defined (e.g. we obtain data from a new hospital). However, we emphasize that the \emph{leaning rules} we consider have full knowledge of which environment produced each training example}
While these are  different applications that require specialized methods, the underlying formalism of solutions is often similar \cite[see, e.g.,][Table~1]{pmlr-v139-creager21a}, where we wish to learn a classifier that in one way or another is invariant to the environment variable. 

\paragraph{Robust performance metric.} 
An advantage of the simple model defined above is that many of the common invariance criteria all boil down to the same mathematical constraint: learning a classifier that is orthogonal to $\vmu_s$, which induces a spurious correlation between the environment and the label. These include  Equalized Odds \citep{hardt2016equality}, conditional distribution matching \cite{li2018deep}, calibration on multiple subsets of the data \citep{hebert2018multicalibration, wald2021calibration}, Risk Extrapolation \citep{krueger2020out} and CVaR fairness \citep{pmlr-v97-williamson19a}.
 
 In terms of predictive accuracy, the goal of learning a linear model that aligns with $\vmu_c$ (the invariant part of the data generating process for the label) and is orthogonal to $\vmu_s$
coincides with providing guarantees on the robust error, i.e. the error when data is generated with values of $\theta$ that are different from the $\theta_1, \theta_2$ used to generate training data.\footnote{In fact, as we show in \Cref{eq:error_explicit} in \Cref{sec:negative_result}, learning a model orthogonal to $\vmu_s$ is also a necessary condition to minimize the robust error. Thus, attaining guarantees on the robust error also has consequences on invariance of the model, as defined by these criteria. We discuss this further in \secref{sec:invariance_defs} of the appendix.}
\begin{definition}[Robust error]
The robust error of a linear classifier  $\w\in\R^d$ is:
\begin{align} \label{eq:robust_error}
    \max_{\theta\in{[-1, 1]}}{\epsilon_{\theta}(\w)}, \ \text{where} \ \epsilon_{\theta}(\w):={\E_{\substack{\x,y\sim \sP_{\theta}}}\left[ \mathrm{sign}(\inp{\w}{\x}) \neq y \right]}.
\end{align}
\end{definition}

\paragraph{Normalized margin.}
We study is whether algorithms that perfectly fit (i.e.\ interpolate) their training data can learn models with low robust error.  Ideally, we would like to give a result on all classifiers that attain training error zero in terms of the $0$-$1$ loss. However, the inherent discontinuity of this loss would make any such statement sensitive to instabilities and pathologies. For instance, if we do not limit the capacity of our models, we can turn any classifier into an interpolating one by adding ``special cases" for the training points, yet intuitively this is not the type of interpolation that we would like to study. To avoid such issues, we replace the 0-1 loss with a common continuous surrogate, the normalize margin, and require it to be strictly positive.

\begin{definition}[Normalized margin] \label{def:margin}
Let $\gamma>0$, we say a classifier $\w\in{\reals^d}$ separates the set $S=\{\x_i, y_i\}_{i=1}^{N}$ with \emph{normalized margin} $\gamma$ if for every $(\vx,y)\in S$
\begin{align*}
    \frac{y_i\inp{\w}{\x_i}}{\norm{\w}} > \gamma\sqrt{\sigma^2 d}.
\end{align*}
\end{definition}
The $\sqrt{\sigma^2 d}$ scaling of $\gamma$ is roughly proportional to $\norm{\x}$ under our data model in \eqref{eq:DGP}, and keeps the value of $\gamma$ comparable across growing values of $d$. 

\subsection{Main Result}

Equipped with the necessary definitions, we now state and discuss our main result.

\begin{theorem}\label{thm:main_statement}
For any sample sizes $N_1,N_2>65$, margin lower bound $\gamma \le \frac{1}{4\sqrt{N}}$, target robust error $\epsilon > 0$, and coefficients $\theta_1=1$, $\theta_2 > -\frac{N_1 \gamma}{\sqrt{288N_2}}$, there exist parameters $r_c,r_s>0$, $d>N$, and $\sigma>0$ such that the following holds for the \learningsetting (\Cref{def:learning_setup}) with these parameters.
\begin{enumerate}[leftmargin=*]
    \item \textbf{Invariance is attainable}. \Cref{alg:two_phase_learning} maps $(S_1, S_2)$ to a linear classifier ${\w}$ such that with probability at least $\frac{99}{100}$ (over the draw $S$), the \emph{robust error} of $\w$ is less than $\epsilon$.
    \item \textbf{Interpolation is attainable}. With probability at least $\frac{99}{100}$, the  estimator $\w_{\mathrm{mean}}=N^{-1}\sum_{i\in[N]}{y_i\x_i}$ separates $S$ with normalized margin (\Cref{def:margin}) greater than $\frac{1}{4\sqrt{N}}$.
    \vspace{-5pt}
    \item \textbf{Interpolation precludes invariance.} Given $\vmu_c$ uniformly distributed on the sphere of radius $r_c$ and $\vmu_s$ uniformly distributed on a sphere of radius $r_s$ in the subspace orthogonal to $\vmu_c$, let $\w$ be any classifier learned from $(S_1,S_2)$ as per \Cref{def:learning_setup}. If ${\w}$ separates $S$ with normalized margin $\gamma$, then with probability at least $\frac{99}{100}$ (over the draw of $\vmu_c, \vmu_s$, and the sample), the \emph{robust error} of $\w$ is at least $\frac{1}{2}$.
\end{enumerate}
\end{theorem}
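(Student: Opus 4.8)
\emph{Step 1 (reduction).} For any fixed $\w$, a fresh draw $(\x,y)\sim\sP_\theta$ has $y\inp{\w}{\x}\sim\N\!\big(\inp{\w}{\vmu_c}+\theta\inp{\w}{\vmu_s},\,\sigma^2\norm{\w}^2\big)$, so $\epsilon_\theta(\w)=\Phi\!\big(-(\inp{\w}{\vmu_c}+\theta\inp{\w}{\vmu_s})/(\sigma\norm{\w})\big)$ and the robust error is $\Phi\!\big(-(\inp{\w}{\vmu_c}-\abs{\inp{\w}{\vmu_s}})/(\sigma\norm{\w})\big)$; as $\Phi$ is increasing with $\Phi(0)=\tfrac12$, it is $\ge\tfrac12$ iff $\inp{\w}{\vmu_c}\le\abs{\inp{\w}{\vmu_s}}$. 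Hence it suffices to pick parameters so that, with probability $\ge\tfrac{99}{100}$ over $\vmu_c,\vmu_s,S$, every $\w=\w(S)$ produced by a fixed (sample-dependent) learning rule that separates $S$ with normalized margin $\gamma$ obeys $\inp{\w}{\vmu_c}\le\abs{\inp{\w}{\vmu_s}}$. We will take $\sigma\sqrt d$ much larger than $r_s/\gamma$ --- large enough that no classifier in $\spn\{\vmu_c,\vmu_s\}$ interpolates with margin $\gamma$ (forcing reliance on the noise directions) and that the empirical class means are noise-dominated --- take $r_s$ large relative to $r_c$ and to $\sigma\sqrt N/(N_1\gamma)$, and take $d$ large enough that $r_s^2 N_1\ll\sigma^2 d$ and all concentration bounds below hold. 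Write $y_i\x_i=\vmu_c+\theta_{e(i)}\vmu_s+\zeta_i$ with $\zeta_i\sim\N(0,\sigma^2 I)$ i.i.d.\ and $g_i=\inp{\vmu_c}{\zeta_i}/(\sigma r_c)$, $h_i=\inp{\vmu_s}{\zeta_i}/(\sigma r_s)$ (independent standard Gaussians).

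\emph{Step 2 (structure).} With high probability: (i) the Gram matrix $K$ of $\{y_i\x_i\}$ satisfies $K=\sigma^2 d(I+E)$ with $\norm{E}_{\mathrm{op}}\ll\gamma$ --- the diagonal concentrates at $\sigma^2 d$, the off-diagonal noise block has operator norm $o(\sigma^2 d)$, and the signal pieces $r_c^2\vone\vone^\top,\ r_s^2\vtheta\vtheta^\top$ (with $\vtheta=(\theta_{e(i)})_i$) have operator norms $r_c^2 N$ and $\approx r_s^2 N_1$, all $\ll\sigma^2 d$; (ii) $\norm{\vg},\norm{\vh}=O(\sqrt N)$; (iii) \emph{decoupling:} writing $U:=\spn\{y_i\x_i\}$, conditioned on $S$ one has, for any $S$-measurable $\vv$, $\abs{\inp{\vv}{\vmu_s}}\le\abs{\inp{\mathrm{proj}_U\vv}{\vmu_s}}+O(r_s\norm{\vv}/\sqrt d)$ w.h.p., and likewise with $\vmu_c$. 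Fact (iii) holds because the only $(\vmu_c,\vmu_s)$-dependent information in $S$ is the pair of environment means, and $\hat{\vm}_1-\hat{\vm}_2=(1-\theta_2)\vmu_s+(\text{noise of norm}\approx\sigma\sqrt{d/N})$ with $\sigma\sqrt{d/N}\gg r_s$, so given $S$ the law of $\mathrm{proj}_{U^\perp}\vmu_s$ is essentially isotropic on a $\Theta(d)$-dimensional sphere of radius $O(r_s)$ and therefore has inner product $O(r_s/\sqrt d)$ with any fixed unit vector.

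\emph{Step 3 (estimate).} Decompose $\w=\w_\parallel+\w_\perp$ with $\w_\parallel=\sum_i c_i y_i\x_i\in U$, $\w_\perp\perp U$. Since $\w_\perp\perp y_i\x_i$ for all $i$, the margin condition reads $(K\vc)_i>\gamma\sigma\sqrt d\,\norm{\w}\ge\gamma\sigma\sqrt d\,\norm{\w_\parallel}$ for all $i$; by (i) this forces $c_i>c_0\gamma\norm{\vc}>0$ for all $i$ (a universal $c_0$ near $1$) and $\norm{\w}\le O(\sigma\sqrt d\,\norm{\vc}/(\sqrt N\gamma))$. So $C_1:=\sum_{i\in S_1}c_i\ge c_0\gamma N_1\norm{\vc}$ and $\abs{C_2}\le\sqrt{N_2}\,\norm{\vc}$ where $C_2:=\sum_{i\in S_2}c_i$, hence $C_\theta:=C_1+\theta_2 C_2>0$ precisely because $\theta_2>-\tfrac{N_1\gamma}{\sqrt{288N_2}}$ (the constant absorbing the slack in (i)). Now $\inp{\w_\parallel}{\vmu_c}=r_c^2\sum_i c_i+\sigma r_c\inp{\vc}{\vg}$ and $\inp{\w_\parallel}{\vmu_s}=r_s^2 C_\theta+\sigma r_s\inp{\vc}{\vh}$; by (ii), $\inp{\w_\parallel}{\vmu_c}\le O((r_c^2+\sigma r_c)\sqrt N\norm{\vc})$ and $\inp{\w_\parallel}{\vmu_s}\ge r_s^2 C_\theta-O(\sigma r_s\sqrt N)\norm{\vc}\ge\tfrac12 c_0 r_s^2\gamma N_1\norm{\vc}$ once $r_s\gtrsim\sigma\sqrt N/(N_1\gamma)$. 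Finally (iii) bounds $\abs{\inp{\w_\perp}{\vmu_c}},\abs{\inp{\w_\perp}{\vmu_s}}$ by $O(r_c\norm{\w}/\sqrt d)$ and $O(r_s\norm{\w}/\sqrt d)=O(r_s\sigma\norm{\vc}/(\sqrt N\gamma))$; choosing $r_s$ (then $d$) large enough --- polynomially in $N,1/\gamma,r_c/\sigma$ --- makes these negligible, so that $\inp{\w}{\vmu_c}\le\abs{\inp{\w}{\vmu_s}}$, completing the proof.

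The step I expect to be the crux is (iii). It is precisely where the argument must use that $\w$ is a \emph{fixed} function of $S$ rather than an arbitrary separator: the naive ``for every $\w$ separating $S$'' version of the claim is false --- e.g.\ $\w=\alpha\hat\vmu_c+(\text{a vector memorizing the noise})$, which is not $S$-measurable, separates $S$ with positive normalized margin yet has robust error strictly below $\tfrac12$. Turning the informal statement ``a learner seeing only $S$ cannot steer $\w$ toward, or cleanly away from, $\vmu_s$ beyond what the noise-swamped class means allow'' into the quantitative bound (iii), via the posterior of $\vmu_s$ given $S$, is where the bulk of the technical work lies.
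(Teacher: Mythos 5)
Your proposal addresses only the third of the three enumerated claims. \Cref{thm:main_statement} is a conjunction: for a \emph{single} choice of $(r_c,r_s,d,\sigma)$ one must show (1) that \Cref{alg:two_phase_learning} achieves robust error below $\epsilon$, (2) that $\w_{\mathrm{mean}}$ separates $S$ with normalized margin $\frac{1}{4\sqrt N}$, and (3) the impossibility result. Your plan proves only (3); it never analyzes the two-stage algorithm (which in the paper is a separate, substantial argument involving the empirical EOpp constraint, the sign of $v_1+v_2$, and Hanson--Wright-type concentration), never lower-bounds the margin of the signed-sample-mean estimator, and never verifies that one parameter setting satisfies the competing requirements of all three claims simultaneously (the paper does this by taking $\sigma^2=1/d$, $r_s^2\propto 1/N$, $r_c^2\propto r_s^2/(1+\sqrt{N_2}/(N_1\gamma))$, and then $d$ large). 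That joint feasibility check is not vacuous, since claim (3) wants the core signal weak relative to the spurious one while claim (1) needs the core signal strong enough for the first-stage estimators to capture it.

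For claim (3) itself, your argument is essentially the paper's: reduce via the Gaussian tail formula to showing $\inp{\w}{\vmu_c}\le\abs{\inp{\w}{\vmu_s}}$; split $\w$ into its component in the data span and its orthogonal complement; kill the orthogonal part by the rotational-symmetry/posterior argument (your step (iii) is exactly the paper's Lemma on the conditional uniformity of $\mP_\perp\vmu_s/\norm{\mP_\perp\vmu_s}$ on a $(d-N)$-sphere, and you correctly identify this as the place where $S$-measurability of $\w$ is indispensable); then lower-bound the environment-1 coefficient mass to force a large spurious component. The one genuine difference is in that last step: the paper bounds $(E_1+\theta_2E_2)^\top\mBeta$ by Lagrange duality on a convex relaxation of the margin constraint, whereas you argue per-coordinate positivity $c_i\gtrsim\gamma\norm{\vc}$ from $K=\sigma^2d(I+E)$ with $\norm{E}_{\mathrm{op}}\ll\gamma$. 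Be careful there: $E$ contains the rank-two signal blocks of operator norms $r_c^2N/(\sigma^2d)$ and $\approx r_s^2N_1/(\sigma^2d)$, and in the paper's parameter regime ($r_s^2\propto1/N$, $\sigma^2d=1$) these are constants, not $o(\gamma)$ for small $\gamma$; so either you must shrink $r_s$ further (and re-check compatibility with your requirement $r_s\gtrsim\sigma\sqrt N/(N_1\gamma)$ and with claims (1)--(2)), or you must treat the signal blocks exactly, as the duality argument does, and apply the near-isometry only to the noise part of the Gram matrix.
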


\Cref{thm:main_statement} shows that if a learning algorithm for overparameterized linear classifiers always separates its training data, then there exist natural settings for which the algorithm completely fails to learn a robust classifier, and will therefore fail on multiple other invariance and fairness objectives.
Furthermore, in the same setting this failure is avoidable, as there exists an algorithm (that \emph{necessarily} does not always separate its training data) which successfully learns an invariant classifier. This result has deep implications for theoreticians attempting to prove finite-sample invariant learning guarantees: it shows that---in the fundamental setting of linear classification---no interpolating algorithm can have guarantees as strong as non-interpolating algorithms such as \Cref{alg:two_phase_learning}.

Importantly, \Cref{thm:main_statement} requires interpolating invariant classifiers to \emph{exist}---and shows that these classifiers are information-theoretically \emph{impossible to learn}. In particular, the first part of the theorem implies that the Bayes optimal invariant classifier $\vw=\vmu_c$ has robust test error at most $\epsilon$. Therefore, for all $\epsilon < \frac{1}{100N}$ we have that $\vmu_c$ interpolates $S$ with probability $> \frac{99}{100}$. Furthermore, a short calculation (see \Cref{ssec:invaraint-margin-calculation}) shows that (for $r_c$, $r_s$, $d$ and $\sigma$ satisfying \Cref{thm:main_statement}) the normalized margin of $\vmu_c$ \yc{probably not quite right} is
 $\Omega(({N+\sqrt{N_2}/\gamma})^{-\frac{1}{2}})$. 
However, we prove that---due to the high-dimensional nature of the problem---no algorithm can use $(S_1,S_2)$ to reliably distinguish the invariant interpolator from other interpolators with similar or larger margin. This learnability barrier strongly leverages our random choice of $\vmu_c$, $\vmu_s$, without which the (fixed) vector $\vmu_c$ would be a valid learning output.

We establish \Cref{thm:main_statement} with three propositions, each corresponding to an enumerated claim in the theorem: (1) \Cref{prop:positive_result_main} (in \S\ref{sec:positive_result}) establishes that invariance is attainable, (2) \Cref{prop:achievable_margin} (\Cref{sec:achievable_margin}) establishes that interpolation is attainable, and (3) \Cref{prop:neg_result_details} (in \S\ref{sec:negative_result}) establishes that interpolation precludes invariance. We choose to begin with the latter proposition since it is the main conceptual and technical contribution of our paper. Conversely, \Cref{prop:achievable_margin} is an easy byproduct of the developments leading up to \Cref{prop:neg_result_details}, and we defer it to the appendix. 

With \Cref{prop:neg_result_details,prop:positive_result_main,prop:achievable_margin} in hand, the proof of \Cref{thm:main_statement}  simply consists of choosing the free parameters in \Cref{thm:main_statement} ($r_c,r_s,d$ and $\sigma$) based on these propositions such that all the claims in the theorem hold simultaneously. For convenience we take $\sigma^2 = 1/d$. Then (ignoring constant factors) we pick $r_s^2 \propto \frac{1}{N}$ and $r_c^2 \propto r_s^2 / (1+ \frac{\sqrt{N_2}}{N_1 \gamma})$ in order to satisfy requirements in \Cref{prop:neg_result_details,prop:achievable_margin}. Finally, we take $d$ to be sufficiently large so as to satisfy the remaining requirements, resulting in $d\propto \max\left\{ N^2, \frac{N}{\gamma^2 N_1^2 r_c^2}, \frac{(Q^{-1}(\epsilon))^2}{N_{\min} r_c^4}, \frac{1}{N_{\min}^2 r_c^4}\right\}$, where $N_{\min} = \min\{N_1,N_2\}$ and $Q$ is the Gaussian tail function (see \Cref{sec:app_main_statement_proof} for the full proof).

We conclude this section with remarks on the range of parameters under which \Cref{thm:main_statement} holds. The impossibility results in \Cref{thm:main_statement} are strongest when $N_2$ is smaller than $N_1^2 \gamma^2$. In particular, when $N_2 \le N_1^2 \gamma^2 / 288$, our result holds for all $\theta_2\in[-1,1]$ and moreover the core and spurious signal strengths $r_c$ and $r_s$ can be chosen to be of the same order.
The ratio $N_2/(N_1^2\gamma^2)$ is small \emph{either} when one group is under-represented (i.e., $N_2 \ll N_1$) \emph{or} when considering large margin classifiers (i.e., $\gamma$ of the order $1/\sqrt{N}$). Moreover, unlike prior work on barriers to robustness \citep[e.g.,][]{pmlr-v119-sagawa20a,nagarajan2020understanding}, our result continue to hold even for balanced data and arbitrarily low margin, provided $\theta_2$ is close to $0$ and the core signal is sufficiently weaker than the spurious signal. Notably, the normalized margin $\gamma$ can be arbitrarily small while the maximum achievable margin is always at least of the order of $\frac{1}{\sqrt{N}}$. Therefore, we believe that \Cref{thm:main_statement} essentially precludes any interpolating learning rule from being consistently invariant.


\yc{todo: add here or in the next section a discussion of why we have to make $\vmu_c,\vmu_s$ random. this is a question I often get for proofs of this sort so it would be good to explain}

\section{Interpolating Models Cannot Be Invariant} \label{sec:negative_result}


In this section we prove the third claim in \Cref{thm:main_statement}:
for essentially any nonzero value of the normalized margin $\gamma$, there are instances of the \learningsetting\ (\Cref{def:learning_setup}) where with high probability, learning algorithms that return linear classifiers attaining normalized margin at least $\gamma$ must incur a large robust error. 
The following proposition formalizes the claim; we sketch the proof below and provide a full derivation in \Cref{sec:prop1_full_proof}.

\begin{proposition} \label{prop:neg_result_details}

For $\sigma=1/\sqrt{d}$, $\theta_1=1$, there are universal constants $c_r\in(0,1)$ and $C_d, C_r\in(1,\infty)$, such that, for any target normalized $\gamma$, $\theta_2 > - N_1\gamma / \sqrt{288 N_2}$, and failure probability $\delta\in(0,1)$, if 
\begin{align} 
    &\max\{r_s^2, r_c^2\} \le \frac{c_r}{N}
    ~~,~~
    \frac{r_s^2}{r_c^2} \ge C_r \left( 1 + \frac{\sqrt{N_2}}{N_1\gamma}\right)
    ~~\mbox{ and}~~\label{eq:negative_res_norm_constraint}\\
    &d \geq C_d \frac{N}{\gamma^2 N_1^2 r_c^2}\log \frac{1}{\delta},
    \label{eq:negative_res_dim_constraint}
\end{align}
then with probability at least $1-\delta$ over the drawing of $\vmu_c, \vmu_s$ and $(S_1, S_2)$ as described in \thmref{thm:main_statement}, any $\hat{\w}\in{\reals^d}$ that is a measurable function of $(S_1, S_2)$ and separates the data \yc{we should be clearer about what ``any model'' and ``the data'' means. Theorem 1 does a reasonable job I think but we should either repeat this here or refer to the appropriate formal definition} with normalized margin larger than $\gamma$ has robust error at least $0.5$.
\end{proposition}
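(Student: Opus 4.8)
The plan is to reduce the conclusion ``robust error $\ge \tfrac12$'' to a one‑line inequality between the two \emph{signal coordinates} $\inp{\hat\w}{\vmu_c}$ and $\inp{\hat\w}{\vmu_s}$ of the classifier, then (i) show that conditionally on the sample these coordinates are essentially $r_c^2\inp{\hat\w}{\vg_c}$ and $r_s^2\inp{\hat\w}{\vg_s}$ for the two ``empirical class means'' $\vg_c:=\sum_i y_i\x_i$ and $\vg_s:=\sum_i\theta_{e(i)}y_i\x_i$, and (ii) use the margin hypothesis to show these means point much more along $\vmu_s$ than along $\vmu_c$, so the inequality holds w.h.p.

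\textbf{Step 1 (reduction).} Under $\sP_\theta$, $y\inp{\hat\w}{\x}=\inp{\hat\w}{\vmu_c}+\theta\inp{\hat\w}{\vmu_s}+y\inp{\hat\w}{\z}$ with $y\inp{\hat\w}{\z}\sim\N(0,\sigma^2\norm{\hat\w}^2)$, so $\epsilon_\theta(\hat\w)=Q\!\big((\inp{\hat\w}{\vmu_c}+\theta\inp{\hat\w}{\vmu_s})/(\sigma\norm{\hat\w})\big)$ with $Q$ the standard Gaussian tail; minimizing over $\theta\in[-1,1]$, the robust error equals $Q\!\big((\inp{\hat\w}{\vmu_c}-\abs{\inp{\hat\w}{\vmu_s}})/(\sigma\norm{\hat\w})\big)$, hence is $\ge\tfrac12$ iff $\inp{\hat\w}{\vmu_c}\le\abs{\inp{\hat\w}{\vmu_s}}$. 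So it suffices to show: with probability $\ge1-\delta$, every measurable unit‑norm $\hat\w=\hat\w(S_1,S_2)$ that separates $S$ with normalized margin $>\gamma$ satisfies $\inp{\hat\w}{\vmu_c}\le\abs{\inp{\hat\w}{\vmu_s}}$.

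\textbf{Step 2 (what the sample reveals).} Condition on $S$, and let $\bar\x^{(e)}:=N_e^{-1}\sum_{i\in S_e}y_i\x_i=(\vmu_c+\theta_e\vmu_s)+\vn_e$ with $\vn_e\sim\N(0,(\sigma^2/N_e)I)$, $\mathcal V:=\lspan\{\bar\x^{(1)},\bar\x^{(2)}\}$ (dimension $\le2$), so that $\vg_c=N_1\bar\x^{(1)}+N_2\bar\x^{(2)}$ and $\vg_s=N_1\bar\x^{(1)}+\theta_2N_2\bar\x^{(2)}$ both lie in $\mathcal V$. I claim that w.p.\ $\ge1-\delta/3$ over the sample,
\begin{equation*}
\inp{\hat\w}{\vmu_c}=r_c^2\inp{\hat\w}{\vg_c}+\mathrm{err}_c,\qquad \inp{\hat\w}{\vmu_s}=r_s^2\inp{\hat\w}{\vg_s}+\mathrm{err}_s ,
\end{equation*}
with $\abs{\mathrm{err}_c}=O(r_c\sqrt{\log(1/\delta)/d})$ and $\abs{\mathrm{err}_s}=O(r_s\sqrt{\log(1/\delta)/d})$. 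Here $\inp{\hat\w}{\vmu_c}=\inp{\hat\w}{P_{\mathcal V}\vmu_c}+\inp{\hat\w}{P_{\mathcal V^\perp}\vmu_c}$; the first term equals $r_c^2\inp{\hat\w}{\vg_c}$ up to $O(r_c/\sqrt d)$ by a direct calculation of the projection $P_{\mathcal V}\vmu_c$ onto $\lspan\{\vg_c,\vg_s\}$, using $\vmu_c\perp\vmu_s$, the near‑orthogonality of $\vmu_c,\vmu_s$ to the noise vectors, and $\sigma^2d=1$; the second term is the only place where the randomness of $(\vmu_c,\vmu_s)$ and the measurability of $\hat\w$ are used: conditionally on $S$ the prior's rotational symmetry forces $P_{\mathcal V^\perp}\vmu_c$ to be uniform on a sphere of radius $\le r_c$ inside the $(d-O(1))$‑dimensional space $\mathcal V^\perp$, so its inner product with the fixed $\hat\w$ is $O(r_c\sqrt{\log(1/\delta)/d})$ w.h.p.\ (similarly for $\vmu_s$). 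The dimension lower bound $d\ge C_dN\log(1/\delta)/(\gamma^2N_1^2r_c^2)$ will make $\mathrm{err}_c,\mathrm{err}_s$ dominated by $r_c^2\inp{\hat\w}{\vg_c}$, $r_s^2\inp{\hat\w}{\vg_s}$ whenever these are non‑negligible; the degenerate case where both are tiny is handled directly (then both signal coordinates are at the $O(r_s\sqrt{\log/d})$ scale and $\inp{\hat\w}{\vmu_c}\le\abs{\inp{\hat\w}{\vmu_s}}$ holds by the same sign argument that makes $\hat\w\perp\lspan\{\vmu_c,\vmu_s\}$ have robust error exactly $\tfrac12$). One also needs standard concentration of $\norm{\bar\x^{(e)}}^2$ around $1/N_e$ and of the noise Gram matrix around $I$ (failure $\le\delta/3$).

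\textbf{Step 3 (the margin forces the ratio).} By Steps 1--2 it is enough to show $r_c^2\inp{\hat\w}{\vg_c}\le r_s^2\abs{\inp{\hat\w}{\vg_s}}$, i.e.\ $\inp{\hat\w}{\vg_c}/\abs{\inp{\hat\w}{\vg_s}}\le r_s^2/r_c^2$. Summing the per‑point margin inequalities over each environment (using $\norm{\hat\w}=1$, $\sqrt{\sigma^2d}=1$) gives $a:=\inp{\hat\w}{\bar\x^{(1)}}>\gamma$ and $b:=\inp{\hat\w}{\bar\x^{(2)}}>\gamma$, while Cauchy--Schwarz with the norm bounds of Step 2 gives $a\le\sqrt3/\sqrt{N_1}$, $b\le\sqrt3/\sqrt{N_2}$. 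Then $\inp{\hat\w}{\vg_c}=N_1a+N_2b$ and $\inp{\hat\w}{\vg_s}=N_1a+\theta_2N_2b$, and since $\theta_2>-N_1\gamma/\sqrt{288N_2}$ and $b\le\sqrt3/\sqrt{N_2}$,
\begin{equation*}
\inp{\hat\w}{\vg_s}\ \ge\ N_1a-\abs{\theta_2}N_2b\ >\ N_1a-\tfrac{\sqrt3}{\sqrt{288}}N_1\gamma\ >\ N_1a-\tfrac12 N_1a\ =\ \tfrac12 N_1a\ >\ 0
\end{equation*}
(the middle step uses $a>\gamma$ and $\sqrt3/\sqrt{288}<\tfrac12$). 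Hence
\begin{equation*}
\frac{\inp{\hat\w}{\vg_c}}{\abs{\inp{\hat\w}{\vg_s}}}\ <\ \frac{N_1a+N_2b}{\tfrac12 N_1a}\ =\ 2+\frac{2N_2b}{N_1a}\ \le\ 2+\frac{2\sqrt3\,\sqrt{N_2}}{N_1\gamma}\ \le\ 2\sqrt3\Big(1+\frac{\sqrt{N_2}}{N_1\gamma}\Big).
\end{equation*}
On the other hand the hypothesis gives $r_s^2/r_c^2\ge C_r(1+\sqrt{N_2}/(N_1\gamma))$, so taking the universal constant $C_r$ to exceed $2\sqrt3$ (enlarged further to swallow the multiplicative slack from the $\mathrm{err}$ terms of Step 2) yields $\inp{\hat\w}{\vg_c}/\abs{\inp{\hat\w}{\vg_s}}<r_s^2/r_c^2$, the desired contradiction with ``robust error $<\tfrac12$''. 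Taking a union bound over the three $\delta/3$ events proves the proposition.

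\textbf{Main obstacle.} The delicate point---and the reason the argument cannot be purely geometric---is that the Bayes‑optimal invariant classifier $\vmu_c$ itself lies in $\lspan\{\x_i\}$ and separates $S$ with normalized margin $\Omega((N+\sqrt{N_2}/\gamma)^{-1/2})>\gamma$, so ``no good interpolator exists'' is simply false; what must be exploited is that $\hat\w$ depends on $S$ only, via the symmetry argument controlling $P_{\mathcal V^\perp}\vmu_c$ in Step 2. The heart of the work is thus making Step 2 rigorous and uniform over all measurable $\hat\w$: computing $P_{\mathcal V}\vmu_c$ and $P_{\mathcal V}\vmu_s$ with tight enough error terms, handling the conditional (nearly isotropic, but not exactly) law of the orthogonal residuals, and carrying the constants so that the two norm conditions and the $\theta_2$‑threshold line up with the bound of Step 3. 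A secondary nuisance is that $\lspan\{\x_i\}$ is genuinely $N$‑dimensional and the noise Gram matrix is only $I+O(\sqrt{N/d})$, which must be absorbed into the concentration estimates and the Cauchy--Schwarz bounds on $a,b$.
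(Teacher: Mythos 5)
Your reduction in Step~1, the overall plan, and the Step~3 observation---that averaging the per-point margin inequalities over each environment gives $\inp{\hat\w}{\bar\x^{(e)}}>\gamma$ while Cauchy--Schwarz gives $\inp{\hat\w}{\bar\x^{(e)}}\lesssim 1/\sqrt{N_e}$---are all sound, and the latter is an appealingly elementary way to control the empirical sums $N_1a+N_2b$ and $N_1a+\theta_2 N_2 b$. The proof breaks in Step~2, in two places. The smaller problem is that the symmetry argument is applied to the wrong subspace: conditional on the \emph{full} sample, the posterior of $(\vmu_c,\vmu_s)$ is invariant only under rotations fixing every data point, so only the component of $\vmu_c$ orthogonal to the $N$-dimensional span $\lspan\{y_i\x_i\}$ is spherically distributed. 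Your $\mathcal V^\perp$ is merely the complement of the two class means; the $(N-2)$-dimensional piece of the data span inside $\mathcal V^\perp$ carries a component of $\vmu_c$ of norm $\asymp r_c\sqrt{N/d}$ that is determined by the data, and a measurable $\hat\w$ can align with it. This costs a factor $\sqrt N$ in your error terms (it happens to be absorbable by the hypothesis on $d$, but the stated justification is invalid).

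The serious gap is that the identities $\inp{\hat\w}{\vmu_c}\approx r_c^2\inp{\hat\w}{\vg_c}$ and $\inp{\hat\w}{\vmu_s}\approx r_s^2\inp{\hat\w}{\vg_s}$ are false at the accuracy you need. Writing $\hat\w=\rmZ^\top\mBeta+\w_\perp$, one has exactly $\inp{\hat\w}{\vmu_s}=r_s^2(E_1+\theta_2E_2)^\top\mBeta+(\text{noise})$, whereas $r_s^2\inp{\hat\w}{\vg_s}=r_s^2(E_1+\theta_2E_2)^\top\rmZ\rmZ^\top\mBeta$; since $\E[\rmZ\rmZ^\top]=I+r_c^2\ones\ones^\top+r_s^2(E_1+\theta_2E_2)(E_1+\theta_2E_2)^\top$, the difference contains the cross term $r_s^2\,r_c^2(N_1+\theta_2N_2)\,\ones^\top\mBeta$. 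For a unit-norm separator $|\ones^\top\mBeta|$ can be of order $\sqrt{N}$ (the signed-mean classifier achieves this while separating with margin $\ge\gamma$), so this term can exceed the target lower bound $r_s^2 N_1\gamma$ by a factor of order $(c_r/C_r)\sqrt{N/N_2}$, which is unbounded when $N_2\ll N_1$; it also exceeds your claimed $O(r_s\sqrt{\log(1/\delta)/d})$ error budget once $d$ is large. Consequently, Step~3 lower-bounds $(E_1+\theta_2E_2)^\top\rmZ\rmZ^\top\mBeta$, but what is actually needed is a lower bound on $(E_1+\theta_2E_2)^\top\mBeta$ itself, and your argument cannot rule out a valid separator with $(E_1+\theta_2E_2)^\top\mBeta\le 0$, i.e.\ $\inp{\hat\w}{\vmu_s}\approx 0$. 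Closing exactly this gap is the paper's main technical step: it minimizes $(E_1+\theta_2E_2)^\top\mBeta$ subject to $\rmZ\rmZ^\top\mBeta\ge\gamma\ones$ and $\mBeta^\top\rmZ\rmZ^\top\mBeta\le1$ via Lagrange duality, choosing the dual certificate $\lambda=\alpha E_1$ so that the $E_1$-component of the residual cancels exactly and the $r_c^2$-dependent remainder shrinks to $N_1 r_c^2\sqrt{N_2}$ (rather than $r_c^2 N^{3/2}$), which the hypothesis $r_s^2/r_c^2\ge C_r(1+\sqrt{N_2}/(N_1\gamma))$ can absorb with universal constants. Some argument of this type (or an equivalent analysis of $(\rmZ\rmZ^\top)^{-1}$ exploiting the same cancellation) must replace your Step~2.
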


\begin{proof}[Proof sketch]
\yc{I rewrote this part - please take a look}
We begin by noting that for any fixed $\theta$, the error of a linear classifier $\w$ is 
\begin{align} \label{eq:error_explicit}
    \epsilon_{\theta}(\w) = 
    Q\left(\frac{\inp{\w}{\vmu_c} + \theta\inp{\w}{\vmu_s}}{\sigma \norm{\w}}\right) = Q\left(\frac{\inp{\w}{\vmu_c}}{\sigma \norm{\w}}\left( 1 + \theta\frac{\inp{\w}{\vmu_s}}{\inp{\w}{\vmu_c}} \right)\right),
\end{align}
where $Q(t) \coloneqq \P(\mathcal{N}(0;1) > t)$ is the Gaussian tail function. 
Consequently, when $\inp{\w}{\vmu_s} / \inp{\w}{\vmu_c} \ge 1$ it is easy to see that $\epsilon_{\theta}(\w)=1/2$ for some $\theta\in[-1,1]$ and therefore the robust error is at least $\half$; we prove that $\inp{\w}{\vmu_s} / \inp{\w}{\vmu_c} \ge 1$ indeed holds with high probability under the proposition's assumptions. Our proof has two key parts: (a) restricting the set of classifiers to the linear span of the data and (b) lower bounding the minimum value of $\inp{\w}{\vmu_s} / \inp{\w}{\vmu_c}$ for classifier in that linear span. 

For the first part of the proof we use the spherical distribution of $\vmu_c$ and $\vmu_s$ and concentration of measure to show that (with high probability) any component of $\w$ chosen outside the linear span of $\{\x_i\}_{i\in[N]}$ will have negligible effect on the predictions of the classifier. To explain this fact, let $\mP_\perp$ denote the projection operator to the orthogonal complement of the data, so that $\mP_\perp \w$ is the component of $\w$ orthogonal to the data and
$\inp{\mP_\perp \w}{\vmu_c} = \inp*{\w}{\frac{\mP_\perp\vmu_c}{\norm{\mP_\perp\vmu_c}}} \norm{\mP_\perp\vmu_c}$.
Conditional on $(S_1,S_2)$ and the learning rule's random seed, the vector $\mP_\perp\vmu_c / \norm{\mP_\perp\vmu_c}$ is uniformly distributed on a unit sphere of dimension $d-N$ while the vector $\w$ is deterministic. Assuming without loss of generality that $\norm{\w}=1$, concentration of measure on the sphere implies that $\abs{\inp{\w}{\frac{\mP_\perp\vmu_c}{\norm{\mP_\perp\vmu_c}}}}$ is (with high probability) bounded by roughly $1/\sqrt{d}$, and therefore $\abs{\inp{\mP_\perp \w}{\vmu_c}}$ is roughly of the order $r_c / \sqrt{d}$. For sufficiently large $d$ (as required by the proposition), this inner product would be negligible, meaning that $\inp{\w}{\vmu_c}$ is roughly the same as $\inp{(I-\mP_\perp)\w}{\vmu_c}$, and $(I-\mP_\perp)\w$ is in the span of the data. The same argument applies to $\vmu_s$ as well.

In the second part of the proof, we consider classifiers of the form $\w = \sum_{i\in[N]} \beta_i y_i \x_i$ (which parameterizes the linear span of the data) and minimize $\inp{\w}{\vmu_s} / \inp{\w}{\vmu_c}$ over $\beta \in \R^N$ subject to the constraint that $\w$ has normalize margin of at least $\gamma$. To do so, we first use concentration of measure to argue that it is sufficient to lower bound $\sum_{i\in [N_1]}\beta_i$ subject to the margin constraint and $\norm{\w}^2 \le 1$, which is convex in $\beta$---we obtain this lower bound by analyzing the Lagrange dual of the problem of minimizing $\sum_{i\in [N_1]}\beta_i$ subject to these constraints.

Overall, we show a high-probability lower bound on $\frac{\inp{\w}{\vmu_s}}{\inp{\w}{\vmu_c}}$ that (for sufficiently high dimensions) scales roughly as $\frac{r_s^2 N_1 \gamma}{r_c^2 \sqrt{N_2}}$. For parameters satisfying \eqref{eq:negative_res_norm_constraint} we thus obtain $\frac{\inp{\w}{\vmu_s}}{\inp{\w}{\vmu_c}} \ge 1$, completing the proof.
\end{proof}

\paragraph{Implication for invariance-inducing algorithms.} Our proof implies that any interpolating algorithm should fail at learning invariant classifiers.
This alone does not necessarily imply that specific algorithms proposed in the literature for learning invariant classifiers fail, as they may not be interpolating.
Yet our simulations in \Cref{sec:validation} show that several popular algorithms proposed for eliminating spurious features are indeed interpolating in the overparameterized regime.
We also give a formal statement in \Cref{sec:irm_max_margin} regarding the IRMv1 penalty \citep{arjovsky2019invariant}, showing that it is biased toward large margins when applied to separable datasets.
Our results may seem discouraging for the development of invariance-inducing techniques using overparameterized models. It is natural to ask what type of methods \emph{can} provably learn such models, which is the topic of the next section.

\section{A Provably Invariant Overparameterized Estimator} \label{sec:positive_result}


We now turn to propose and analyze an algorithm (\Cref{alg:two_phase_learning}) that provably learns an overparametrized linear model with good robust accuracy in our setup. Our approach is 
a two-staged learning procedure that is conceptually similar to some recently proposed methods \citep{rosenfeld2022erm, veldanda2022fairness, kirichenko2022last,menon2021overparameterisation, pmlr-v180-kumar22a, pmlr-v162-zhang22u}.
In Section \ref{sec:validation} we validate our algorithm on simulations and on the Waterbirds dataset \cite{sagawa2019distributionally}, but we leave a thorough empirical evaluation of the techniques described here to future work. 

Let us describe the operation of \Cref{alg:two_phase_learning}. First, we evenly\footnote{The even split is used here for simplicity of exposition, and our full proof does not assume it. In practice, allocating more data to the first-stage split would likely perform better.} split the data from each environment into the sets $S^{\mathrm{train}}_e, S^{\mathrm{post}}_e$, for $e\in\{1,2\}$. The two stages of the algorithm operate on different splits of the data as follows.
\begin{enumerate}[leftmargin=*]
    \item \textbf{``Training" stage}: We use $\{S^{\mathrm{train}}_e\}$ to fit  overparameterized, interpolating classifiers $\{\w_e\}$ \emph{separately} for each environment $e\in{\{1,2\}}$.
    \item \textbf{``Post-processing" stage}: We use the second portion of the data $\left(S^{\mathrm{post}}_1,S^{\mathrm{post}}_2\right)$ to learn an invariant linear classifier over a new representation, which concatenates the outputs of the classifiers in the first stage. In particular, we learn this classifier by maximizing a score (i.e., minimizing an empirical loss), subject to an empirical version of an invariance constraint. For generality we denote this constraint by membership in some set of functions $\F(S^{\mathrm{post}}_1, S^{\mathrm{post}}_2)$.
\end{enumerate}

Crucially, the invariance penalty is only used in the second stage, in which we are no longer in the overparamterized regime since we are only fitting a two-dimensional classifier. In this way we overcome the negative result from Section \ref{sec:negative_result}. 

While our approach is general and can handle a variety of invariance notions (we discuss some of them in \Cref{sec:invariance_defs}), we analyze the algorithm under the Equal Opportunity (EOpp) criterion \citep{hardt2016equality}. Namely, for a model $f:\reals^d\rightarrow \reals$ we write:
\begin{align} \label{eq:equalized_odds}
    \F(S_1^{\mathrm{fine}}, S_2^{\mathrm{fine}}) = \big\{f : \hat{T}_1(f) &= \hat{T}_2(f) \big\}, \quad
    \text{where}\ \hat{T}_e(f) := \frac{4}{N_e}\sum_{(\x, y)\in{S_e^{\mathrm{fine}}}: y=1}{f(\x)}. \nonumber
\end{align}
This is the empirical version of the constraint $\E_{\sP_{\theta_1}} \left[ f(x) | y=1 \right]= \E_{\sP_{\theta_2}} \left[ f(x) | y=1 \right]$. From a fairness perspective (e.g., thinking of a loan application), this constraint ensures that the ``qualified'' members (i.e., those with $y=1$) of each group receive similar predictions, on average over the entire group.  


\begin{algorithm}[t]
\onehalfspacing
\caption{Two Phase Learning of Overparameterized Invariant Classifiers}
\label{alg:two_phase_learning}
\begin{algorithmic}
\Require Datasets $(S_1, S_2)$ 
and an invariance constraint function family $\F(\cdot, \cdot)$
\Ensure A classifier $f_{\rvv}(\x)$
\State Draw subsets of data without replacement $S^{\mathrm{train}}_e\subset S_e$ for $e\in{\{1,2\}}$ where $\left | S^{\mathrm{train}}_e \right| = N_e / 2$
\State Stage 1: 
Calculate 
$\w_e= 2N_e^{-1}\sum_{(\x, y)\in{S^{\mathrm{train}}_e}}{y\x}$ for each $e\in{\{1,2\}}$
\State Define $S_e^{\mathrm{fine}} = S_e \setminus S_e^{\mathrm{trn}}$ for $e\in{\{1, 2\}}$ and $S^{\mathrm{post}} = S_1^{\mathrm{fine}} \cup S_2^{\mathrm{fine}}$
\State Stage 2: Return
$f_{\rvv}(\x) = \inp{v_1\cdot\w_1 + v_2\cdot \w_2}{\x}$ 
that solves
\begin{equation*}
  \maximize \sum_{(\x,y)\in{S^{\mathrm{post}}}}{y f_{\rvv}(\x) } \quad\text{subject to}\quad {\norm{\rvv}_{\infty}=1} \quad\text{and}\quad  f_\rvv\in{\F(S_1^{\mathrm{fine}}, S_2^{\mathrm{fine}})}
\end{equation*}
\end{algorithmic}
\end{algorithm}




We now turn to providing conditions under which Algorithm~\ref{alg:two_phase_learning} successfully learns an invariant predictor.
The full proof for the following proposition can be found in \secref{sec:positive_proof_appendix} of the appendix.
While we do not consider the following proposition very surprising, the fact that it gives a finite sample learning guarantee means it does not directly follow from existing work (discussed in \S\ref{sec:discussion} below) that mostly assume inifinite sample size.

\begin{proposition}\label{prop:positive_result_main}
Consider the \learningsetting (\Cref{def:learning_setup}), and further suppose that $|\theta_1 - \theta_2| > 0.1$.\footnote{Intuitively, if $|\theta_1 - \theta_2|=0$ then the two training environments are indistinguishable and we cannot hope to identify that the correlation induced by $\vmu_s$ is spurious. Otherwise, we expect $|\theta_1 - \theta_2|$ to have a quantifiable effect on our ability to generalize robustly. For simplicity of this exposition we assume that the gap is bounded away from zero; the full result in the Appendix is stated in terms of $|\theta_1 - \theta_2|$.}
There exist universal constants $C_p,C_c, C_s \in (1, \infty)$ such that the following holds for every target robust error $\epsilon>0$ and failure probability $\delta\in(0,1)$. If $N_{\min} \defeq \min\{N_1,N_2\} \geq C_p\log(4 / \delta)$ for some $C_p\in(1,\infty),$\footnote{This assumption makes sure we have some positive labels in each environment.}  
\begin{align}
    &\normspu^2 \geq \ C_s \sqrt{\log\frac{68}{\delta}}\frac{\sigma^2\sqrt{d}}{N_{\min}}, ~ \normcore^2 \geq \ C_c\sigma^2\sqrt{\log\frac{68}{\delta}}\max\left\{Q^{-1}(\epsilon)\sqrt{\frac{d}{N_{\min}}}, \frac{\sqrt{d}}{N_{\min}}, \frac{\normspu^2}{N_{\min}\normcore^2} \right\}, \label{eq:mus_constraint_positive} \\ 
    &\mbox{and} ~~ d \geq \log \frac{68}{\delta} \label{eq:muc_constraint_positive}
\end{align}
then, with probability at least $1-\delta$ over the draw of the training data and the split of the data between the two stages of learning, the robust error of the model returned by Algorithm~\ref{alg:two_phase_learning} does not exceed $\epsilon$.
\end{proposition}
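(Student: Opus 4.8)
The plan is to reduce the second stage of \Cref{alg:two_phase_learning} to a two-dimensional problem solvable by elementary concentration. Write the stage-one estimators as $\w_e=\vmu_c+\theta_e\vmu_s+\vn_e$, where $\vn_e=\frac{2}{N_e}\sum_{(\x,y)\in S^{\mathrm{train}}_e}y(\x-y\vmu_c-y\theta_e\vmu_s)$ is $\sigma$ times a mean of $N_e/2$ i.i.d.\ standard Gaussians, so $\vn_1,\vn_2$ are independent, $\vn_e\sim\N(0,\tfrac{2\sigma^2}{N_e}I)$, and both are independent of the post-processing data. Since $g_e(\x):=\inp{\w_e}{\x}$ is linear and both the objective $\sum_{(\x,y)\in S^{\mathrm{post}}}yf_\rvv(\x)$ and the invariance constraint $\hat{T}_1(f_\rvv)=\hat{T}_2(f_\rvv)$ are linear in $\rvv$, stage two is exactly the $2$-D linear program ``maximize $v_1\beta_1+v_2\beta_2$ subject to $\norm{\rvv}_\infty=1$ and $v_1\alpha_1+v_2\alpha_2=0$'', where $\beta_j=\sum_{(\x,y)\in S^{\mathrm{post}}}y\inp{\w_j}{\x}$ and $\alpha_j=\hat{T}_1(g_j)-\hat{T}_2(g_j)$. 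Its feasible set is a pair of antipodal points, so the solution is $\hat\rvv=\sgn(v_1^\star\beta_1+v_2^\star\beta_2)\,\rvv^\star$ with $\rvv^\star=(\alpha_2,-\alpha_1)/\max(\abs{\alpha_1},\abs{\alpha_2})$; writing $\w(\rvv):=v_1\w_1+v_2\w_2$, the returned classifier is $\w:=\w(\hat\rvv)$. By \eqref{eq:error_explicit} it then suffices to prove $\inp{\w}{\vmu_c}>0$, $\abs{\inp{\w}{\vmu_s}}\le\tfrac12\inp{\w}{\vmu_c}$, and $\inp{\w}{\vmu_c}\ge 2Q^{-1}(\epsilon)\,\sigma\norm{\w}$.

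Next I would condition on $S^{\mathrm{train}}$ (hence on $\w_1,\w_2$) and construct a single ``good event'' via a union bound over standard estimates: Gaussian tail bounds for $\inp{\vn_e}{\vmu_c}$ and $\inp{\vn_e}{\vmu_s}$; a $\chi^2$ bound for $\norm{\vn_e}^2$ and a conditional-Gaussian bound for $\inp{\vn_1}{\vn_2}$; a binomial bound showing the number of positives $P_e$ in $S^{\mathrm{fine}}_e$ satisfies $P_e\ge1$ and $c_e:=4P_e/N_e=1+O(\sqrt{\log(1/\delta)/N_e})$ (this is where $N_{\min}\ge C_p\log(4/\delta)$ enters); and, conditionally on $(\w_1,\w_2,P_1,P_2)$, Gaussian bounds of order $\sigma\norm{\w_j}\sqrt{\log(1/\delta)/N_e}$ for the sampling noise in $\hat{T}_e(g_j)$ and in $\beta_j$. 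On this event one can write $\hat{T}_e(g_j)=c_e(\inp{\w_j}{\vmu_c}+\theta_e\inp{\w_j}{\vmu_s})+\xi_{e,j}$, hence $\alpha_j=(c_1-c_2)\inp{\w_j}{\vmu_c}+(c_1\theta_1-c_2\theta_2)\inp{\w_j}{\vmu_s}+(\xi_{1,j}-\xi_{2,j})$, while $\inp{\w_e}{\vmu_c}=r_c^2+\inp{\vn_e}{\vmu_c}$ and $\inp{\w_e}{\vmu_s}=\theta_e r_s^2+\inp{\vn_e}{\vmu_s}$, all error terms being small.

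The invariance bound comes from the constraint and the core-alignment bound from the objective. Substituting $\hat v_1\alpha_1+\hat v_2\alpha_2=0$ into the identity for $\alpha_j$ and rearranging gives $(c_1\theta_1-c_2\theta_2)\inp{\w}{\vmu_s}=-(c_1-c_2)\inp{\w}{\vmu_c}-\hat v_1(\xi_{1,1}-\xi_{2,1})-\hat v_2(\xi_{1,2}-\xi_{2,2})$; since $\abs{\theta_1-\theta_2}>0.1$ and $c_e\approx1$ force $\abs{c_1\theta_1-c_2\theta_2}\ge\tfrac1{20}$, and $\abs{\hat v_j}\le1$, this yields $\abs{\inp{\w}{\vmu_s}}\lesssim N_{\min}^{-1/2}\abs{\inp{\w}{\vmu_c}}+\sigma\max_j\norm{\w_j}\sqrt{\log(1/\delta)/N_{\min}}$. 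For the core direction I would exploit the cancellation $\inp{\alpha_2\w_1-\alpha_1\w_2}{\vmu_c}=(c_1\theta_1-c_2\theta_2)\big(\inp{\w_2}{\vmu_s}\inp{\w_1}{\vmu_c}-\inp{\w_1}{\vmu_s}\inp{\w_2}{\vmu_c}\big)+\big((\xi_{1,2}-\xi_{2,2})\inp{\w_1}{\vmu_c}-(\xi_{1,1}-\xi_{2,1})\inp{\w_2}{\vmu_c}\big)$ --- the $(c_1-c_2)$ contributions vanish --- whose leading term is $-(\theta_1-\theta_2)^2 r_s^2 r_c^2$. Because $\rvv^\star\parallel(\alpha_2,-\alpha_1)$, both $\inp{\w(\rvv^\star)}{\vmu_c}$ and $\norm{\w(\rvv^\star)}$ equal the corresponding quantity for $\alpha_2\w_1-\alpha_1\w_2$ divided by the \emph{same} factor $\max(\abs{\alpha_1},\abs{\alpha_2})$, so that normalizing factor drops out of the ratio $\inp{\w}{\vmu_c}/(\sigma\norm{\w})$; this is the key simplification. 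Finally, using $\E[\beta_j\mid\w_j]=\tfrac{N}{2}\inp{\w_j}{\vmu_c}+\tfrac{N_1\theta_1+N_2\theta_2}{2}\inp{\w_j}{\vmu_s}$ together with the already-established smallness of $\inp{\w(\rvv^\star)}{\vmu_s}$, one shows $v_1^\star\beta_1+v_2^\star\beta_2$ has the same sign as $\inp{\w(\rvv^\star)}{\vmu_c}$, so the algorithm's sign choice makes $\inp{\w}{\vmu_c}=\abs{\inp{\alpha_2\w_1-\alpha_1\w_2}{\vmu_c}}/\max(\abs{\alpha_1},\abs{\alpha_2})>0$.

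It remains to combine the estimates: $\inp{\w}{\vmu_c}-\abs{\inp{\w}{\vmu_s}}\gtrsim(\theta_1-\theta_2)^2 r_s^2 r_c^2/\max(\abs{\alpha_1},\abs{\alpha_2})$, and $\sigma\norm{\w}$ can be bounded by expanding $\w(\rvv^\star)$ along $\vmu_c$, $\vmu_s$, and the orthogonal directions and using that the $\vmu_s$-coefficient is small, giving $\sigma\norm{\w}\lesssim\sigma\sqrt{r_c^2+\sigma^2 d/N_{\min}}$, while $\max(\abs{\alpha_1},\abs{\alpha_2})\lesssim r_s^2+r_c^2/\sqrt{N_{\min}}+\sigma^2\sqrt d/N_{\min}$ up to lower-order terms. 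The requirement $\inp{\w}{\vmu_c}\ge2Q^{-1}(\epsilon)\sigma\norm{\w}$ then unwinds into exactly the three terms of the $r_c^2$ lower bound in \eqref{eq:mus_constraint_positive}: the $Q^{-1}(\epsilon)\sqrt{d/N_{\min}}$ term from the signal-to-noise comparison against $\sigma^2 d/N_{\min}$, the $\sqrt d/N_{\min}$ term from keeping the $r_c^2/\sqrt{N_{\min}}$ and $\sigma^2\sqrt d/N_{\min}$ pieces of $\max(\abs{\alpha_1},\abs{\alpha_2})$ and of $\abs{\inp{\w}{\vmu_s}}$ below the spurious signal/core alignment, and the $r_s^2/(N_{\min}r_c^2)$ term from controlling the $\sigma\norm{\w_j}\sqrt{\log(1/\delta)/N_{\min}}$ sampling noise (whose scale involves $\sigma r_s/\sqrt{N_{\min}}$) against $r_c^2$; and the lower bound on $r_s^2$ guarantees that $(\theta_1-\theta_2)^2 r_s^2$ dominates the noise in $\alpha_1-\alpha_2$, so that $\rvv^\star$ stays bounded away from the degenerate direction $(1,-1)$, along which $\w(\rvv^\star)\propto\w_1-\w_2$ would be nearly orthogonal to $\vmu_c$. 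I expect the main obstacle to be precisely this last web of comparisons: bounding every noise contribution to $\alpha_1,\alpha_2$ (especially the $\xi_{e,j}$) tightly enough to simultaneously secure non-degeneracy of the $2$-D program and the correct sign, and making the bookkeeping across the regimes $r_c^2\lessgtr\sigma^2 d/N_{\min}$ and $r_s\lessgtr r_c$ land on exactly the stated three terms (in the $r_c\gg r_s$ regime one may instead want to argue directly that the output still has enough margin in the $\vmu_c$ direction because the core-alignment lower bound $(\theta_1-\theta_2)^2 r_s^2 r_c^2/\max\abs{\alpha}$ remains adequate).
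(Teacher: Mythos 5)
Your plan follows essentially the same route as the paper's proof: you reduce stage two to the two antipodal solutions of the empirical EOpp constraint on the $\ell_\infty$ sphere, use that constraint together with Gaussian concentration (including the binomial control of the positive-label counts, which is where $N_{\min}\ge C_p\log(4/\delta)$ enters) to suppress the spurious component, use the empirical score to select the sign aligned with $\vmu_c$, upper bound $\norm{\w}$, and conclude via the Gaussian tail formula --- exactly the structure of \lemref{lem:EOpp_unit_ball_intersection}, \lemref{lem:v_sum_bound}, \lemref{lem:retrieved_solution}, \lemref{lem:expected_score} and \corref{corro:positive_result_required_events}. Your only departures are in execution rather than substance: the explicit closed form $\rvv^\star\propto(\alpha_2,-\alpha_1)$ with its determinant-style cancellation replaces the paper's lower bound $|v_1+v_2|\ge\Delta/2$, and conditional-Gaussian bounds replace the Hanson--Wright inequality for the noise--noise inner products.
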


\begin{proof}[Proof sketch]
Writing down the error of $f_{\rvv} = v_1 \cdot \w_1 + v_2 \cdot \w_2$ under $\P_\theta$, it can be shown that to obtain the desired bound on the robust error of the classifier returned by Algorithm~\ref{alg:two_phase_learning}, we must upper bound the ratio
\begin{equation*}
    \frac{(v_1^\star\theta_1 + v_2^\star\theta_2)\norm{\vmu_s}^2 + \inp{\vmu_s}{v_1^\star\bar{\rvn}_1 + v_2^\star\bar{\rvn}_2}}{(v_1^\star + v_2^\star)\norm{\vmu_c}^2 + \inp{\vmu_c}{v_1^\star\bar{\rvn}_1 + v_2^\star\bar{\rvn}_2}},
\end{equation*}
when $\bar{\rvn}_e$ is the mean of Gaussian noise vectors, and $v_1^\star$ and $v_2^\star$ are the solutions to the optimization problem in Stage 2 of \Cref{alg:two_phase_learning}. The terms involving inner-products with the noise terms are zero-mean and can be bounded using standard Gaussian concentration arguments. Therefore, the main effort of the proof is upper bounding
\begin{equation*}
    \frac{v_1^\star\theta_1 + v_2^\star\theta_2}{v_1^\star + v_2^\star} \cdot \frac{\norm{\vmu_s}^2}{\norm{\vmu_c}^2}.
\end{equation*}
To this end, we leverage the EOpp constraint. The population version of this constraint (corresponding to infinite $N_1$ and $N_2$) implies that $v_1^\star\theta_1 + v_2^\star\theta_2 = 0$. For finite sample sizes, we use standard Gaussian concentration and the Hanson-Wright inequality to show that the empirical EOpp constraint implies that $| v_1^\star\theta_1 + v_2^\star\theta_2 |$ goes to zero as the sample sizes increase. Furthermore, we argue that  $\left| v_1^\star + v_2^\star \right| \geq |\theta_1-\theta_2|/2$, implying that---for appropriately large sample sizes---the above ratio indeed goes to zero. 
\end{proof}

\section{Empirical Validation}
\label{sec:validation}

\begin{figure}
    \centering
    \vspace{-15pt}
    \includegraphics[width=1.\linewidth]{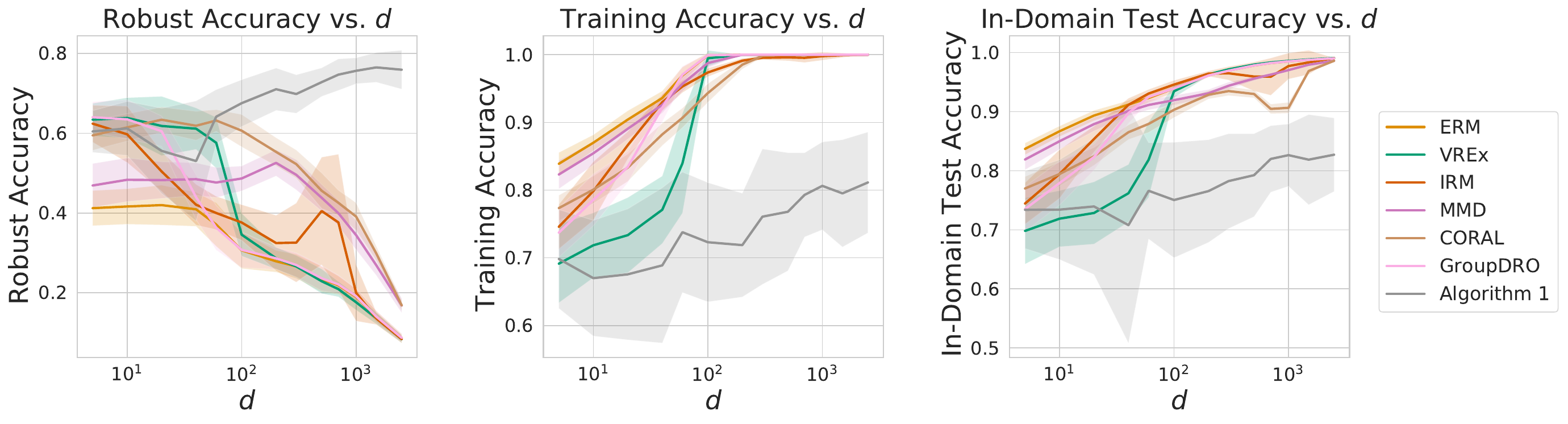}
    \vspace{-17pt}
    \caption{Numerical validation of our theoretical claims. Invariance inducing methods improve robust accuracy compared to ERM in low values of $d$, but their ability to do so is diminished as $d$ grows (top plot) and they enter the interpolation regime, as seen on the bottom plot for $d>10^2$. \Cref{alg:two_phase_learning} learns robust predictors as $d$ grows and does not interpolate.}
    \label{fig:simulation}
    \vspace{-12pt}
\end{figure}

The empirical observations that motivated this work can be found across the literature.
\yc{add references or point to the place in the paper where we provide them}
We therefore focus our simulations on validating the theoretical results in our simplified model. We also evaluate \Cref{alg:two_phase_learning} on the Waterbirds dataset, where the goal is not to show state-of-the-art results, but rather to observe whether our claims hold beyond 
the \learningsetting. 
\subsection{Simluations} \label{sec:simualtion}
\paragraph{Setup.} We generate data as described in \thmref{thm:main_statement} with two environments where $\theta_1=1, \theta_2=0$ (see \Cref{fig:simulation_neg_theta} in the appendix for results of the same simulation when $\theta=-\frac{1}{2}$). We further fix $r_c=1$ and $r_c=2$, while $N_1=800$ and $N_2=100$. We then take growing values of $d$, while adjusting $\sigma$ so that $\left(r_c / \sigma\right)^2 \propto \sqrt{d / N}$.\footnote{This is to keep our parameters within the regime where benign overfitting occurs.} For each value of $d$ we train linear models with IRMv1 \citep{arjovsky2019invariant}, VREx \citep{krueger2020out}, MMD \citep{li2018deep}, CORAL \citep{sun2016deep}, GroupDRO \citep{sagawa2019distributionally},  implemented in the Domainbed package \citep{gulrajani2020search}. We also train a classifier with the logistic loss to minimize empirical error (ERM), and apply \Cref{alg:two_phase_learning} where the ``post-processing" stage trains a linear model over the two-dimensional representation using the VREx penalty to induce invariance. We repeat this for $15$ random seeds for drawing $\vmu_c, \vmu_s$ and the training set.

\paragraph{Evaluation and results.} We compare the robust accuracy and the train set accuracy of the learned classifiers as $d$ grows.
First, we observe that all methods except for \Cref{alg:two_phase_learning} attain perfect accuracy for large enough $d$, i.e., they interpolate.
 We further note that while invariance-inducing methods give a desirable effect in low dimensions (the non-interpolating regime)---significantly improving the robust error over ERM---they become aligned with ERM in terms of robust accuracy as they go deeper into the interpolation regime (indeed, IRM essentially coincides with ERM for larger $d$). This is an expected outcome considering our findings in \secref{sec:negative_result}, as we set here $N_1$ to be considerably larger than $N_2$. \yc{would we get nicer results if we picked $N_1/N_2$ to be even larger?} 
\subsection{Waterbirds Dataset}

We evaluate Algorithm \ref{alg:two_phase_learning} on the Waterbirds dataset \citep{sagawa2019distributionally}, which has been previously used to evaluate the fairness and robustness of deep learning models. \yc{todo: add additional references supporting this claim}

\paragraph{Setup.} Waterbirds is a synthetically created dataset containing images of water- and land-birds overlaid on water and land background. Most of the waterbirds (landbirds) appear in water (land) backgrounds, with a smaller minority of waterbirds (landbirds) appearing on land (water) backgrounds. We set up the problem following previous work \citep{pmlr-v119-sagawa20a, veldanda2022fairness}, where a logistic regression model is trained over random features extract from a fixed pre-trained ResNet-18. Please see \Cref{sec:waterbirds_details} for details.
\paragraph{Fairness.} We use the image background type (water or land) as the sensitive feature, denoted $A$, and consider the fairness desiderata of Equal Opportunity \cite{hardt2016equality}, i.e., the false negative rate (FNR) should be similar for both groups. Towards this, we use the MinDiff penalty term \citep{prost2019toward}. The
\paragraph{Evaluation.} We compare the following methods: \textbf{(1) Baseline}: Learning a linear classifier $\w$ by minimizing $\mathcal{L}_p + \lambda \cdot \mathcal{L}_M$, where $\mathcal{L}_p$ is the standard binary cross entropy loss and $\mathcal{L}_M$ is the MinDiff penalty; \textbf{(2) Algorithm \ref{alg:two_phase_learning}}: In the first stage, we learn group-specific linear classifiers $\w_0, \w_1$ by minimizing $\mathcal{L}_p$ on the examples from $A=0$ and $A=1$, respectively. In the second stage we learn $\vv \in \R^2$ by minimizing $\mathcal{L}_p + \lambda \cdot \mathcal{L}_M$ on examples the entire dataset, where the new representation of the data is $\tilde{X} = [\langle w_1, X\rangle, \langle w_2, X\rangle] \in \R^2$.\footnote{This is basically Algorithm \ref{alg:two_phase_learning} with the following minor modifications: (1) The $\w_e$'s are computed via ERM, rather than simply taken to be the mean estimators; (2) Since the FNR gap penalty is already computed w.r.t.\ a small number of samples, we avoid splitting the data and use the entire training set for both phases; (3) we convert the constrained optimization problem into an unconstrained problem with a penalty term.} 

\paragraph{Results.}
Our main objective is to understand the effect of the fairness penalty. Toward this, for each method we compare both the test error and the test FNR gap when using either $\lambda=0$ (no regularization) or $\lambda=5$. The results are summarized in Figure \ref{fig:waterbirds}. We can see that for the baseline approach, the fairness penalty successfully reduces the FNR gap when the classifier is not interpolating. However, as our negative result predicts and as previously reported in \cite{veldanda2022fairness}, the fairness penalty becomes ineffective in the interpolating regime ($d \geq 1000$). On the other hand, for our two-phased algorithm, the addition of the fairness penalty reduces does reduce the FNR gap with an average relative improvement of 20\%; crucially, this improvement is
 independent of $d$.

\begin{figure}
    \centering
    \includegraphics[width=0.75\linewidth]{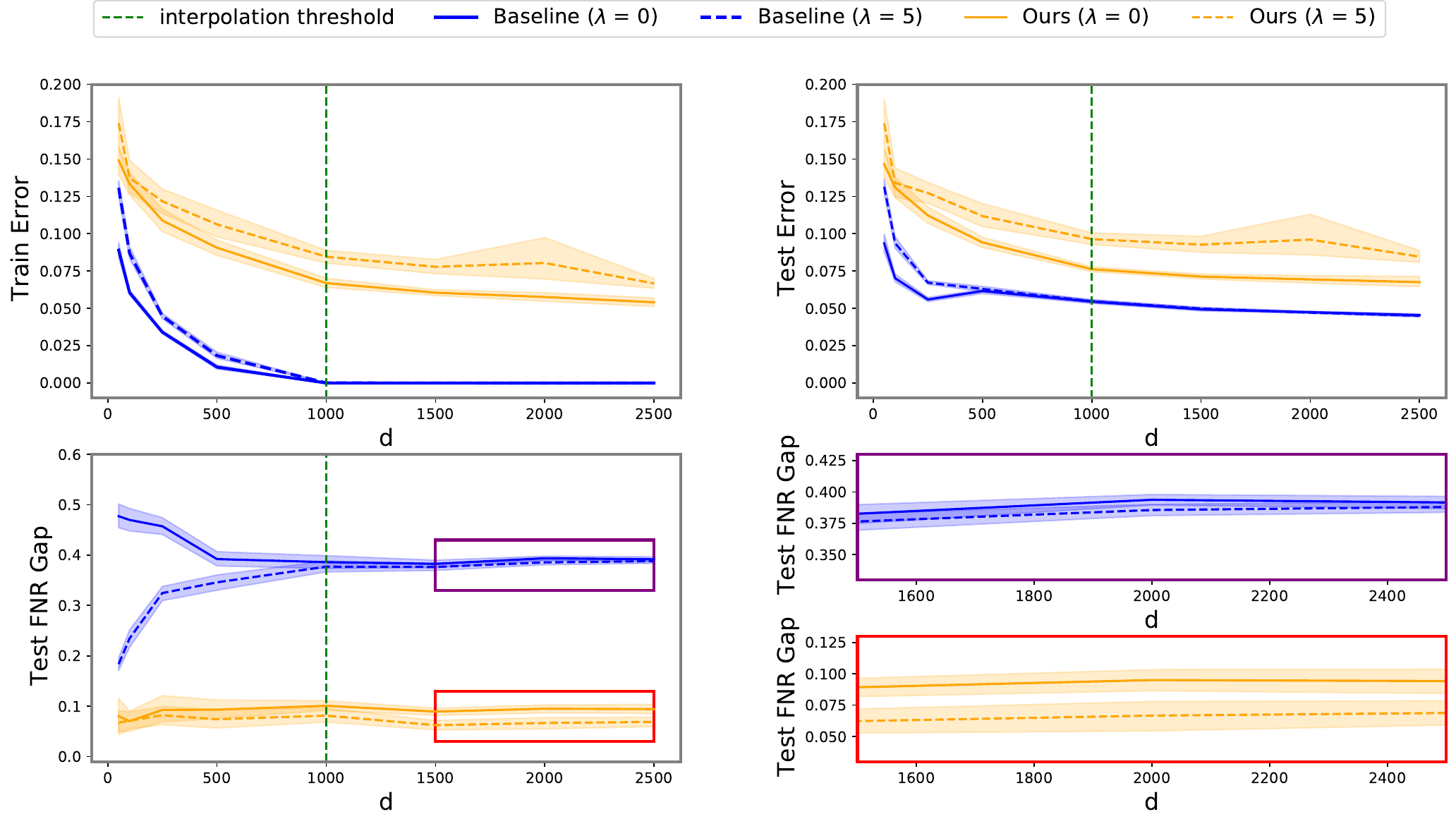}
    \vspace{-10pt}
    \caption{Results for the Waterbirds dataset \citep{sagawa2019distributionally}. \textbf{Top row}: Train error (left) and test error (right). The train error is used to identify the interpolation threshold for the baseline method (approximately $d=1000$). \textbf{Bottom row}: Comparing the FNR gap on the test set (left), with zoomed-in versions on the right. 
    }
    \label{fig:waterbirds}
    \vspace{-10pt}
\end{figure}

\section{Discussion and Additional Related Work} \label{sec:discussion}
In terms of formal results, most existing guarantees about invariant learning algorithms rely on the assumption that infinite training data is available \citep{arjovsky2019invariant, wald2021calibration, veitch2021counterfactual, puli2021out, rosenfeld2020risks, https://doi.org/10.48550/arxiv.2110.12403}. \citet{pmlr-v162-wang22x, chen2022iterative} analyze algorithms that bear resemblance to \Cref{alg:two_phase_learning} as they first project the data to a lower dimension and then fit a classifier.
While these algorithms deal with more general assumptions in terms of the number of environments, number of spurious features, and noise distribution, the fact that their guarantees assume infinite data prevents them from being directly applicable to \Cref{alg:two_phase_learning}. 
%
A few works with results on finite data are \citet{ahuja2021empirical, https://doi.org/10.48550/arxiv.2205.15196} (and also \citet{pmlr-v178-efroni22a} who work on related problems in the context of sequential decision making) that characterize the sample complexity of methods that learn invariant classifiers. However, they do not analyze the overparameterized cases we are concerned with.

Negative results about learning overparameterized robust classifiers have been shown for methods based on importance weighting \citep{zhai2022understanding} and max-margin classifiers \citep{pmlr-v119-sagawa20a}.
Our result is more general, applying to any learning algorithm that separates the data with arbitrarily small margins, instead of focusing on max-margin classifiers or specific algorithms. While we focus on the linear case, we believe it is instructive, as any reasonable method is expected to succeed in that case. Nonetheless, we believe our results can be extended to non-linear classifiers, and we leave this to future work.

One take-away from our result is that while low training loss is generally desirable, overfitting to the point of interpolation can significantly hinder invariance-inducing objectives. This means one cannot assume a typical deep learning model with an added invariance penalty will indeed achieve any form of invariance; this fact also motivates using held-out data for imposing invariance, as in our \Cref{alg:two_phase_learning} as well as several other two-stage approaches mentioned above. 


Our work focuses theory underlying a wide array of algorithms, and there are natural follow-up topics to explore. One is to conduct a comprehensive empirical comparison of two-stage methods along with other methods that avoid interpolation, e.g., by subsampling data \citep{idrissi2022simple, chatterji2022under}. Another interesting topic is whether there are other model properties that are incompatible with interpolation. For instance, recent work \citep{https://doi.org/10.48550/arxiv.2210.01964} connects the generalization gap and calibration error on the training distribution.
We also note that our focus in this paper was not on types of invariance that are satisfiable by using clever data augmentation techniques (e.g. invariance to image translation), or the design of special architectures (e.g. \cite{cohen2016group,lee2019set,maron2019universality}). These methods carefully incorporate a-priori known invariances, and their  empirical success when applied to large models may suggest that there are lessons to be learned for the type of invariant learning considered in our paper. These connections  seem like an exciting avenue for future research.

\yc{todo (maybe): make a paragraph dedicated to the recent prior work that explicitly sacrifice interpolation in search of invariance. this will be a good place to emphasize that methods that subsample the training set and then interpolate only the subsample are \emph{not} interpolating methods, and therefore can in principle be invariant}


\bibliography{malign_overfitting}
\bibliographystyle{malign_overfitting}

\newpage

\appendix
\newpage
\section{Setting and Helper Lemmas}\label{sec:app_helpers}
\subsection{Notation} \label{sec:notation}
Let $\mathbb{U}(\mathrm{O}(d))$ be the uniform distribution over $d\times d$ orthogonal matrices, $\mathrm{Rad}(\alpha)$ the Rademacher distribution with parameter $\alpha$, and $\N(\vmu, \Sigma)$ the Gaussian and multivariate normal distribution with mean $\vmu$ and covariance $\Sigma$ (the dimension will be clear from context) and $W(\Sigma, d)$ the Wishart distribution with scale matrix $\Sigma$ and $d$ degrees of freedom. For the dataset $S=\{\x_i, y_i\}_{i=1}^{N}$ we denote the indices of examples with set $\idxset=[N]$, and recalling that $S$ comprises two datasets $S_1, S_2$, we denote the indices of their respective examples within $S$ by $\idxset_1, \idxset_2\subseteq{\idxset}$ where $|\idxset_e|=N_e$ for $e\in{\{1,2\}}$. Our generative process is then: 
\begin{align*}
\rmU &\sim \mathbb{U}(\mathrm{O}(d)) \\
\vmu_c &= U_1\cdot r_c, \vmu_s = U_2 \cdot r_s \\
y_i &= \mathrm{Rad}(\half), n_i\sim \mathcal{N}(0, \sigma^2\I_d) \quad \forall i\in{[N]}\\
\x_i &= y_i\vmu_c + y_i\theta_e\vmu_s + n_i \quad \forall e, i\in{\idxset_e}.
\end{align*}

\begin{figure}
    \centering
    \includegraphics[width=0.75\linewidth]{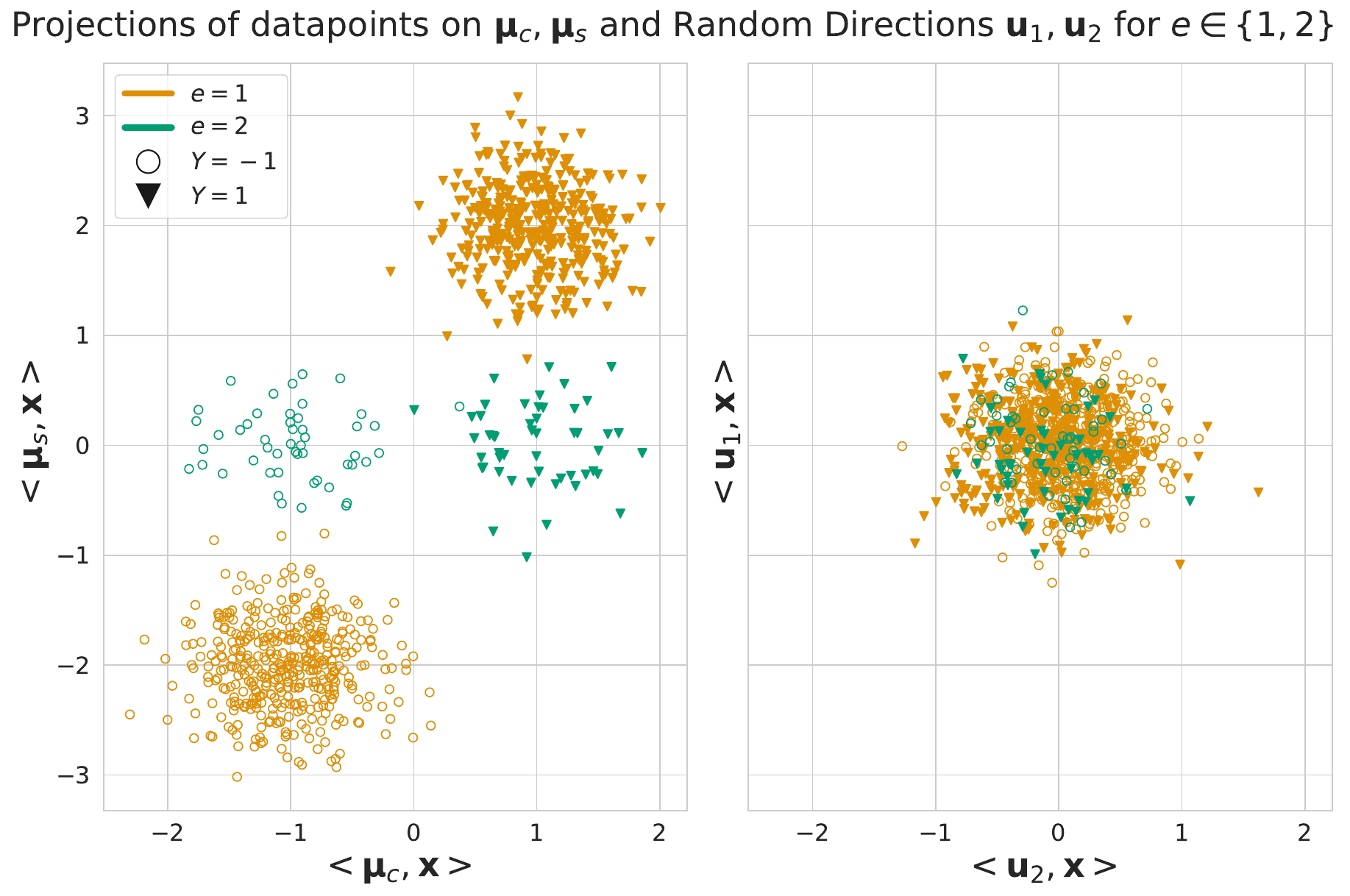}
    \caption{Example of datasets sampled from two training envrionments, where we set $\theta_1=1, \theta_2=0,$ $N_1=800, N_2=100, r_s=2, r_c=1$. Left and right plots show projections of training points on $\vmu_c, \vmu_s\in{\reals^d}$, and on $\rvu_1, \rvu_2$ drawn uniformly from the $d$-dimensional unit sphere, respectively. As we increase $d$ there are many hyperplanes $\w\in{\reals^d}$ that separate the data, for some $\inp{\w}{\mu_c}$ is much higher than $\inp{\w}{\mu_s}$ (i.e. their predictions are invariant) and for some the opposite may hold. We ask whether interpolating learning rules can find the former.}
    \label{fig:projections}
\end{figure}
\begin{figure}
    \centering
    \includegraphics[width=0.75\linewidth]{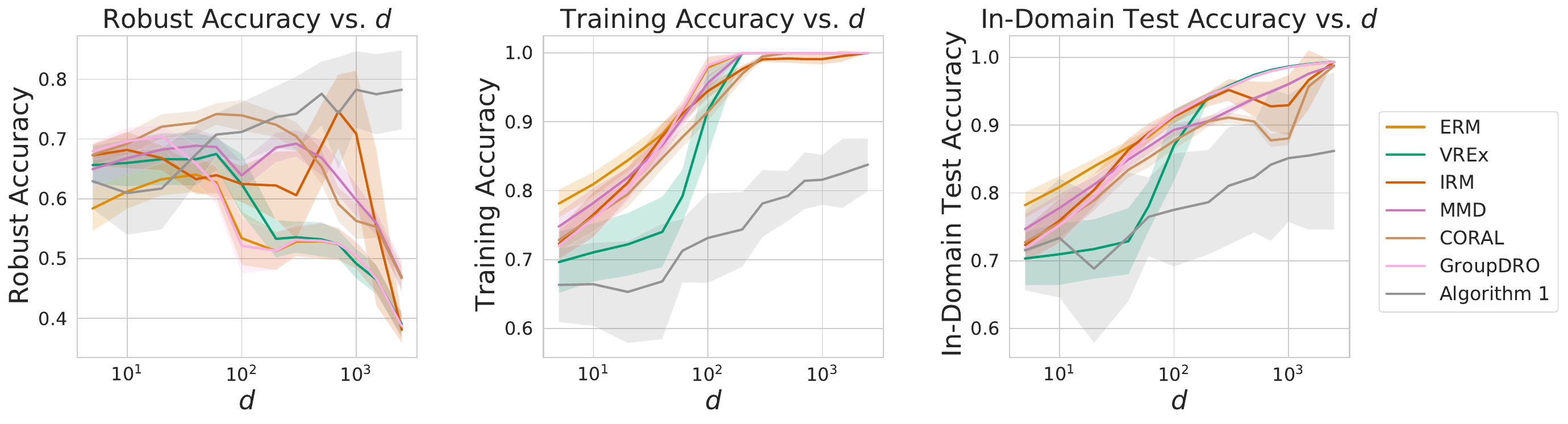}
    \caption{Results of the simulation described in \Cref{sec:simualtion} with $\theta_2=-\frac{1}{2}$ (all other parameters are kept at the same value as in \Cref{sec:simualtion})}.
    \label{fig:simulation_neg_theta}
\end{figure}

\newcommand{\wperp}{\w_\perp}

The vectors $E_1, E_2\in{\{0,1\}^N}$ are binary vectors where $[E_e]_i = 1$ for $i\in{\idxset_e}$ and $e\in{\{1,2\}}$, while $\mathbf{1}$ is the vector of length $N$ whose entries equal $1$. We also denote $\z_i = \x_i y_i$ for $i\in{\idxset}$ and $\rmZ=[\z_1, \ldots, \z_N]^\top\in{\reals^{N\times d}}$ the matrix that stacks all these vectors.
The $i$-th column of a matrix $\rmM$ is denoted by $M_i$, $s_{\min}(\rmM), s_{\max}(\rmM)$ are its smallest and largest singular values accordingly.
The unit matrix of size $n$ is denoted by $\I_n$ and for convenience we denote the direction of any vector $\bv$ as $\hat{\bv}:=\frac{\bv}{\|\bv\|}$. Finally, for some vector of coefficients $\mBeta\in{\reals^N}$, we will use the form $\hat{\w}=\sum_{i\in{\idxset}}{\beta_i y_i \x_i} + \w_{\perp}$ where $\wperp$ is in the orthogonal complement of ${\mathrm{span}(\{\x_i\}_{i\in{\idxset}})}$, to write any linear model (here normalized to unit norm).

For convenience we will write our proofs for the case where $\theta_1=1$ and $\sigma^2=d^{-1}$, extensions to different settings of these parameters are straightforward but result in a more cumbersome notation.

\yc{missing: high-level outline of proof, i.e., explain organization of this section}

\subsection{Operator Norms of Wishart Matrices} \label{sec:spectral_bounds}
We begin with stating the required events for our results and their occurrence with high-probability:
\begin{lemma} \label{lem:high_prob_events}
Consider the matrix $\rmG = \rmZ - \ones\vmu_c^\top - (E_1 + \theta_2E_2)\vmu_s^\top$. \yc{todo: refactor $E_1$ to something consistent with lowercase boldface vector convention. Also: define $\rmG$, $\rmZ$, $E_1$, in displayed equations for better visibility} For any $t > 0$, with probability at least $1-10\exp(-t^2 / 8)$ the following hold simultaneously:
\begin{align}
    1-\sqrt{\frac{N}{d}}-\frac{t}{\sqrt{d}} \leq s_{\min}(\G^\top) &\leq s_{\max}(\G^\top) \leq 1+\sqrt{\frac{N}{d}}+\frac{t}{\sqrt{d}} \label{eq:random_singvals_bound} \\
    \norm{\G\vmu_c} &\leq t \sqrt{\frac{N}{d}}\norm{\vmu_c} \label{eq:G_muc_bound} \\
    \norm{\G\vmu_s} &\leq t \sqrt{\frac{N}{d}}\norm{\vmu_s} \label{eq:G_mus_bound}
\end{align}
\end{lemma}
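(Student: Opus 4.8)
The plan is to recognize that $\rmG$ is nothing but the matrix of sign-flipped noise vectors, after which all three claims reduce to textbook Gaussian concentration. Unpacking the generative process of \Cref{sec:notation}: the $i$-th row of $\rmZ$ is $y_i\x_i = y_i^2\vmu_c + y_i^2\theta_e\vmu_s + y_i n_i = \vmu_c + \theta_e\vmu_s + y_i n_i$ for $i\in\idxset_e$, using $y_i^2=1$. Subtracting $\vmu_c^\top$ (coming from $\ones\vmu_c^\top$) and $[E_1+\theta_2 E_2]_i\,\vmu_s^\top = \theta_e\vmu_s^\top$ (coming from $(E_1+\theta_2 E_2)\vmu_s^\top$, using $\theta_1=1$) exactly cancels the first two terms, so the $i$-th row of $\rmG$ equals $y_i n_i^\top$. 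Since the $n_i$ are i.i.d.\ $\mathcal{N}(0,\sigma^2\I_d)$ with $\sigma^2=1/d$, the $y_i$ are independent signs, and the Gaussian law is symmetric, the rows $y_i n_i$ are i.i.d.\ $\mathcal{N}(0,\tfrac1d\I_d)$; crucially, the noise is drawn independently of $\rmU$, so $\rmG$ is independent of $(\vmu_c,\vmu_s)$. Equivalently, $\sqrt{d}\,\rmG^\top$ is a $d\times N$ matrix with i.i.d.\ standard Gaussian entries, independent of $\vmu_c,\vmu_s$.

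For \eqref{eq:random_singvals_bound}, I would invoke the standard non-asymptotic estimate for the extreme singular values of a Gaussian matrix (e.g.\ Davidson--Szarek, or Vershynin's survey): for a $d\times N$ matrix $A$ with i.i.d.\ $\mathcal{N}(0,1)$ entries and $d\ge N$, with probability at least $1-2\exp(-t^2/2)$ one has $\sqrt d-\sqrt N-t\le s_{\min}(A)\le s_{\max}(A)\le \sqrt d+\sqrt N+t$. Applying this to $A=\sqrt d\,\rmG^\top$, dividing through by $\sqrt d$, and using $s_{\min}(\rmG^\top)=s_{\min}(\rmG)$ etc., yields \eqref{eq:random_singvals_bound} with failure probability at most $2\exp(-t^2/2)$.

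For \eqref{eq:G_muc_bound} and \eqref{eq:G_mus_bound} I would condition on $\rmU$ (equivalently on $\vmu_c,\vmu_s$), under which $\rmG$ still has i.i.d.\ $\mathcal{N}(0,\tfrac1d\I_d)$ rows. Then $\rmG\vmu_c\in\reals^N$ has entries $\langle y_in_i,\vmu_c\rangle$, which are i.i.d.\ $\mathcal{N}(0,\norm{\vmu_c}^2/d)$; hence $\tfrac{d}{\norm{\vmu_c}^2}\norm{\rmG\vmu_c}^2$ is $\chi^2$ with $N$ degrees of freedom, so the event $\norm{\rmG\vmu_c}\le t\sqrt{N/d}\,\norm{\vmu_c}$ is exactly $\{\chi^2_N\le t^2N\}$. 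A Chernoff/Laurent--Massart bound on the upper tail of $\chi^2_N$ gives $\P(\chi^2_N>t^2N)\le 2\exp(-t^2/2)$ in the relevant regime; since the conditional bound does not depend on the value of $\rmU$, it also holds unconditionally. The identical argument with $\vmu_s$ (also independent of $\rmG$) gives \eqref{eq:G_mus_bound}. A union bound over the three events then produces the claimed $1-6\exp(-t^2/2)$.

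The only mildly delicate point — and the one place the hypotheses of \Cref{thm:main_statement} are used — is matching the $\chi^2$ upper tail to the precise form $t\sqrt{N/d}\,\norm{\vmu_c}$ with rate $\exp(-t^2/2)$. This requires $N=N_1+N_2>130$ to be large (so $\chi^2_N/N$ concentrates tightly near $1$) together with the observation that the statement is vacuous unless $t\gtrsim 2$; for such $t$ and $N$ the Chernoff bound $\P(\chi^2_N\ge t^2N)\le (t^2 e^{1-t^2})^{N/2}$ is comfortably below $2\exp(-t^2/2)$. Everything else is routine bookkeeping — the substantive content of the lemma is the opening observation that the ``residual'' matrix $\rmG$ is pure isotropic noise independent of the signal directions $\vmu_c,\vmu_s$.
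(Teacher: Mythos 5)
Your proposal is correct and follows essentially the same route as the paper: identify $\rmG$ as the pure noise matrix with i.i.d.\ $\mathcal{N}(0,1/d)$ entries (independent of $\vmu_c,\vmu_s$), apply the standard Gaussian singular-value bound for \eqref{eq:random_singvals_bound}, bound $\norm{\rmG\vmu_c}$, $\norm{\rmG\vmu_s}$ via the Gaussian/chi-squared norm tail, and union bound. Your extra caveat about needing $N$ large and $t\gtrsim 2$ is harmless but unnecessary, since $\P(\chi^2_N\ge t^2N)\le 2\exp(-t^2/2)$ in fact holds for all $N\ge 1$ and $t>0$, which is exactly the norm bound the paper invokes.
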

\begin{proof}
$\G$ is a random Gaussian matrix with $G_{i,j}\sim \mathcal{N}(0, d^{-1} \I_N)$. By concentration results for random Gaussian matrices \cite[Cor. 5.35]{vershynin_2012} we obtain that with probability at least $1-2\exp(-t^2 / 2) \geq 1-2\exp(-t^2 / 8)$ \eqref{eq:random_singvals_bound} holds.

Next we note that $\G\vmu_c\sim\N (0, d^{-1}\|\vmu_c\|^2 \I_N)$ and similarly for $\G\vmu_s$. The norm of a Gaussian random vector can be bounded for any $t_2 > 0$ \citep[Eq. 3.4]{ledoux2013probability}: \yc{ref?}
\begin{align*} 
    \sP \left[ \| \G\vmu_c \| \geq t_2 \right] \leq 4\exp\left( -\frac{d t_2^2}{8N\norm{\vmu_c}^2} \right)
\end{align*}
Setting $t_2=t\sqrt{\frac{N}{d}}\norm{\vmu_c}$ we get that with probability at least $1-4\exp(-t^2 / 8)$ \eqref{eq:G_muc_bound} holds. Repeating the analogous derivation for \eqref{eq:G_mus_bound} and taking a union bound over the $3$ events, we arrive at the desired result.
\end{proof}

\begin{lemma} \label{lem:cov_spectral_bound}
Conditioned on the events in \lemref{lem:high_prob_events} with parameter $t\ge 0$,
if 
\begin{align} \label{eq:assum_spectral}
     \frac{\sqrt{N}+t}{\sqrt{d}} + \sqrt{N}(\norm{\vmu_c} + \norm{\vmu_s}) \leq \half,
\end{align}
then
\begin{align*}
    \norm{\rmZ\rmZ^\top - \E[\rmZ\rmZ^\top]}_{\mathrm{op}} \leq 3\frac{\sqrt{N}+t}{\sqrt{d}}
    ~~\mbox{and}~~
    \half I_N \preceq \rmZ\rmZ^\top \preceq 2 I_N.
\end{align*}
\end{lemma}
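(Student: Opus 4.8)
\emph{Proof plan.} The plan is to split $\rmZ$ into its conditional mean plus the Gaussian fluctuation matrix $\rmG$ from \lemref{lem:high_prob_events}, expand $\rmZ\rmZ^\top$, and bound the resulting terms using the events we are conditioning on. Since $\z_i = y_i\x_i = \vmu_c + \theta_{e_i}\vmu_s + y_i n_i$ with $y_i n_i$ zero-mean and independent across $i$, the conditional mean (given $\vmu_c,\vmu_s$) of $\rmZ$ is $\rmM := \ones\vmu_c^\top + (E_1+\theta_2 E_2)\vmu_s^\top$, i.e.\ $\rmZ = \rmM + \rmG$. Using $\vmu_c\perp\vmu_s$, $\E\rmG = 0$, $\E[\rmG\rmG^\top] = I_N$ (as $\rmG$ has i.i.d.\ $\N(0,1/d)$ entries), and $\sigma^2 d = 1$, a short computation gives $\E[\rmZ\rmZ^\top] = I_N + \rmM\rmM^\top$, where $\rmM\rmM^\top = \norm{\vmu_c}^2\ones\ones^\top + \norm{\vmu_s}^2(E_1+\theta_2 E_2)(E_1+\theta_2 E_2)^\top \succeq 0$ and $\norm{\rmM\rmM^\top}_{\mathrm{op}} = \norm{\rmM}_{\mathrm{op}}^2 \le \big(\sqrt{N}(\norm{\vmu_c}+\norm{\vmu_s})\big)^2 \le \tfrac14$ by \eqref{eq:assum_spectral}.

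Next, expand $\rmZ\rmZ^\top - \E[\rmZ\rmZ^\top] = (\rmG\rmG^\top - I_N) + \rmG\rmM^\top + \rmM\rmG^\top$ and use the triangle inequality in operator norm. Writing $a := (\sqrt{N}+t)/\sqrt{d}$, the hypothesis \eqref{eq:assum_spectral} forces $a \le \tfrac12$. Since the singular values of $\rmG$ and $\rmG^\top$ coincide, \eqref{eq:random_singvals_bound} places every eigenvalue of $\rmG\rmG^\top$ in $[(1-a)^2,(1+a)^2]$, so $\norm{\rmG\rmG^\top - I_N}_{\mathrm{op}} \le 2a + a^2$. For the cross term, $\rmG\rmM^\top = (\rmG\vmu_c)\ones^\top + (\rmG\vmu_s)(E_1+\theta_2 E_2)^\top$, hence $\norm{\rmG\rmM^\top}_{\mathrm{op}} \le \norm{\rmG\vmu_c}\,\norm{\ones} + \norm{\rmG\vmu_s}\,\norm{E_1+\theta_2 E_2}$; plugging in $\norm{\ones}=\sqrt{N}$, $\norm{E_1+\theta_2 E_2} = \sqrt{N_1+\theta_2^2 N_2}\le\sqrt{N}$, the fluctuation bounds \eqref{eq:G_muc_bound} and \eqref{eq:G_mus_bound} (which control $\norm{\rmG\vmu_c}$ and $\norm{\rmG\vmu_s}$ by quantities of order $\norm{\vmu_c}/\sqrt{d}$ and $\norm{\vmu_s}/\sqrt{d}$), and $\sqrt{N}(\norm{\vmu_c}+\norm{\vmu_s})\le\tfrac12$ from \eqref{eq:assum_spectral}, shows this term is again $O(a)$. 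Collecting constants (using $a\le\tfrac12$ to absorb the $a^2$ term) gives the first claim $\norm{\rmZ\rmZ^\top - \E[\rmZ\rmZ^\top]}_{\mathrm{op}} \le 3a = 3(\sqrt{N}+t)/\sqrt{d}$.

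The PSD sandwich then follows: combining the previous two displays, $\rmZ\rmZ^\top \succeq \E[\rmZ\rmZ^\top] - 3a\,I_N \succeq (1-3a)I_N$ and $\rmZ\rmZ^\top \preceq \E[\rmZ\rmZ^\top] + 3a\,I_N \preceq \big(1 + \tfrac14 + 3a\big)I_N$, and since \eqref{eq:assum_spectral} makes $a$ small enough the right-hand sides are $\succeq \thalf I_N$ and $\preceq 2I_N$ respectively.

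The only part that requires real care is the cross-term bound on $\norm{\rmG\rmM^\top}_{\mathrm{op}}$ (and the matching bookkeeping of constants that makes the stated $1/2$ and $2$ in the PSD sandwich come out): one must observe that although $\ones$ and $E_1+\theta_2 E_2$ have length $\sqrt{N}$, the vectors $\rmG\vmu_c$ and $\rmG\vmu_s$ are small-scale Gaussian vectors (scale $\norm{\vmu_c}/\sqrt{d}$, resp.\ $\norm{\vmu_s}/\sqrt{d}$), so after subtracting the deterministic rank-two shift $\rmM$ the fresh Gaussian block $\rmG$ acts almost isometrically; the smallness of $r_c,r_s$ guaranteed by \eqref{eq:assum_spectral} is exactly what turns this into an $O(a)$ bound. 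Everything else is routine bookkeeping with the high-probability events of \lemref{lem:high_prob_events}.
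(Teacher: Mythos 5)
Your decomposition $\rmZ = \rmM + \G$ with $\rmM = \ones\vmu_c^\top + (E_1+\theta_2 E_2)\vmu_s^\top$, the identity $\E[\rmZ\rmZ^\top] = \I_N + \rmM\rmM^\top$, and the triangle-inequality treatment of $(\G\G^\top-\I_N) + \G\rmM^\top + \rmM\G^\top$ is essentially the paper's route to the operator-norm deviation bound. The genuine gap is in how you obtain the PSD sandwich. Deriving $\half\I_N\preceq\rmZ\rmZ^\top\preceq 2\I_N$ from $\rmZ\rmZ^\top\succeq\E[\rmZ\rmZ^\top]-3a\,\I_N\succeq(1-3a)\I_N$ does not work under the stated hypothesis: \eqref{eq:assum_spectral} only guarantees $a=(\sqrt{N}+t)/\sqrt{d}\le\half$, so $1-3a$ can be as small as $-\half$ and the lower bound is vacuous (likewise $1+\tfrac14+3a$ can reach $\tfrac{11}{4}>2$). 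You would need $a\le\tfrac16$, which the lemma does not assume, and \eqref{eq:assum_spectral} does not "make $a$ small enough." The paper avoids this entirely by not routing the PSD bounds through the Gram-matrix deviation: it applies Weyl's inequality to the singular values of $\rmZ^\top=\G^\top+\rmM^\top$, giving $s_{\min}(\rmZ^\top)\ge s_{\min}(\G^\top)-s_{\max}(\rmM^\top)\ge 1-a-\sqrt{N}(\norm{\vmu_c}+\norm{\vmu_s})\ge\half$, which uses \eqref{eq:assum_spectral} exactly as a bound on the \emph{sum} of the two perturbations (and similarly $s_{\max}(\rmZ^\top)\le\tfrac32$). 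The first-order singular-value perturbation argument retains the full strength of the hypothesis, whereas your second-order argument loses it. (As an aside, even the paper's version, read strictly, yields $\tfrac14\I_N\preceq\rmZ\rmZ^\top\preceq\tfrac94\I_N$ since $s_{\min}(\rmZ^\top)\ge\half$ only gives $\lambda_{\min}(\rmZ\rmZ^\top)\ge\tfrac14$; but at least it is nonvacuous.)

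A secondary point: your cross-term chain gives $\norm{\G\rmM^\top}_{\mathrm{op}}\le t\sqrt{N/d}\,\norm{\vmu_c}\sqrt{N}+t\sqrt{N/d}\,\norm{\vmu_s}\sqrt{N}\le\frac{t\sqrt{N}}{2\sqrt{d}}$, and $t\sqrt{N}$ is \emph{not} $O(\sqrt{N}+t)$ when $t$ and $N$ are both large, so the assertion that this term "is again $O(a)$" is not justified by the inequalities you cite, and the final constant-collection to $3a$ does not close. To be fair, the paper's own display makes the same silent simplification when passing from $\sqrt{N/d}\cdot t\sqrt{N}(\norm{\vmu_c}+\norm{\vmu_s})$ to $\frac{\sqrt{N}+t}{\sqrt{d}}\cdot\frac{t}{\sqrt{N}+t}\sqrt{N}(\norm{\vmu_c}+\norm{\vmu_s})$, so this is not a point on which you diverge from the paper; but if you want the stated constant to actually come out, you need either a further restriction tying $t$ to $N$ and the norms, or a sharper handling of $\norm{\G\vmu_c}$ and $\norm{\G\vmu_s}$.
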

We note that we already assume $d \gg N$ and $\norm{\vmu_c} \ll N^{-1/2}$, hence the additional assumption introduced in the conditions of this lemma is regarding the size of $\norm{\vmu_s}\sqrt{N_1}$.
\begin{proof}
Since $\G\G^\top \sim W(d^{-1}\I_N, d)$ we have that $\E [\G \G^\top] = \I_N$. 
Then from \eqref{eq:random_singvals_bound} we can also obtain $(1 - \sqrt{\frac{N}{d}} - \frac{t}{\sqrt{d}})^2 \, \I_n \preceq \G \G^\top \preceq (1 + \sqrt{\frac{N}{d}} + \frac{t}{\sqrt{d}})^2 \, \I_n$,
which leads to:
\begin{align*}
    \left\|\G \G^\top - \E[\G \G^\top]\right\|_{\mathrm{op}} \leq \left(1+\sqrt{\frac{N}{d}} + \frac{t}{\sqrt{d}}\right)^2 - 1.
\end{align*}

Combining this with \eqref{eq:G_muc_bound} and \eqref{eq:G_mus_bound}
\begin{align*}
    \norm{\rmZ\rmZ^\top - \E\left[ \rmZ \rmZ^\top \right]}_{\mathrm{op}} &\leq \norm{\G\G^\top - \E\left[ \G\G^\top\right]}_{\mathrm{op}} + \norm{\G\vmu_c\ones^\top}_{\mathrm{op}} + \norm{\G\vmu_s\left(E_1 + \theta_2 E_2\right)^\top}_{\mathrm{op}} \\
    &\leq \sqrt{\frac{N}{d}}\left(2\frac{\sqrt{N}+t}{\sqrt{N}} + \frac{(\sqrt{N}+t)^2}{\sqrt{Nd}} + t\sqrt{N}(\norm{\vmu_c} + \norm{\vmu_s})\right) \\
    &\leq \frac{\sqrt{N}+t}{\sqrt{d}}\left(2 +  
    \frac{\sqrt{N}+t}{\sqrt{d}} + \frac{t}{\sqrt{N}+t}\sqrt{N}(\norm{\vmu_c} + \norm{\vmu_s}) \right)
    \\ & \leq\frac{\sqrt{N}+t}{\sqrt{d}} \cdot 2.5,
\end{align*}
where the last transition follows from substituting \eqref{eq:assum_spectral}.
To obtain the spectral bound on $\rmZ\rmZ^\top$ we have that $\rmZ=\G+\ones\vmu_c^\top + (E_1 + \theta_2 E_2)\vmu_s^\top$.
From Weyl's inequality for singular values: 
\begin{align*}|s_{\mathrm{min}}(\G^\top + \vmu_c\ones^\top +& \vmu_s \left(E_1 + \theta_2 E_2\right)^\top) - s_{\mathrm{min}}(\G^\top)| \leq\\
& s_{\mathrm{max}}( \vmu_c\ones^\top + \vmu_s \left(E_1 + \theta_2 E_2\right)^\top)
\leq \norm{\vmu_c}\sqrt{N} + \norm{\vmu_s}\sqrt{N}.
\end{align*}

Taken together with \eqref{eq:random_singvals_bound} and the assumption in \eqref{eq:assum_spectral} we get:
\begin{align*}
    s_{\mathrm{min}}(\rmZ^\top) &\geq s_{\mathrm{min}}(\G^\top) - \norm{\vmu_c}\sqrt{N} - \norm{\vmu_s}\sqrt{N} \\
    &\geq 1-\frac{1}{\sqrt{d}}\left(\sqrt{N} + t\right) - \norm{\vmu_c}\sqrt{N} - \norm{\vmu_s}\sqrt{N} \\
    &\geq \half.
\end{align*}
To prove that $\rmZ\rmZ^\top \preceq 2$ we simply need to follow the same steps while taking notice that Weyl's inequality also holds for $s_{\mathrm{max}}(\G^\top)$. This will give us $s_{\mathrm{max}}(\rmZ^\top) \leq 3 / 2 \leq 2$ from which the upper bound follows.
\end{proof}


\newcommand{\vmuperp}{\vmu^\perp}
\newcommand{\normalize}[1]{\frac{#1}{\norm{#1}}}

\subsection{Sufficiency of Linear Classifiers Spanned by Data Points}
\yc{I think it makes more sense to put this subsection in the next appendix, since this is part of the proof of the negative result}
\yc{deterministic algorithm assumption not necessary and not part of current statement - fix this}
We wish to bound $\langle \hat{\w}_\bot , \vmu_c \rangle= r_c \langle \hat{\w}_\perp, U_1 \rangle$. To this end let us take an orthonormal basis $\{\bv_1,  \ldots, \bv_N\}$ and let these vectors form the columns of the orthogonal matrix $V\in\R^{d\times N}$. \yc{to do: correct matrix notation for $V$; probably $\rmV$?} 
Let $P_V$ be the orthogonal projection matrix on the columns of $V$.
We first claim that conditioned on the data, the component of the mean vectors that is not spanned by the data is distributed uniformly.
\begin{lemma} \label{lem:uniform_ortho_mean}
Let $\vmuperp_{c} \coloneqq (I-P_V)\vmu_c$ and $\vmuperp_{s}\coloneqq(I-P_V)\vmu_c$. \yc{$P_V$ should probably $\rmP_{\rmV}$? We never actually make any meaningful use of the $\rmV$ notation, so we might want to drop it}Conditional on the training set $\{\x_i, y_i\}_{i\in{S}}$, \yc{and the algorithm's randomness} the vectors $\normalize{\vmuperp_s}$ and $\normalize{\vmuperp_c}$ are uniformly distributed on unit spheres a subspace of dimension $d-N$. 
\end{lemma}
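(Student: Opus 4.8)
The plan is to exploit the rotational invariance of the Gaussian data-generating process together with the fact that $\vmu_c$ and $\vmu_s$ are themselves obtained by applying a Haar-random orthogonal matrix $\rmU$ to fixed vectors. First I would set up the conditioning carefully: condition on the labels $\{y_i\}$, the noise vectors $\{\rvn_i\}$ used to form the data, and (if the algorithm is randomized) the algorithm's internal random seed. Then the span $\spn(\{\x_i\})$ and hence the orthonormal basis $\{\bv_1,\dots,\bv_N\}$ and the projection $\rmP_{\rmV}$ are \emph{not} yet determined, because the $\x_i$ still depend on $\vmu_c,\vmu_s$; so instead I would condition directly on the realized dataset $\{\x_i,y_i\}_{i\in S}$ and argue about the conditional law of $(\vmu_c,\vmu_s)$ given this. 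The key observation is that the pair $(\vmu_c,\vmu_s)$ lives on a product of spheres $r_c\sS^{d-1}\times (r_s\sS^{d-1}\cap \vmu_c^\perp)$, and the likelihood of the data given $(\vmu_c,\vmu_s)$ depends on this pair only through the inner products $\{\inp{\vmu_c}{\x_i}\}$ and $\{\inp{\vmu_s}{\x_i}\}$, equivalently only through the projections $\rmP_{\rmV}\vmu_c$ and $\rmP_{\rmV}\vmu_s$ onto the ($(S,\{y_i\})$-measurable, once we also condition on $\rmP_\rmV\vmu_c,\rmP_\rmV\vmu_s$) span of the data.

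Concretely, the argument I would run is: let $O$ be any orthogonal transformation that fixes $\spn(\{\x_i\}_{i\in S})$ pointwise (i.e. $O$ acts as the identity on the column space of $\rmV$ and arbitrarily on its orthogonal complement). Applying $O$ to $(\vmu_c,\vmu_s)$ leaves $\rmP_\rmV\vmu_c$, $\rmP_\rmV\vmu_s$, all norms, and the orthogonality $\vmu_c\perp\vmu_s$ invariant, hence leaves $\{\x_i\}$ invariant and leaves the prior density on $(\vmu_c,\vmu_s)$ invariant. Therefore the conditional distribution of $(\vmu_c,\vmu_s)$ given $\{\x_i,y_i\}_{i\in S}$ (and the seed) is invariant under the action of the group of such $O$'s. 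Taking any fixed unit vector in the column space to pin down $\rmP_\rmV\vmu_c$ and then restricting attention to the $(d-N)$-dimensional orthogonal complement of the data, the components $\vmuperp_c=(I-\rmP_\rmV)\vmu_c$ and $\vmuperp_s=(I-\rmP_\rmV)\vmu_s$ transform as vectors in $\R^{d-N}$ under a Haar-random orthogonal action of $\mathrm{O}(d-N)$ (one must be slightly careful that $\vmuperp_s$ is additionally constrained to be orthogonal to $\vmuperp_c$, but the simultaneous action of $\mathrm{O}(d-N)$ respects this and acts transitively on such configured pairs with fixed norms). It follows that $\vmuperp_c/\norm{\vmuperp_c}$ is Haar-uniform on the unit sphere of the $(d-N)$-dimensional subspace, and likewise $\vmuperp_s/\norm{\vmuperp_s}$ is uniform on the unit sphere of the further-restricted $(d-N-1)$- (or $(d-N)$-, depending on exact bookkeeping) dimensional subspace orthogonal to $\vmuperp_c$ — which is what the lemma claims.

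The main obstacle, and the step that needs the most care, is the measure-theoretic bookkeeping of \emph{what} to condition on so that the symmetry group acts nontrivially: conditioning on the data alone makes $\rmP_\rmV$ data-dependent, so one cannot naively say ``the orthogonal complement is fixed.'' The clean fix is to condition on the sigma-algebra generated by $(\{y_i\}, \{\rvn_i\}, \rmP_\rmV\vmu_c, \rmP_\rmV\vmu_s, \text{seed})$ — note the span of the data is a deterministic function of $(\{y_i\},\{\rvn_i\},\rmP_\rmV\vmu_c,\rmP_\rmV\vmu_s)$ since $\x_i = y_i\vmu_c + y_i\theta_e\vmu_s+\rvn_i$ and $\vmu_c = \rmP_\rmV\vmu_c + \vmuperp_c$ with $\vmuperp_c$ orthogonal to everything else relevant — so $\rmV$ and $I-\rmP_\rmV$ become measurable with respect to this sigma-algebra, the group of orthogonal maps fixing $\image(\rmV)$ is therefore also measurable, and its action on the residual sphere preserves the conditional law by the invariance of the prior. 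I would then note that since the claimed conclusion (uniformity of the two normalized residuals) holds conditionally on this finer sigma-algebra, it also holds conditionally on the coarser sigma-algebra generated by $\{\x_i,y_i\}$ alone, by the tower property, which is the form the rest of the proof will use. A secondary minor point to handle is degeneracy: if $\vmuperp_c=0$ or $\vmuperp_s=0$ (a probability-zero event for $d>N$) the normalization is undefined, so one states the conclusion on the full-measure event where both are nonzero.
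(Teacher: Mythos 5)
Your opening two paragraphs are in spirit the paper's own argument: the conditional law of $(\vmu_c,\vmu_s)$ given the data is invariant under orthogonal maps that fix the span of the data, because the prior is rotation-invariant and the Gaussian likelihood depends on the means only through their inner products with the data (equivalently, through $P_V\vmu_c$, $P_V\vmu_s$, since $\norm{\vmu_c+\theta_e\vmu_s}^2$ is constant on the prior's support); invariance under a group acting transitively on the unit sphere of the orthogonal complement then gives the claim. However, the step you yourself designate as the crux --- replacing conditioning on the data by conditioning on the finer collection $(\{y_i\},\{n_i\},P_V\vmu_c,P_V\vmu_s,\text{seed})$ --- does not work. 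Since every $\z_i = y_i\x_i$ lies in the span $V$, we have $(\rmI-P_V)\z_i=0$, i.e.\ $(\rmI-P_V)(\vmu_c+\theta_e\vmu_s) = -\,y_i(\rmI-P_V)n_i$ for every $i$ in environment $e$. So once the labels, the noise vectors, and the span (which you assert is measurable with respect to your sigma-algebra) are fixed, the residuals $(\rmI-P_V)\vmu_c$ and $(\rmI-P_V)\vmu_s$ are \emph{determined} (exactly, whenever $\theta_1\neq\theta_2$; their combinations $(\rmI-P_V)(\vmu_c+\theta_e\vmu_s)$ in any case). Conditional on your finer sigma-algebra the normalized residuals are therefore point masses, not Haar-uniform, and the tower property only says that their law given the data is a mixture of point masses --- showing that this mixture is uniform is exactly the lemma, so nothing has been gained. (If, on the other hand, the span is \emph{not} measurable with respect to your sigma-algebra, then the group of rotations fixing $\image(V)$ is not conditionally well defined and the invariance step cannot even be stated.) The root problem is that conditioning on the individual noise vectors is too much information: the lemma is true only because the noise is integrated out.

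There is in fact no obstacle to conditioning on the data alone, and that is what the paper does: by joint rotational invariance, $\P(\vmu_s,\vmu_c\mid\{\z_i\}_{i\in\idxset}) = \P(\rmR\vmu_s,\rmR\vmu_c\mid\{\rmR\z_i\}_{i\in\idxset})$ for every orthogonal $\rmR$; specializing to $\rmR$ with $\rmR\z_i=\z_i$ for all $i$ (a group that is perfectly well defined once we condition on the data, even though $P_V$ is data-dependent) and marginalizing out $P_V\vmu_c,P_V\vmu_s$ gives invariance of the conditional law of the residuals under all rotations of the $(d-N)$-dimensional complement, hence uniformity of each normalized residual. Your likelihood-plus-prior-invariance route from the first paragraph would work just as well and is essentially equivalent. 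Two further loose statements to fix if you pursue this: applying $O$ to $(\vmu_c,\vmu_s)$ with the noise held fixed does \emph{not} leave $\{\x_i\}$ invariant --- what is invariant is the likelihood of the observed data; and $(\rmI-P_V)\vmu_s$ is \emph{not} in general orthogonal to $(\rmI-P_V)\vmu_c$, since orthogonality is not preserved by projection --- fortunately the lemma only asserts the marginal uniformity of each normalized residual on the sphere in the $(d-N)$-dimensional complement, so neither claim is needed.
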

\begin{proof}
Recalling the notation $\z_i = y_i \x_i$, note that $\{\z_i\}_{i\in \idxset}$ are sufficient statistics for $\vmu_s, \vmu_c$ given the training data, i.e., $\P(\vmu_s, \vmu_c \mid \{\z_i\}_{i\in \idxset})=\P(\vmu_s, \vmu_c \mid \{\x_i, y_i\}_{i\in \idxset})$. Furthermore, since the joint distribution of $\vmu_s, \vmu_c, \{\z_i\}_{i\in \idxset}$ is rotationally invariant, we have
\begin{equation*}
    \P(\vmu_s, \vmu_c \mid \{\z_i\}_{i\in \idxset}) = \P(\rmR\vmu_s, \rmR\vmu_c \mid \{\rmR\z_i\}_{i\in \idxset})
\end{equation*}
for any orthogonal matrix $\rmR\in\R^{d\times d}$. Focusing on matrices $\rmR$ that presereve that data, i.e., satisfying $\rmR\z_i = \z_i$ for all $i\in [N]$, we have
\begin{equation*}
    \P(\vmu_s, \vmu_c \mid \{\z_i\}_{i\in \idxset}) = \P(\rmR\vmu_s, \rmR\vmu_c \mid \{\z_i\}_{i\in \idxset}).
\end{equation*}
We may also write this equality as
\begin{flalign*}
    &\P(P_V\vmu_s, P_V\vmu_c, (I-P_V)\vmu_s, (I-P_V)\vmu_c \mid \{\z_i\}_{i\in \idxset}) \\ & \hspace{16pt}= \P(P_V\rmR\vmu_s, P_V \rmR \vmu_c, (I-P_V)\rmR\vmu_s, (I-P_V)\rmR\vmu_c \mid \{\z_i\}_{i\in \idxset}).
\end{flalign*}
The fact that $R$ preserves $\{\z_i\}_{i\in \idxset}$ implies that $P_V \rmR = P_V = \rmR P_V$ and therefore
\begin{equation*}
    \P(P_V\vmu_s, P_V\vmu_c, \vmuperp_s, \vmuperp_c \mid \{\z_i\}_{i\in \idxset}) = \P(P_V\vmu_s, P_V  \vmu_c, \rmR \vmuperp_s,\rmR\vmuperp_c \mid \{\z_i\}_{i\in \idxset}).
\end{equation*}
Marginalizing $P_V\vmu_s, P_V\vmu_c$, we obtain that, conditional on the training data, the distribution of $\vmuperp_s, \vmuperp_c$, is invariant to rotations that preserve the training data. Therefore, the unit vectors in the directions of $\vmuperp_s$  and $\vmuperp_c$ must each be uniformly distributed on the sphere orthogonal to the training data, which has dimension $d-N$. 
\end{proof}
Now we simply need to derive a bound on $\langle\wperp, \vmu_s \rangle$:
\begin{corollary} \label{coro:ortho_bound}
For any $t>0$ as in \lemref{lem:high_prob_events}, with probability at least $1-14\exp(-t^2 / 8)$, all the events in \lemref{lem:high_prob_events} hold and additionally
\begin{align} \label{eq:data_span_bound}
    \left| \langle\wperp, \vmu_s \rangle \right| < \frac{\norm{\vmu_s}}{\sqrt{d-N}}t
    ~~\mbox{and}~~
    \left| \langle\wperp, \vmu_c \rangle \right| < \frac{\norm{\vmu_c}}{\sqrt{d-N}}t.
\end{align}
\end{corollary}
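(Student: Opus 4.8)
The plan is to condition on the training data (and the learning rule's internal randomness) and then invoke \lemref{lem:uniform_ortho_mean} together with concentration of measure on the sphere. Fix $t>0$. By \lemref{lem:high_prob_events}, the event on which all three of its bounds hold has probability at least $1-6\exp(-t^2/2)$, and this event is a function of $\rmG,\vmu_c,\vmu_s$ only. It therefore suffices to show that \eqref{eq:data_span_bound} holds on an event of probability at least $1-4\exp(-t^2/2)$; a union bound then gives the claimed failure probability $6\exp(-t^2/2)+4\exp(-t^2/2)=10\exp(-t^2/2)$.

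For the bound on $\inp{\wperp}{\vmu_c}$, I would condition on the training set $\{\x_i,y_i\}_{i\in\idxset}$ and the algorithm's seed. Then $\hat{\w}$ is determined, so $\wperp$ is a fixed vector lying in the $(d-N)$-dimensional subspace orthogonal to $\mathrm{span}(\{\x_i\}_{i\in\idxset})$, with $\norm{\wperp}\le 1$ since $\hat{\w}$ is normalized to unit norm (if $\wperp=0$ then \eqref{eq:data_span_bound} is trivial, so assume otherwise). Because $\wperp$ is orthogonal to the data, $\inp{\wperp}{\vmu_c}=\inp{\wperp}{\vmuperp_c}$ with $\vmuperp_c=(I-P_V)\vmu_c$, and $\norm{\vmuperp_c}\le\norm{\vmu_c}$ as $P_V$ is an orthogonal projection. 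By \lemref{lem:uniform_ortho_mean}, conditional on this data the unit vector $\widehat{\vmuperp_c}$ is uniform on the unit sphere of that $(d-N)$-dimensional subspace, while $\widehat{\wperp}$ is a fixed unit vector in it. Standard spherical concentration (for instance, by realizing the uniform direction as a normalized standard Gaussian vector, or via the explicit density of the first coordinate of a point on $S^{d-N-1}$) then gives $\bigl|\inp{\widehat{\wperp}}{\widehat{\vmuperp_c}}\bigr| < t/\sqrt{d-N}$ with conditional probability at least $1-2\exp(-t^2/2)$, hence $\bigl|\inp{\wperp}{\vmu_c}\bigr|\le\norm{\vmuperp_c}\,\bigl|\inp{\widehat{\wperp}}{\widehat{\vmuperp_c}}\bigr|<\norm{\vmu_c}\,t/\sqrt{d-N}$. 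Running the identical argument for $\widehat{\vmuperp_s}$ gives the bound on $\bigl|\inp{\wperp}{\vmu_s}\bigr|$ at the cost of another $2\exp(-t^2/2)$. Since neither conditional bound depends on the realized values of the conditioning variables, both hold unconditionally with probability at least $1-4\exp(-t^2/2)$, which finishes the proof.

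The one delicate point I expect to need care is the conditioning structure: one must verify that $\wperp$ is measurable with respect to the $\sigma$-algebra generated by the training data together with the algorithm's seed — so that it is effectively deterministic when the spherical concentration bound is applied — while $\widehat{\vmuperp_c}$ and $\widehat{\vmuperp_s}$ remain conditionally uniform on the orthogonal sphere. This is exactly what \lemref{lem:uniform_ortho_mean} provides, the seed being independent of $(\vmu_c,\vmu_s)$ and of the training sample. Note that $\widehat{\vmuperp_c}$ and $\widehat{\vmuperp_s}$ are \emph{not} independent of one another, since the data model enforces $\vmu_c\perp\vmu_s$, but this is immaterial because we only take a union bound over the two events and never use their joint distribution. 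The same remark explains why the statement covers \emph{any} measurable learning rule simultaneously: conditioning on the data fixes the output $\hat{\w}$, so no additional union bound over learning rules is required.
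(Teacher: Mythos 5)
Your proposal is correct and follows essentially the same route as the paper's proof: condition on the data and the algorithm's randomness so that $\wperp$ is fixed, invoke \lemref{lem:uniform_ortho_mean} for the conditional spherical uniformity of $\normalize{\vmuperp_c}$ and $\normalize{\vmuperp_s}$, apply standard spherical-cap concentration with $t_2=t/\sqrt{d-N}$, use $\norm{\wperp}\le 1$ and $\norm{\vmuperp}\le\norm{\vmu}$, and union bound with the events of \lemref{lem:high_prob_events} to get $6+2+2=10$ in the failure probability. Your extra remarks on measurability and on the (irrelevant) dependence between $\vmuperp_c$ and $\vmuperp_s$ are correct but not needed beyond what the paper already states.
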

\begin{proof}
Note that
\begin{equation*}
    \left| \langle\wperp, \vmu_s \rangle \right|
    =
    \left| \langle\wperp, \vmuperp_s \rangle \right|
    =
    \norm{\vmuperp_s}\norm{\wperp}\left| \left\langle \normalize{\wperp}, \normalize{\vmuperp_s} \right\rangle \right|
    \le 
    \norm{\vmu_s}\left| \left\langle \normalize{\wperp}, \normalize{\vmuperp_s} \right\rangle \right|.
\end{equation*}
Conditional on the training data and the algorithm's randomness, $\normalize{\wperp}$ is a fixed unit vector in the subspace orthogonal to the training data (of dimension $d-N$), while $\normalize{\vmuperp_s}$ is a spherically uniform unit vector in that subspace. Therefore, standard concentration bounds \cite[Lemma~2.2]{ball1997elementary} imply that, for any $t_2>0$ 
\begin{align*}
    \mathbb{P}\left( \left| \left\langle \normalize{\wperp}, \normalize{\vmuperp_s} \right\rangle \right| \ge t_2 \right) \le 2\exp(-(d-N) t_2^2 /2).
\end{align*}
The claimed result follows by taking $t_2 = t/\sqrt{d-N}$, applying the same argument for $\vmu_c$ and taking a union bound.
\end{proof}


\section{Proofs of Main Result}
In this section, we provide the proof of \Cref{prop:neg_result_details}, our main theoretical finding highlighting a fundamental limitation to the robustness of any interpolating classifier. Following the notation of \Cref{sec:app_helpers}, we write a general unit-vector classifier as $\hat{\w} = \sum_{i\in S} \beta_i \z_i + \wperp$, where $\z_i = y_i \x_i$. As explained in the proof sketch at \Cref{sec:negative_result}, in order to show a lower bound on robust accuracy, we show a lower bound on the spurious-to-core ratio  $\frac{\inp{\w}{\vmu_s}}{\inp{\w}{\vmu_c}}$ or equivalently upper bound $\frac{\inp{\w}{\vmu_s}}{\inp{\w}{\vmu_c}}$, which we can write as
\begin{align}\label{eq:core_spu_ratio_expans}
\frac{\langle \w, \vmu_c\rangle}{\langle \w, \vmu_s\rangle}
= \frac{\langle \hat{\w}, \vmu_c\rangle}{\langle \hat{\w}, \vmu_s\rangle} 
= \frac{\norm{\vmu_c}^2}{\norm{\vmu_s}^2} \cdot 
\frac{\ones^\top\beta + \frac{1}{\norm{\vmu_c}^2}\left[\sum_{i\in{\idxset}}{\beta_i\langle n_i, \vmu_c \rangle +\langle \wperp, \vmu_c \rangle}\right]}{(E_1 + \theta_2 E_2)^\top\beta + \frac{1}{\norm{\vmu_s}^2}\left[\sum_{i\in{\idxset}}{\beta_i\langle n_i, \vmu_s \rangle} + \langle \wperp, \vmu_s \rangle\right]}.
\end{align}

We develop the lower bound - and prove \Cref{prop:neg_result_details} - in three steps, each corresponsding to a subsection below. First, we give a lower bound on $(E_1 + \theta_2E_2)^\top \mBeta$ using Lagrange duality (\Cref{lem:spurious_inner_prod_bnd}). Second, in \Cref{lem:residual_upper_bnd}, we bound the residual terms of the form $\frac{1}{\norm{\vmu}^2}\left|\sum_{i\in{\idxset}}{\beta_i \langle n_i, \vmu \rangle} + \langle \wperp, \vmu \rangle\right|$ (for $\vmu \in \{\vmu_c,\vmu_s\}$) using concentration of measure arguments from \Cref{sec:app_helpers}. Finally, we combine these two results with the conditions of \Cref{prop:neg_result_details} to conclude its proof.

\subsection{Lower bounding \texorpdfstring{$(E_1 + \theta_2 E_2)^\top \mBeta$}{}}

The crux of our proof is showing that the term $(E_1 + \theta_2E_2)^\top \mBeta$, i.e., the sum of the contributions of elements from the first environment to $\w$, must grow roughly as $N_1\gamma$ for any interpolating classifier. This will in turn imply a large spurious component in the classifier via manipulation of \eqref{eq:core_spu_ratio_expans}.

\begin{lemma}\label{lem:spurious_inner_prod_bnd}
Conditional on the events in \corref{coro:ortho_bound} (with parameter $t > 0$), if \eqref{eq:assum_spectral} holds and $\w$ has normalized margin at least $\gamma$, we have that
\begin{align}\label{eq:key-negative-bound}
(E_1 + \theta_2E_2)^\top\mBeta \geq \frac{1}{2}\Bigg((N_1 + [\theta_2]_+N_2) \gamma &-\sqrt{2N_2}N_1\norm{\vmu_c}^2 \nonumber \\
&-\sqrt{18N}\cdot \frac{\sqrt{N} + t}{\sqrt{d}} - \sqrt{8 N_2}\left[-\theta_2\right]_+\Bigg),
\end{align}
where $ \left[ z \right]_+ = \max\{x,0\}$ denotes the positive part of $x$.
\end{lemma}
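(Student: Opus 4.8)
I need to lower bound $(E_1 + \theta_2 E_2)^\top \mBeta$ for any classifier $\hat{\w} = \sum_{i \in S}\beta_i \z_i + \wperp$ with normalized margin at least $\gamma$. The normalized margin condition says $y_i \inp{\hat{\w}}{\x_i} > \gamma\sqrt{\sigma^2 d} = \gamma$ (since $\sigma^2 = 1/d$) for all $i$, i.e. $\inp{\hat{\w}}{\z_i} > \gamma$ for all $i$, and $\norm{\hat{\w}} = 1$. Writing $\inp{\hat{\w}}{\z_i} = \sum_j \beta_j \inp{\z_j}{\z_i} + \inp{\wperp}{\z_i} = (\rmZ\rmZ^\top \mBeta)_i$ (the $\wperp$ term vanishes since $\wperp \perp \z_i$), the margin constraint becomes $\rmZ\rmZ^\top\mBeta \succeq \gamma \ones$ componentwise. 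The norm constraint gives $\norm{\hat{\w}}^2 = \mBeta^\top \rmZ\rmZ^\top \mBeta + \norm{\wperp}^2 = 1$, so in particular $\mBeta^\top \rmZ\rmZ^\top\mBeta \le 1$.

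**The dual argument.** So it suffices to lower bound the convex program: minimize $(E_1 + \theta_2 E_2)^\top \mBeta$ subject to $\rmZ\rmZ^\top \mBeta \succeq \gamma\ones$ and $\mBeta^\top \rmZ\rmZ^\top \mBeta \le 1$. I would change variables to $\vu = \rmZ\rmZ^\top\mBeta$ (invertible since $\rmZ\rmZ^\top \succeq \half I_N$ by \lemref{lem:cov_spectral_bound}, using \eqref{eq:assum_spectral}), so $\mBeta = (\rmZ\rmZ^\top)^{-1}\vu$, the objective is $(E_1 + \theta_2 E_2)^\top (\rmZ\rmZ^\top)^{-1}\vu$, and the constraints are $\vu \succeq \gamma\ones$ and $\vu^\top (\rmZ\rmZ^\top)^{-1}\vu \le 1$. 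Then I'd lower bound by replacing $(\rmZ\rmZ^\top)^{-1}$ with $I_N$ plus an error term: write $(\rmZ\rmZ^\top)^{-1} = I_N + \rmE$ where $\norm{\rmE}_{\mathrm{op}}$ is controlled — from \lemref{lem:cov_spectral_bound}, $\norm{\rmZ\rmZ^\top - I_N}_{\mathrm{op}} \le 3\frac{\sqrt N + t}{\sqrt d}$, and since $\rmZ\rmZ^\top \succeq \half I$, we get $\norm{\rmE}_{\mathrm{op}} \le 2 \cdot 3\frac{\sqrt N + t}{\sqrt d} = 6\frac{\sqrt N + t}{\sqrt d}$ (roughly). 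The main term $(E_1 + \theta_2 E_2)^\top \vu = \sum_{i \in \idxset_1} u_i + \theta_2 \sum_{i\in\idxset_2} u_i$: since $u_i \ge \gamma$, the first sum is $\ge N_1 \gamma$; for the second, if $\theta_2 \ge 0$ it contributes $\ge \theta_2 N_2 \gamma = [\theta_2]_+ N_2\gamma$, and if $\theta_2 < 0$ it contributes $\ge \theta_2 \sum_{i\in\idxset_2}u_i \ge -[-\theta_2]_+ \norm{\vu}_{\idxset_2,1}$ which I bound via Cauchy–Schwarz by $-[-\theta_2]_+\sqrt{N_2}\norm{\vu}$, and $\norm{\vu}^2 = \vu^\top\vu \le (1+\norm{\rmE}_{\mathrm{op}})\cdot \vu^\top(\rmZ\rmZ^\top)^{-1}\vu \cdot (\text{factor})$... actually cleaner: $\norm{\vu}^2 \le 2 \vu^\top(\rmZ\rmZ^\top)^{-1}\vu \le 2$ using $\rmZ\rmZ^\top \preceq 2I$ hence $(\rmZ\rmZ^\top)^{-1} \succeq \half I$. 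That gives $\norm{\vu} \le \sqrt 2$, explaining the $\sqrt{8N_2}[-\theta_2]_+ = 2\sqrt 2 \sqrt{N_2}[-\theta_2]_+$ term — wait, I need $\norm{\vu} \le 2$ to match; I'd track constants carefully but the structure is clear. The error term $|(E_1 + \theta_2 E_2)^\top \rmE \vu| \le \norm{E_1 + \theta_2 E_2}\cdot\norm{\rmE}_{\mathrm{op}}\cdot\norm{\vu} \le \sqrt{N}\cdot 6\frac{\sqrt N + t}{\sqrt d}\cdot 2$, giving the $\sqrt{18N}\cdot\frac{\sqrt N + t}{\sqrt d}$ term up to constants.

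**The remaining term and the factor of $1/2$.** The term $-\sqrt{2N_2}N_1\norm{\vmu_c}^2$ is the subtle one — it does not obviously come from the above. I suspect it arises because the true object being bounded is not exactly $(E_1 + \theta_2 E_2)^\top\mBeta$ with the clean margin/norm constraints, but involves the expansion \eqref{eq:core_spu_ratio_expans}: the actual quantity to control for the final proof includes the residual noise inner products, and one must relate $\ones^\top\mBeta$ (which appears in the core numerator) to $(E_1 + \theta_2 E_2)^\top\mBeta$. Concretely, I'd expect that controlling the sign/magnitude of $\ones^\top\mBeta$ — needed so the core inner product $\inp{\hat\w}{\vmu_c}$ is positive and the ratio bound is meaningful — forces a correction of order $N_1 \norm{\vmu_c}^2 \sqrt{N_2}$ via Cauchy–Schwarz against the noise directions projected appropriately; the overall factor $\frac12$ likely comes from the operator-norm bound $\rmZ\rmZ^\top \succeq \frac12 I_N$ converting a bound on $\inp{\hat\w}{\z_i}$-type quantities into a bound on $\mBeta$-coordinates.

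**Main obstacle.** The hard part will be pinning down exactly where the $-\sqrt{2N_2}N_1\norm{\vmu_c}^2$ correction and the leading $\frac12$ come from — i.e. identifying the precise reduction (which auxiliary constraint or concentration event is invoked) that makes this lemma the right intermediate statement for bounding \eqref{eq:core_spu_ratio_expans}. The Lagrange-duality / change-of-variables skeleton above handles the $(N_1 + [\theta_2]_+N_2)\gamma$ main term and the two error terms cleanly, but getting all four terms with the stated constants simultaneously requires carefully choosing how to split the constraint set and which events from \corref{coro:ortho_bound} and \lemref{lem:cov_spectral_bound} to condition on, and I would need to reverse-engineer the author's exact decomposition to match the constants $\sqrt 2$, $\sqrt{18}$, $\sqrt 8$.
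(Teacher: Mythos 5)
Your setup (reducing to the relaxed convex program $\min (E_1+\theta_2E_2)^\top\mBeta$ subject to $\rmZ\rmZ^\top\mBeta\ge\gamma\ones$ and $\mBeta^\top\rmZ\rmZ^\top\mBeta\le1$) matches the paper's first step exactly, but the way you propose to bound its value has a genuine gap. You invoke \lemref{lem:cov_spectral_bound} as if it gave $\norm{\rmZ\rmZ^\top-\I_N}_{\mathrm{op}}\le 3\frac{\sqrt N+t}{\sqrt d}$; it does not. It controls $\norm{\rmZ\rmZ^\top-\E[\rmZ\rmZ^\top]}_{\mathrm{op}}$, and $\E[\rmZ\rmZ^\top]=\I_N+\norm{\vmu_c}^2\ones\ones^\top+\norm{\vmu_s}^2(E_1+\theta_2E_2)(E_1+\theta_2E_2)^\top$ differs from $\I_N$ by rank-one spikes of \emph{constant} operator norm ($N\norm{\vmu_c}^2$ and roughly $N_1\norm{\vmu_s}^2$, only bounded by $c_r$, resp.\ $1/4$, under \eqref{eq:assum_spectral}), not by $O((\sqrt N+t)/\sqrt d)$. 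So your error matrix $\rmE=(\rmZ\rmZ^\top)^{-1}-\I_N$ is not small, and a crude bound $|(E_1+\theta_2E_2)^\top\rmE\vu|\le\sqrt N\,\norm{\rmE}_{\mathrm{op}}\norm{\vu}$ produces terms of order $\sqrt N\,N\norm{\vmu_c}^2$ and, worse, $\sqrt N\,N_1\norm{\vmu_s}^2$. The latter does not appear in \eqref{eq:key-negative-bound} at all, and it is not negligible compared to $N_1\gamma$ precisely in the regime the lemma must cover ($\gamma$ arbitrarily small, $\norm{\vmu_s}^2\asymp 1/N$), so your decomposition as written cannot yield the stated bound.

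The two ingredients you flag as mysterious are exactly the ones produced by handling these spikes \emph{inside} the lemma, not by any reduction involving \eqref{eq:core_spu_ratio_expans} as you conjecture. The paper bounds the program by Lagrange duality and then exhibits an explicit dual certificate: for $\theta_2\le0$ it takes $\lambda=\alpha E_1$ with $\alpha=(1+(\norm{\vmu_c}^2+\norm{\vmu_s}^2)N_1)^{-1}$, and for $\theta_2>0$ it splits $E_1+\theta_2E_2=\theta_2\ones+(1-\theta_2)E_1$ and additionally uses $\lambda=\alpha\ones$ with $\alpha=(1+(\norm{\vmu_c}^2+\norm{\vmu_s}^2)N)^{-1}$. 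This choice of $\alpha$ exactly absorbs the $E_1$-aligned part of both spikes, and $\alpha\ge\frac12$ (via \eqref{eq:assum_spectral}) is the source of the leading factor $\frac12$ — not the eigenvalue bound $\rmZ\rmZ^\top\succeq\frac12\I_N$ as you guessed. The term $-\sqrt{2N_2}N_1\norm{\vmu_c}^2$ is the uncancellable residue of the $\norm{\vmu_c}^2\ones\ones^\top$ spike: $\norm{(\I_N-\alpha\E[\rmZ\rmZ^\top])E_1}=\alpha N_1\norm{\vmu_c}^2\sqrt{N_2}$, i.e.\ the component of $\norm{\vmu_c}^2\ones(\ones^\top E_1)$ that spills onto the $E_2$ coordinates; meanwhile the $\vmu_s$ spike leaves no residue because it is aligned with the certificate, which is why no $\norm{\vmu_s}^2$ error term appears. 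Only the Wishart fluctuation $\rmZ\rmZ^\top-\E[\rmZ\rmZ^\top]$ is bounded in operator norm (giving $\sqrt{18N}\cdot\frac{\sqrt N+t}{\sqrt d}$), and the $\sqrt{8N_2}[-\theta_2]_+$ term comes from a triangle/Cauchy--Schwarz step like the one you describe. To repair your route you would have to treat the rank-two spike structure exactly (e.g.\ via Woodbury or, equivalently, the dual-certificate computation above) rather than fold it into an operator-norm error term.
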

\begin{proof}[Proof of \lemref{lem:spurious_inner_prod_bnd}]
Our strategy for bounding $(E_1 + \theta_2E_2)^\top \mBeta$ begins with writing down the smallest value it can reach for any unit-norm classifier $\hat{\w}$ with normalized margin at least $\gamma$. Recalling that $\hat{\w} = \rmZ^\top \mBeta + \wperp$ (for $\wperp$ such that $\rmZ \wperp = 0$), the smallest possible value of $E_1^\top \mBeta$ is the solution to the following optimization problem:
\begin{align}\label{eq:betas_env1_bnd}
    \min_{\mBeta \in \R^N, \wperp\in \ker(\rmZ)}&~(E_1 + \theta_2E_2)^\top \mBeta \\
    \text{subject to }& \langle {\rmZ^\top \mBeta + \wperp}, y_i\x_i \rangle \geq \gamma \nonumber ~~\forall i\in [N] \\
    & \| \rmZ^\top \mBeta + \wperp \| = 1 \nonumber.
\end{align}
Since $\z_i = y_i \x_i$ and $\rmZ \w_\perp = 0$, the first constraint is equivalent to the vector inequality $Z Z^\top \mBeta \ge \gamma \ones$, and the second constraint is equivalent to $\mBeta^\top \rmZ \rmZ^\top \mBeta = 1-\norm{\wperp}^2$. Relaxing the second constraint, the smallest value of $\left(E_1 + \theta_2 E_2\right)^\top \mBeta$ is bounded from below by the solution to:

\begin{align}\label{eq:betas_env1_bnd_relax}
    \min_{\beta\in{\reals^N}} &~\mBeta^\top (E_1 + \theta_2E_2) \\
    \text{subject to }& \rmZ\rmZ^\top\mBeta \geq \gamma \mathbf{1} \nonumber \\
    & \mBeta^\top \rmZ\rmZ^\top\ \mBeta \leq 1. \nonumber
\end{align}
We now treat separately the two cases where $\theta_2 \leq 0$ and $\theta_2 > 0$.
\paragraph{The case where $\theta_2 \leq 0$.} Take Lagrange multipliers $\lambda\in{\reals_+^N}$ and $\nu \geq 0$, from strong duality the above equals:
\begin{align*}
    \max_{\lambda\in{\reals_+^N, \nu \geq 0}}\min_{\beta\in{\reals^N}}{\mBeta^\top (E_1 + \theta_2E_2) + \lambda^\top(\ones\gamma - \rmZ\rmZ^\top\mBeta) + \half\nu}(\mBeta^\top \rmZ \rmZ^\top \mBeta - 1)
\end{align*}
Optimizing the quadratic form over $\mBeta$, the above becomes:
\begin{align*}
    \max_{\lambda\in{\reals_+^N, \nu\geq 0}}{\lambda^\top\ones\gamma - \half\nu - \half\left( E_1 + \theta_2E_2-\rmZ\rmZ^\top\lambda\right)^\top\left(\nu \rmZ\rmZ^\top\right)^{-1}\left( E_1 + \theta_2E_2-\rmZ\rmZ^\top\lambda\right)}
\end{align*}
Maximizing over $\nu$ this becomes:
\begin{align} \label{eq:lem4_intermediate_lag}
    \max_{\lambda\in{\reals_+^N}}{\lambda^\top\ones\gamma -\sqrt{\left( E_1 + \theta_2E_2-\rmZ\rmZ^\top\lambda\right)^\top\left( \rmZ\rmZ^\top\right)^{-1}\left( E_1 + \theta_2E_2-\rmZ\rmZ^\top\lambda\right)}} := \max_{\lambda\in{\reals_+^N}}{\cL(\lambda)}
\end{align}

Thus, $(E_1 + \theta_2E_2)^\top \mBeta$ is lower bounded by $\cL(\lambda)$, for any $\lambda \in \reals_+^N$. Taking $\lambda = \alpha E_1$ for $\alpha=\left( 1 + \left( \norm{\vmu_c}^2 + \norm{\vmu_s}^2 \right)N_1 \right)^{-1}$, we obtain:
\begin{align*}
    \cL(\lambda) &= N_1 \gamma\alpha - \sqrt{\left( E_1 + \theta_2E_2-\alpha\rmZ\rmZ^\top E_1\right)^\top\left( \rmZ\rmZ^\top\right)^{-1}\left( E_1 + \theta_2E_2 - \alpha\rmZ\rmZ^\top E_1\right)} \\
    &\geq N_1\gamma\alpha - \sqrt{2} \norm{E_1 + \theta_2E_2 - \alpha\rmZ\rmZ^\top E_1} \\
    & \geq N_1\gamma\alpha - \sqrt{2} \norm{E_1 - \alpha\rmZ\rmZ^\top E_1} -\sqrt{2N_2}|\theta_2| \\
    &= N_1\gamma\alpha - \sqrt{2} \norm{\left(\I_N - \alpha \left(\E\left[\rmZ\rmZ^\top\right] + \rmZ\rmZ^\top-\E\left[\rmZ\rmZ^\top\right]\right)\right)E_1} - \sqrt{2N_2}|\theta_2|\\
    &\geq N_1\gamma\alpha -\sqrt{2}\norm{\left(\I_N -\alpha\E\left[\rmZ\rmZ^\top\right]\right)E_1} - \sqrt{2}\norm{\alpha\left( \rmZ\rmZ^\top - \E\left[ \rmZ\rmZ^\top\right]\right)E_1} -\sqrt{2N_2}|\theta_2|
\end{align*}
Here, the first inequality is from our assumption that \eqref{eq:assum_spectral} holds and hence $\rmZ\rmZ^\top \succeq \half\I_N$ and the second is a triangle inequality. Recall the bound $\norm{\rmZ\rmZ^\top - \E\left[\rmZ\rmZ^\top\right]}_{\mathrm{op}}\le 3 \frac{\sqrt{N}+t}{\sqrt{d}}$ from \lemref{lem:cov_spectral_bound} and apply it to obtain:
\begin{align*}
    \cL(\lambda) \geq& N_1\gamma\alpha -\sqrt{2}\norm{\left(\I_N -\alpha\E\left[\rmZ\rmZ^\top\right]\right)E_1} - \alpha -\alpha\sqrt{18N_1}\cdot \frac{\sqrt{N} + t}{\sqrt{d}} -\sqrt{2N_2}|\theta_2| \\
    \geq& N_1\gamma\alpha -\sqrt{2}\norm{\left(\I_N -\alpha\E\left[\rmZ\rmZ^\top\right]\right)E_1} - \alpha -\alpha\sqrt{18N}\cdot \frac{\sqrt{N} + t}{\sqrt{d}} -\sqrt{2N_2}|\theta_2|.
\end{align*}
Let us break down the second term in the bound above:
\begin{align*}
    \norm{\left(\I_N - \alpha\E\left[\rmZ\rmZ^\top\right]\right)E_1} &= \norm{\left(1-\alpha-\alpha N_1\norm{\vmu_s}^2\right)E_1 - \alpha N_1\norm{\vmu_c}^2\ones} \\
    &= \norm{\left(1-\alpha-\alpha N_1\norm{\vmu_s}^2\right)E_1 - \alpha N_1\norm{\vmu_c}^2\left(E_1 + E_2\right)} \\ 
    &= \sqrt{\left(1-\alpha\left(1+ N_1(\norm{\vmu_s}^2+\norm{\vmu_c}^2\right)\right)^2N_1 + \alpha^2 N_1^2\norm{\vmu_c}^4N_2} \\
    &=\alpha N_1\norm{\vmu_c}^2\sqrt{N_2},
\end{align*}
where the final equality used $\alpha\left(1+ N_1(\norm{\vmu_s}^2+\norm{\vmu_c}^2\right) = 1$. 
Overall, we get:
\begin{align} \label{eq:lem4_neg_theta_bnd}
    \mBeta^\top (E_1 + \theta_2 E_2) \geq \cL(\lambda) \geq \alpha \left(N_1\gamma -\sqrt{2N_2}N_1\norm{\vmu_c}^2 -\sqrt{18N}\cdot \frac{\sqrt{N} + t}{\sqrt{d}}\right) - \sqrt{2N_2}|\theta_2|.
\end{align}
The proof is complete by noting that $\alpha \ge 1/2$ due to \eqref{eq:assum_spectral}.
\paragraph{The case where $\theta_2 > 0$.} Before introducing Lagrange multipliers, we revisit \Cref{eq:betas_env1_bnd_relax} and this time we first bound the optimum from below as
\begin{align*}
\min_{\mBeta\in{C(\rmZ, \gamma)}}{\mBeta^\top(E_1 + \theta_2E_2)} \geq \theta_2\min_{\mBeta\in{C(\rmZ, \gamma)}}{ \mBeta^\top(E_1 + E_2)} + (1-\theta_2)\cdot\min_{\mBeta\in{C(\rmZ, \gamma)}}{\mBeta^\top E_1},
\end{align*}
where we used $C(\rmZ, \gamma)$ as a shorthand for the constraints in \Cref{eq:betas_env1_bnd_relax}. By our derivation for the case of $\theta_2 \leq 0$, the second term on the right hand side is readily bounded by the term in \Cref{eq:lem4_neg_theta_bnd} with $\theta_2 = 0$. We are left with bounding the first term, which turns out to be simpler since $E_1 + E_2 = \ones$. We repeat the process of taking the Lagrangian up until \Cref{eq:lem4_intermediate_lag}, and now choose $\lambda = \alpha \ones$ for $\alpha=\left(1 + \left(\norm{\vmu_c}^2 + \norm{\vmu_s}^2\right) N\right)^{-1}$. For completeness, let us rewrite the lower bound on the Lagrangian with these slight changes:
\begin{align*}
\cL(\lambda) &= N \gamma\alpha - \sqrt{\left( \ones -\alpha\rmZ\rmZ^\top \ones\right)^\top\left( \rmZ\rmZ^\top\right)^{-1}\left( \ones -\alpha\rmZ\rmZ^\top \ones \right)} \\
    &\geq N\gamma\alpha - \sqrt{2} \norm{\ones - \alpha\rmZ\rmZ^\top \ones} \\
    &= N\gamma\alpha - \sqrt{2} \norm{\left(\I_N - \alpha \left(\E\left[\rmZ\rmZ^\top\right] + \rmZ\rmZ^\top-\E\left[\rmZ\rmZ^\top\right]\right)\right)\ones} \\
    &\geq N\gamma\alpha -\sqrt{2}\norm{\left(\I_N -\alpha\E\left[\rmZ\rmZ^\top\right]\right)\ones} - \sqrt{2}\norm{\alpha\left( \rmZ\rmZ^\top - \E\left[ \rmZ\rmZ^\top\right]\right)\ones}
\end{align*}
Using again the bound on $\norm{\rmZ\rmZ^\top - \E\left[ \rmZ\rmZ^\top\right]}_{\mathrm{op}}$, the above bound becomes
\begin{align*}
    \cL(\lambda) \geq N\gamma\alpha -\sqrt{2}\norm{\left(\I_N -\alpha\E\left[\rmZ\rmZ^\top\right]\right)\ones} - \alpha -\alpha\sqrt{18 N}\cdot \frac{\sqrt{N} + t}{\sqrt{d}}.
\end{align*}
This time the chosen value for $\alpha$ makes the second term vanish, since
\begin{align*}
    \norm{\left(\I_N - \alpha\E\left[\rmZ\rmZ^\top\right]\right)\ones} &= \norm{\left( 1-\alpha-\alpha N\left(\norm{\vmu_s}^2 + \norm{\vmu_c}^2\right)\right) \ones} = 0.
\end{align*}
\Cref{eq:assum_spectral} again tells us that $\alpha > 1/2$ which leads us to the bound:
\begin{align*}
\mBeta^\top (E_1 + \theta_2 E_2) \geq \half \Bigg( \gamma(\theta_2 N + (1-\theta_2)N_1)  &- (1-\theta_2)\sqrt{2N_2}N_1\norm{\mu_c}^2 \\
 & - \sqrt{18}\left(\theta_2\sqrt{N} + (1-\theta_2)\sqrt{N_1}\right)\cdot \frac{\sqrt{N} + t}{\sqrt{d}} \Bigg) \\
\geq \half \Bigg( \gamma(\theta_2 N_2 + N_1) - \sqrt{2N_2}&N_1\norm{\mu_c}^2 - \sqrt{18N} \cdot \frac{\sqrt{N} + t}{\sqrt{d}} \Bigg).
\end{align*}
Combining the two cases for negative and positive $\theta_2$, we arrive at the desired bound in \Cref{eq:key-negative-bound}.
%
\end{proof}


\subsection{Controlling residual terms}




We now provide a bound on the terms in \eqref{eq:core_spu_ratio_expans} associated with quantities that vanish as the problem dimension grows.
\begin{lemma} \label{lem:residual_upper_bnd}
Conditioned on all the events in \corref{coro:ortho_bound} with parameter $t>0$ (which happen with probability at least $1-14\exp(-t^2 / 8)$) and the additional condition of \lemref{lem:cov_spectral_bound}, we have for $\vmu\in{\{\vmu_c, \vmu_s\}}$:
\begin{align} \label{eq:noise_bounds}
    \frac{1}{\norm{\vmu}^2}\left|\sum_{i\in{\idxset}}{\beta_i \langle n_i, \vmu \rangle} + \langle \wperp, \vmu \rangle\right| \leq \frac{3t}{\norm{\vmu}}\sqrt{\frac{{N}}{{d-N}}}
\end{align}
\end{lemma}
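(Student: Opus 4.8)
The plan is to bound the two constituents of the residual separately and then assemble them, using only events we are already conditioning on. The term $\inp{\wperp}{\vmu}$ is handled immediately by \corref{coro:ortho_bound}, which gives $\abs{\inp{\wperp}{\vmu}} < \norm{\vmu}\,t/\sqrt{d-N}$ for $\vmu\in\{\vmu_c,\vmu_s\}$ on the events in question. For the noise sum $\sum_{i\in\idxset}\beta_i\inp{n_i}{\vmu}$, the key observation is that, by definition of $\rmG = \rmZ - \ones\vmu_c^\top - (E_1+\theta_2E_2)\vmu_s^\top$, the $i$-th row of $\rmG$ is exactly the (sign-flipped) noise vector $\z_i - \vmu_c - \theta_{e(i)}\vmu_s = y_i n_i$ appearing in the expansion \eqref{eq:core_spu_ratio_expans}. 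Hence $(\rmG\vmu)_i = y_i\inp{n_i}{\vmu}$, so $\sum_{i\in\idxset}\beta_i\inp{n_i}{\vmu} = (\mBeta\odot\y)^\top(\rmG\vmu)$, and Cauchy--Schwarz together with $\abs{y_i}=1$ gives $\bigl|\sum_{i\in\idxset}\beta_i\inp{n_i}{\vmu}\bigr| \le \norm{\mBeta}\,\norm{\rmG\vmu}$.

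Next I would supply the two remaining ingredients. The factor $\norm{\rmG\vmu}$ is bounded by \eqref{eq:G_muc_bound}/\eqref{eq:G_mus_bound} of \lemref{lem:high_prob_events}: $\norm{\rmG\vmu} \le t\sqrt{N/d}\,\norm{\vmu}$. The factor $\norm{\mBeta}$ is bounded using the unit-norm normalization of $\hat{\w}$: since $\hat{\w} = \rmZ^\top\mBeta + \wperp$ with $\rmZ\wperp = 0$, we have $1 = \norm{\hat{\w}}^2 = \mBeta^\top\rmZ\rmZ^\top\mBeta + \norm{\wperp}^2 \ge \mBeta^\top\rmZ\rmZ^\top\mBeta \ge \thalf\norm{\mBeta}^2$, where the last inequality is the spectral lower bound $\rmZ\rmZ^\top \succeq \thalf\I_N$ from \lemref{lem:cov_spectral_bound} --- this is precisely the place where the side condition \eqref{eq:assum_spectral} of that lemma is invoked. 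Hence $\norm{\mBeta}\le\sqrt{2}$.

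Combining the three bounds via the triangle inequality and dividing by $\norm{\vmu}^2$ yields
\[
  \frac{1}{\norm{\vmu}^2}\Bigl|\sum_{i\in\idxset}\beta_i\inp{n_i}{\vmu} + \inp{\wperp}{\vmu}\Bigr|
  \;\le\; \frac{t}{\norm{\vmu}}\Bigl(\sqrt{2}\,\sqrt{\tfrac{N}{d}} + \tfrac{1}{\sqrt{d-N}}\Bigr).
\]
Since $d > N \ge 1$, we have $\sqrt{N/d} \le \sqrt{N/(d-N)}$ and $1/\sqrt{d-N} \le \sqrt{N/(d-N)}$, so the parenthetical factor is at most $(1+\sqrt{2})\sqrt{N/(d-N)} \le 3\sqrt{N/(d-N)}$, which is exactly \eqref{eq:noise_bounds}.

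I do not expect a genuine obstacle: this lemma is essentially bookkeeping on top of the high-probability events already established in \lemref{lem:high_prob_events}, \lemref{lem:cov_spectral_bound} and \corref{coro:ortho_bound}. The only step that requires a moment's care is the $\norm{\mBeta}\le\sqrt2$ bound --- one must notice that the $\ell_2$ norm of the coefficient vector is controlled not directly but through the Gram matrix $\rmZ\rmZ^\top$, whose lower spectral bound is what forces us to carry the extra hypothesis \eqref{eq:assum_spectral} into the statement. Everything else is Cauchy--Schwarz and crude comparisons between $d$ and $d-N$.
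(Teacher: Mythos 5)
Your proof is correct and follows essentially the same route as the paper's: Cauchy–Schwarz on $\mBeta^\top\rmG\vmu$ together with the bound $\norm{\rmG\vmu}\le t\sqrt{N/d}\,\norm{\vmu}$ from Lemma~\ref{lem:high_prob_events}, the bound $\norm{\mBeta}\le\sqrt{2}$ extracted from $\rmZ\rmZ^\top\succeq\thalf\I_N$, and Corollary~\ref{coro:ortho_bound} for the $\inp{\wperp}{\vmu}$ term. Your final arithmetic making the constant $3$ explicit is slightly more careful than the paper's, which leaves that last combination implicit.
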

\begin{proof}
We prove the claim for $\vmu_s$; the proof for $\vmu_c$ is analogous.
Recall the random matrix $\G = \rmZ - \ones\vmu_c^\top - E_1\vmu_s^\top\in{\reals^{N\times d}}$ from \Cref{lem:high_prob_events}. From \eqref{eq:G_mus_bound} we get that $\norm{\G\vmu_s} \leq t\sqrt{\frac{N}{d}}\norm{\vmu_s}$ and then:\yc{$\beta \to \boldsymbol{\beta}$?}
\begin{align*}
    \sum_{i\in{\idxset}}{\beta_i\langle n_i, \vmu_s \rangle} = \mBeta^\top \G \vmu_s \leq \|\mBeta\|\|\G\vmu_s\| \leq t\norm{\mBeta}\sqrt{\frac{N}{d}}\norm{\vmu_s}.
\end{align*}
To eliminate $\norm{\mBeta}$ from this bound, we use $\rmZ \rmZ^\top \preceq \half I_N$ \yc{$I \to \rmI$?} due to \Cref{lem:cov_spectral_bound} to write
\begin{align*}
    \frac{1}{\sqrt{2}}\norm{\mBeta} 
    \le \sqrt{\mBeta^\top\rmZ\rmZ^\top\mBeta} 
    \le \sqrt{\mBeta^\top\rmZ^\top\rmZ\mBeta + \norm{\wperp}^2} = \norm{\hat{\w}} = 1.
\end{align*}

Finally, we use \eqref{eq:data_span_bound} from \Cref{coro:ortho_bound} to bound $\left| \langle \wperp, \vmu \rangle\right|$.
\end{proof}


\subsection{Proof of \texorpdfstring{\Cref{prop:neg_result_details}}{}} \label{sec:prop1_full_proof}

\yc{todo: restate}
\begin{repproposition}{prop:neg_result_details}
There are universal constants $c_r\in(0,1)$ and $C_d, C_r\in(1,\infty)$, such that, for any target normalized $\gamma$, $\theta_2$ such that $\theta_2 > - N_1\gamma / \sqrt{288 N_2}$, and failure probability $\delta\in(0,1)$, if 
\begin{align*} 
    &\max\{r_s^2, r_c^2\} \le \frac{c_r}{N}
    ~~,~~
    \frac{r_s^2}{r_c^2} \ge C_r \left( 1 + \frac{\sqrt{N_2}}{(N_1 + [\theta_2]_+ N_2)\gamma}\right)
    ~~\mbox{ and}~~ \\
    &d \geq C_d \frac{N}{\gamma^2 N_1^2 r_c^2}\log \frac{1}{\delta},
\end{align*}
then with probability at least $1-\delta$ over the drawing of $\vmu_c, \vmu_s$ and $(S_1, S_2)$ as described in \thmref{thm:main_statement}, any $\hat{\w}\in{\reals^d}$ that is a measurable function of $(S_1, S_2)$ and separates the data \yc{we should be clearer about what ``any model'' and ``the data'' means. Theorem 1 does a reasonable job I think but we should either repeat this here or refer to the appropriate formal definition} with normalized margin larger than $\gamma$ has robust error at least $0.5$.
\end{repproposition}
\begin{proof}[Proof of \Cref{prop:neg_result_details}]
Let $t\ge \sqrt{8\log \frac{14}{\delta}}$, so that the events described in the previous lemmas and corollaries all hold with probability at least $1-\delta$. Note that for $c_r \le 1/64$ we have 
\begin{equation}\label{eq:half-spectral-ass-1}
    \sqrt{N}(\norm{\vmu_c}+\norm{\vmu_s}) \le \frac{1}{4}
\end{equation}
and (since $\gamma \le \frac{1}{4\sqrt{N}}$)
\begin{equation*}
    d \ge \frac{C_d}{10} \frac{1}{\gamma^2} \frac{N t^2}{N_1^2\norm{\vmu_c}^2} 
    \ge \frac{C_d}{10c_r} \frac{N t^2}{N_1\gamma^2}  \ge
    \frac{16C_d}{10c_r} \frac{N^2 t^2}{N_1} N  \ge \frac 64C_d N t^2.
\end{equation*}
Consequently, for $C_d \ge 1$ 
\begin{equation}\label{eq:half-spectral-ass-2}
    \frac{\sqrt{N}+t}{\sqrt{d}} \le 2\sqrt{\frac{1}{64 C_d}} \le \frac{1}{4}.
\end{equation}
Combining \Cref{eq:half-spectral-ass-1,eq:half-spectral-ass-2}, we see that the condition in \eqref{eq:assum_spectral} holds. 

Therefore, we may apply \Cref{lem:spurious_inner_prod_bnd}; we now argue that under the assumptions of \Cref{prop:neg_result_details} the lower bound on $(E_1 + \theta_2E_2)^\top \mBeta$ simplifies to a constant multiple of $N_1 \gamma$. First, taking $c_r \le 1/9$ and $C_r \ge 1$, we have
\begin{equation*}
    \sqrt{2N_2} N_1\norm{\vmu_c}^2 \le 
    \frac{\sqrt{2N_2}N_1 \norm{\vmu_s}^2}{C_r \left( 1 + \frac{\sqrt{N_2}}{N_1\gamma}\right)}
    \le N_1 \gamma \frac{\sqrt{2}N_1 \norm{\vmu_s}^2}{C_r} \le  N_1 \gamma \frac{\sqrt{2}c_r}{C_r} \le
    \frac{1}{6} N_1 \gamma.
\end{equation*}
Second, using again $c_r\le 1/64$ and taking $C_d \ge 180$, 
\begin{equation*}
    \sqrt{18}\left([\theta_2]_+ \sqrt{N} + (1-[\theta_2]_+)\sqrt{N_1}\right) \frac{\sqrt{N}+t}{\sqrt{d}} \le 
    N_1 \gamma \frac{\sqrt{18}}{\sqrt{C_d/10}} \frac{\sqrt{N}+t}{t\sqrt{N}} \sqrt{N}\norm{\vmu_c} \le \frac{1}{6} N_1 \gamma.
    %
\end{equation*}
Finally, due to our condition on $\theta_2$ in the proposition statement, we have $\sqrt{8N_2}\left[ -\theta_2 \right]_+\leq N_1\gamma / 6$. Substituting all these into \eqref{eq:key-negative-bound}, we conclude that under our assumptions $(E_1 +\theta_2 E_2)^\top \mBeta \ge \frac{1}{4}(N_1 + [\theta_2]_+ N_2)\gamma$. 

Next, we combine the lower bound on $(E_1 + \theta_2 E_2)^\top \mBeta$ with \Cref{lem:residual_upper_bnd} to handle the denominator and numerator in the RHS of \eqref{eq:core_spu_ratio_expans}. Beginning with the numerator, we have
\begin{equation*}
    \ones^\top\mBeta + \frac{1}{\normcore^2}\left[\sum_{i\in{\idxset}}{\beta_i\langle n_i, \vmu_c \rangle} + \langle \wperp, \vmu_c \rangle\right] \le (E_1 + \theta_2 E_2)^\top \mBeta + (1-\theta_2)\norm{E_2}\norm{\mBeta} +  \frac{3t}{\normcore}\sqrt{\frac{N}{d-N}}. \\
\end{equation*}
As argued in the proof of \Cref{lem:residual_upper_bnd}, we have $\norm{\mBeta}\le \sqrt{2}$ and therefore $(1-\theta_2)\norm{E_2}\norm{\mBeta}\le \sqrt{8N_2}$. 
Substituting again our assumptions $d$ (which imply $d>2N$), and taking $C_d \ge 64\cdot 180$, we have
\begin{equation*}
    \frac{3t}{\normcore}\sqrt{\frac{N}{d-N}}\le \frac{\sqrt{18}t}{\norm{\vmu_c}}\sqrt{d}
    \le N_1 \gamma \sqrt{\frac{180}{C_d}} \le \frac{1}{8}N_1 \gamma.
\end{equation*}

For the denominator, noting $\norm{\vmu_c} \le \norm{\vmu_s}$ by our assumption, we may similarly write
\begin{equation*}
    (E_1 + \theta_2 E_2)^\top\mBeta + \frac{1}{\norm{\vmu_s}^2}\left[\sum_{i\in{\idxset}}{\beta_i\langle n_i, \vmu_s \rangle} + \langle \wperp, \vmu_s \rangle\right] \ge (E_1 + \theta_2 E_2)^\top\mBeta - \frac{1}{8}N_1 \gamma.
\end{equation*}
Consequently (since $(E_1 + \theta_2 E_2)^\top\mBeta \ge \frac{1}{4}N_1 \gamma$), we have that the denominator is nonnegative. (If the numerator is not positive, $\w$ will have error greater than $1/2$ for $\theta=0$). Substituting back to \eqref{eq:core_spu_ratio_expans} and using the lower bound $(E_1 + \theta_2 E_2)^\top\mBeta \ge \frac{1}{4}N_1 \gamma$, we get
\begin{align*}
\frac{\langle \w, \vmu_c\rangle}{\langle \w, \vmu_s\rangle} \frac{\norm{\vmu_s}^2}{\norm{\vmu_c}^2} \le \frac{(E_1 + \theta_2 E_2)^\top\mBeta + \sqrt{8N_2} + \frac{1}{8}N_1 \gamma}{(E_1 + \theta_2 E_2)^\top\mBeta - \frac{1}{8}N_1 \gamma} &\le 
\frac{ \frac{1}{4}\left(N_1 + [\theta_2]_+N_2\right)\gamma + \sqrt{8N_2} + \frac{1}{8}N_1 \gamma}{ \frac{1}{4}\left( N_1 + [\theta_2]_+N_2\right) \gamma - \frac{1}{8}N_1 \gamma} \\
& \le 3 + \frac{\sqrt{512 N_2}}{\left( N_1 + [\theta_2]_+N_2\right) \gamma}.
\end{align*}
Therefore, for $C_r \ge 32$ we have $\frac{\langle \w, \vmu_s\rangle}{\langle \w, \vmu_c\rangle} \ge 1$ as required. Since the error of classifier $\w$ in environment with parameter $\theta$ is
\begin{align*}
    Q\left(\frac{\inp{\w}{\vmu_c}}{\sigma\norm{\w}}\left( 1 + \theta\frac{\inp{\w}{\vmu_s}}{\inp{\w}{\vmu_c}} \right)\right),
\end{align*}
(where $Q(t) := \sP(\N(0; 1) > t)$ is the Gaussian tail function), the fact that $\frac{\inp{\w}{\vmu_s}}{\inp{\w}{\vmu_c}}\ge 1$ implies that there exists $\theta \in [-1,1]$ for which the error is $Q(0)=0.5$, implying the stated bound on the robust error.
\end{proof}

\section{Lower Bounds On the Achievable Margin} \label{sec:achievable_margin}
We now argue that, in our model, a simple signed-sample-mean estimator interpolates the data with normalized margin scaling as $1/\sqrt{N}$. This fact establishes the first part of \Cref{thm:main_statement}. 

\newcommand{\wmean}{\w_{\mathrm{mean}}}
\begin{proposition}\label{prop:achievable_margin}
There exist universal constants $c_r',C_d'>0$ such that, in the \learningsetting with parameters $N_1, N_2, d>0$,  $\vmu_c, \vmu_s\in\R^d$, $\theta_1,\theta_2\in[-1,1]$ and $\sigma^2 = 1/d$, for any $\delta\in (0,1/2)$ if
\begin{equation*}
    \max\{\normcore, \normspu\} \le \frac{c_r'}{N}~~,~~\theta_1 N_1 \ge -\theta_2 N_2~~\mbox{and}~~
    d \ge C_d' N^2 \log \left(\frac{1}{\delta}\right)
\end{equation*}
then with probability at least $1-\delta$, the signed-sample-mean estimator $\wmean = \frac{1}{N}\sum_{i=1}^{N} y_i x_i$ obtains normalized margin of at least $\frac{1}{\sqrt{8N}}$.  
\end{proposition}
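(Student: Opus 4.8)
The plan is to exploit the near-orthogonality of the noise vectors in high dimension. Write $\z_i=y_i\x_i=\vmu_i+y_in_i$ with $\vmu_i:=\vmu_c+\theta_{e(i)}\vmu_s$ (where $e(i)\in\{1,2\}$ is the environment of sample $i$), so that $\wmean=\bar\vmu+\bar n$, where $\bar\vmu:=\vmu_c+\bar\theta\vmu_s$, $\bar\theta:=(\theta_1N_1+\theta_2N_2)/N$, and $\bar n:=\tfrac1N\sum_i y_in_i\sim\N(0,\tfrac{\sigma^2}{N}\I_d)$. Since $\sigma^2d=1$, proving that $\wmean$ has normalized margin (\defref{def:margin}) at least $\tfrac1{\sqrt{8N}}$ is equivalent to proving $\langle\wmean,\z_i\rangle\ge\tfrac1{\sqrt{8N}}\norm{\wmean}$ for every $i\in[N]$. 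So the task reduces to lower bounding $\min_i\langle\wmean,\z_i\rangle$ and upper bounding $\norm{\wmean}$.

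For the lower bound I would expand $\langle\wmean,\z_i\rangle=\langle\bar\vmu,\vmu_i\rangle+y_i\langle\bar\vmu,n_i\rangle+\langle\bar n,\vmu_i\rangle+y_i\langle\bar n,n_i\rangle$ and split off the diagonal term $y_i\langle\bar n,n_i\rangle=\tfrac1N\norm{n_i}^2+\tfrac{y_i}N\langle\sum_{j\neq i}y_jn_j,n_i\rangle$. The term $\tfrac1N\norm{n_i}^2$ is the signal: a $\chi^2$ tail bound (of the type used in \secref{sec:app_helpers}) gives $\norm{n_i}^2\ge\tfrac78\sigma^2d=\tfrac78$ for all $i$ at once with probability $\ge1-\delta/5$, using $d\gtrsim\log(N/\delta)$, which follows from the hypothesis. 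Every other term is a small fluctuation: $|\langle\bar\vmu,\vmu_i\rangle|\le\norm{\bar\vmu}\,\norm{\vmu_i}\le 4(c_r')^2/N^2$ deterministically (using $\norm{\vmu_c},\norm{\vmu_s}\le c_r'/N$, $|\bar\theta|\le1$ and $\vmu_c\perp\vmu_s$); conditional on $(\vmu_c,\vmu_s)$ the Gaussian inner products $\langle\bar\vmu,n_i\rangle$ and $\langle\bar n,\vmu_i\rangle$ are centered with standard deviations $\sigma\norm{\bar\vmu}$ and $\tfrac{\sigma}{\sqrt N}\norm{\vmu_i}$, so by a union bound they are $O(\mathrm{polylog}(N/\delta))/N^2$; and conditional on $n_i$ (and on the event $\norm{n_i}^2\le\tfrac98$) the cross term $\tfrac{y_i}N\langle\sum_{j\neq i}y_jn_j,n_i\rangle$ is a centered Gaussian with standard deviation $\tfrac1N\sqrt{\tfrac98(N-1)\sigma^2}$, so a union bound over $i$ bounds its absolute value by $\tfrac1N\sqrt{\tfrac98(N-1)\sigma^2\cdot2\log(10N/\delta)}\le\sqrt{\tfrac{9\log(10N/\delta)}{4Nd}}$. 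Feeding in $d\ge C_d'N^2\log(1/\delta)$ and $\delta<\tfrac12$, this last (dominant) error term is at most $\tfrac1N\sqrt{\tfrac{9}{4C_d'}\cdot\tfrac{\log(10N/\delta)}{N\log(1/\delta)}}\le\tfrac1{8N}$ once $C_d'$ is a large enough universal constant, since $\tfrac{\log(10N/\delta)}{N\log(1/\delta)}$ is bounded by a universal constant (the $\log N/N$ factor is bounded). Collecting terms, $\langle\wmean,\z_i\rangle\ge\tfrac7{8N}-\tfrac1{8N}-O(1/N^2)\ge\tfrac5{8N}$ for all $i$ on this event.

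For the denominator, $\norm{\wmean}\le\norm{\bar\vmu}+\norm{\bar n}\le\tfrac{2c_r'}N+\norm{\bar n}$, and $\norm{\bar n}^2\sim\tfrac{\sigma^2}N\chi^2_d$ gives $\norm{\bar n}^2\le\tfrac98\cdot\tfrac{\sigma^2d}N=\tfrac9{8N}$ with probability $\ge1-\delta/5$; hence $\norm{\wmean}\le\tfrac1{\sqrt N}\big(\sqrt{9/8}+2c_r'\big)$, so $\tfrac1{\sqrt{8N}}\norm{\wmean}\le\tfrac{\sqrt{9/8}+2c_r'}{\sqrt8\,N}<\tfrac5{8N}$ for $c_r'$ a small enough universal constant. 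Together with the previous paragraph and a union bound over the $O(N)$ concentration events, $\langle\wmean,\z_i\rangle>\tfrac1{\sqrt{8N}}\norm{\wmean}$ for all $i$ with probability at least $1-\delta$, which is the claim (note $N=N_1+N_2\ge2$ throughout, so the $O(1/N^2)$ terms are genuinely negligible against the $\Theta(1/N)$ signal; the hypothesis $\theta_1N_1\ge-\theta_2N_2$ enters only as $\bar\theta\ge0$ and is not needed for the magnitude estimates, being retained for consistency with the rest of \thmref{thm:main_statement}).

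I expect the only real obstacle to be the bookkeeping around the dominant error term $\sqrt{\log(N/\delta)/(Nd)}$ arising from the off-diagonal noise--noise inner products after the union bound over the $N$ data points: one must check it is genuinely $o(1/N)$. This works precisely because $d$ is taken \emph{quadratically} (not merely linearly) large in $N$, so the extra $1/\sqrt N$ in the denominator absorbs the $\sqrt{\log N}$ from the union bound with room to spare; one should also confirm that the $\mathrm{polylog}$ factors inside the smaller $O(1/N^2)$ terms are dominated, which is immediate. All the underlying tail bounds ($\chi^2$ concentration for $\norm{n_i}^2$ and $\norm{\bar n}^2$, Gaussian tail bounds for the cross terms) are standard and can be imported essentially verbatim from the helper lemmas in \secref{sec:app_helpers} (e.g.\ \lemref{lem:high_prob_events}).
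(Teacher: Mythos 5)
Your proof is correct, and it reaches the same two intermediate facts as the paper—namely a lower bound of order $1/N$ on $\min_i \langle \wmean, y_i\x_i\rangle = \min_i\frac{1}{N}[\rmZ\rmZ^\top\ones]_i$ and an upper bound of order $1/\sqrt{N}$ on $\norm{\wmean}$—but by a different concentration route. The paper does not expand entrywise: it invokes \lemref{lem:cov_spectral_bound} to get the operator-norm bound $\norm{\rmZ\rmZ^\top - \E[\rmZ\rmZ^\top]}_{\mathrm{op}} \le 3\frac{\sqrt{N}+t}{\sqrt{d}} \le \frac{1}{\sqrt{4N}}$ (which rests on singular-value concentration for Gaussian matrices), then reads off $[\rmZ\rmZ^\top\ones]_i \ge [\E\rmZ\rmZ^\top\ones]_i - \norm{\rmZ\rmZ^\top-\E\rmZ\rmZ^\top}_{\mathrm{op}}\norm{\ones} \ge \frac12$ and $\norm{\rmZ^\top\ones}\le 2\sqrt{N}$ from $\rmZ\rmZ^\top\preceq 2\rmI_N$. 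Your version replaces this single matrix-concentration step with scalar $\chi^2$ and Gaussian tail bounds plus a union bound over the $N$ coordinates, isolating $\frac1N\norm{n_i}^2$ as the signal and checking that the off-diagonal noise--noise term $\sqrt{\log(N/\delta)/(Nd)}$ is absorbed by $d\gtrsim N^2\log(1/\delta)$—exactly the place where the quadratic dependence of $d$ on $N$ is needed, and the same place it is needed in the paper's bound $3\frac{\sqrt N + t}{\sqrt d}\le\frac{1}{\sqrt{4N}}$. Your approach is more elementary and self-contained (and slightly sharper in the constants: the paper's own combination of $\frac12$ and $2\sqrt N$ technically yields $\frac{1}{4\sqrt N}=\frac{1}{\sqrt{16N}}$ rather than the stated $\frac{1}{\sqrt{8N}}$, whereas your $\frac{5}{8N}$ versus $\frac{\sqrt{9/8}+2c_r'}{\sqrt 8 N}$ comparison clears the stated threshold); the paper's approach buys brevity by reusing the Wishart machinery already built for the negative result. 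Your side remark that the hypothesis $\theta_1 N_1\ge-\theta_2 N_2$ is not load-bearing for the magnitude estimates is also accurate: in the paper it only enters through the sign of the rank-one perturbation of $\E[\rmZ\rmZ^\top]$, whose contribution is in any case $O((c_r')^2/N)$ in absolute value.
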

\begin{proof}
Using the notation defined in the beginning of \Cref{sec:app_helpers}, \yc{make the notaiton a separate subsection?} we note that $\wmean = \frac{1}{N} \rmZ^\top \ones$ and (for $\sigma^2 d = 1$) its normalized margin is
\begin{equation*}
    \min_{i\in [N]} \frac{y_i \langle \x_i, \wmean \rangle}{\norm{\wmean}} =  \min_{i\in [N]} \frac{[\rmZ \wmean]_i }{\norm{\wmean}} = \min_{i\in [N]}\frac{[\rmZ \rmZ^\top \ones]_i}{\norm{\rmZ^\top \ones}}.
\end{equation*}
Substituting the assumed bounds on $d$ and $\norm{\vmu_c}, \norm{\vmu_s}$ into \Cref{lem:cov_spectral_bound} (with $t=\sqrt{8\log\frac{1}{\delta}}\ge \sqrt{2\log\frac{6}{\delta}}$), it is easy to verify that for sufficiently small $c_r'$ and sufficiently large $C_d'$, the condition in \eqref{eq:assum_spectral} holds, and therefore
\begin{equation*}
    \norm{\rmZ \rmZ^\top - \E \rmZ \rmZ^\top}_{\mathrm{op}} \le  3\frac{\sqrt{N}+t}{\sqrt{d}} \le \frac{1}{\sqrt{4N}},
\end{equation*}
with the final inequality following by choosing $C_d'$ sufficiently large. \Cref{lem:cov_spectral_bound} then also implies that $\rmZ \rmZ^\top \preceq 2 I_N$.

Noting that $\E \rmZ \rmZ^\top = I_N + \norm{\mu_c}^2 \ones \ones^\top + \norm{\mu_s}^2 (\theta_1 E_1 + \theta_2 E_2)(\theta_1 E_1 + \theta_2 E_2)^\top$, we have that, for all $i\in[N]$,
\begin{equation*}
    [\rmZ \rmZ^\top \ones]_i \ge [\E \rmZ \rmZ^\top \ones]_i - 
    \norm{\rmZ \rmZ^\top - \E \rmZ \rmZ^\top}_{\mathrm{op}}\norm{\ones} \ge 1 - 
    \frac{1}{\sqrt{4N}}\norm{\ones} = \frac{1}{2}.
\end{equation*}
Moreover, $\rmZ \rmZ^\top \preceq 2 I_N$ implies that
\begin{equation*}
    \norm{\rmZ^\top \ones} = \sqrt{\ones^\top \rmZ \rmZ^\top \ones} \le 2\norm{\ones} = 2\sqrt{N}.
\end{equation*}
Combining the above two displays yields the claimed margin bound.
\end{proof}

\yc{Maybe add a short discussion about the dimension probably only really needing to scale as $N \log^2\frac{N}{\delta}$ rather than $N^2 \log\frac{1}{\delta}$; we are getting the cruder bound by being heavy-handed with the Wishart concentration bounds}

\subsection{Margin for invariant classifiers}\label{ssec:invaraint-margin-calculation}

We now lower bound the margin achieved by the invariant classifier $\vw=\vmu_c$. To that end, note that $y_i\inp{\vmu_c}{\x_i} / \norm{\vmu_c} \sim \mathcal{N}(\norm{\vmu_c}; \sigma^2)$ for all $i\in \idxset$. Therefore, taking $\sigma^2 = 1/d$, with probability at least $1-\delta$ we have, for all $i\in \idxset$
\begin{equation*}
    \frac{y_i\inp{\vmu_c}{\x_i}}{\norm{\vmu_c} \sqrt{\sigma^2d}} \ge \norm{\vmu_c} - \frac{1}{\sqrt{d}} Q^{-1}\left(\frac{\delta}{N}\right) \ge 
    \norm{\vmu_c} -  \sqrt{\frac{2\log \frac{N}{\delta}}{d}}.
\end{equation*}

Substituting the choices
\begin{equation*}
    \norm{\vmu_c}^2 = \Theta\left( \frac{1}{N\left(1+\frac{\sqrt{N_2}}{N_1\gamma}\right)}\right)
    ~~\mbox{and}~~
    d = \Omega\left(\frac{\log\frac{1}{\delta}}{N_{\min}^2 \norm{\vmu_s}^4}\right) \ge \Omega\left(\frac{\log\frac{N}{\delta}}{\norm{\vmu_c}^2}\right)
\end{equation*}
form the proof of \Cref{thm:main_statement} (see \Cref{sec:app_main_statement_proof}), we obtain that, with probability at least $1-\delta$, the normalized margin of $\mu_c$ is 
\begin{equation*}
    \Omega(\norm{\vmu_c}) = \Omega\left( \frac{1}{\sqrt{N\left(1+\frac{\sqrt{N_2}}{N_1\gamma}\right)}}\right)=
    \Omega\left(\min\left\{\frac{1}{\sqrt{N}}, \sqrt{\frac{\gamma}{N_2}}\right\}\right).
\end{equation*}

In addition, letting $n_i\sim\mathcal{N}(0;\sigma^2 \rmI_d)$, we may also consider the invariant classifier
\begin{equation*}
    \vw = \vmu_c + \frac{1}{N} \sum_{i=1}^N n_i = \frac{1}{N}\sum_{e\in\{1,2\}} \sum_{(\vx,y)\in S_i} (y_i\vx_i - \theta_e \vmu_s).
\end{equation*}
The proof of \Cref{prop:achievable_margin} (with $\theta_1=\theta_2=0$) shows that, under the assumptions of that proposition, the classifier $\vw$ defined above attains margin of $\frac{1}{4\sqrt{N}}$ with high probability.

We emphasize once more that, while the discussion above shows the existence of invariant classifiers with good margin, our main results proves that these classifiers may be unlearnable from the finite samples $S_1$ and $S_2$. To show this existence numerically too, we run our simulations from \Cref{sec:validation} and add a model that is fitted on the features where $\vmu_s$ is removed (i.e. the new datapoints are $\rvx_i - \theta_e\vmu_s$ for each point $\rvx_i\in{S_e}$). We observe in \Cref{fig:invariant_interpolator_simulation} that for a sufficiently large dimension the model is both interpolating and has high robust accuracy, demonstrating the existence discussed above. We emphasize again that this model cannot necessarily be learned by an algorithm that receives the original $(S_1, S_2)$ (before the removal of $\vmu_s$).

\begin{figure}
    \centering
    \includegraphics[width=1.\linewidth]{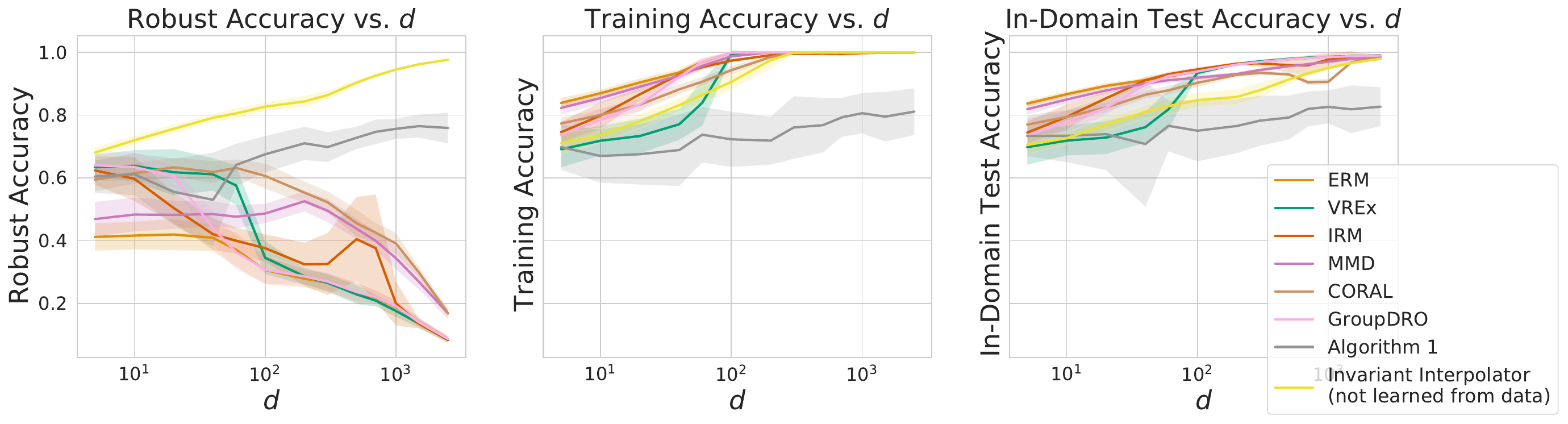}
    \caption{Simulation from \Cref{sec:validation} with an added model trained after removing the spurious feature. This demonstrated the existence of an invariant interpolator, yet our theoretical results suggest that this type of model cannot be learned by an interpolating learning rule}
    \label{fig:invariant_interpolator_simulation}
\end{figure}

\section{Analysis of \Cref{alg:two_phase_learning}}
The proof that Algorithm~\ref{alg:two_phase_learning} indeed achieves a non-trivial robust error will require some definitions and more mild assumptions which we now turn to describe.\\
\textbf{Definitions.} Denote the first-stage training set indices by $S$, where $|S|=N$ and second stage ``fine-tuning" set by $|D|=M$. Let us denote:
\begin{align*}
    \bar{\rvn}_e &= \frac{1}{N_{e}}\sum_{i\in{S_{e}}}{n_i}, ~ \bar{\rvm}_e = \frac{1}{M_{e}}\sum_{i\in{D_{e}}}{n_i}, ~ \bar{\rvm}_{e, 1} = \frac{1}{M_{e,1}}\sum_{i\in{D_{e,1}}}{n_i}.
\end{align*}
Models will be defined by:
\begin{align*}
    \w_e :=\:& \frac{1}{N_e}{\sum_{i\in{S_e}}{y_i\x_i}} = \mu_c + \theta_e\mu_s + \bar{\rvn}_e, \quad e\in{\{1,2\}}, \\
    f_{\rvv}(x; S) =\:&~ \langle v_1\cdot\w_1 + v_2\cdot\w_2, \x \rangle.
\end{align*}
The Equalized Opportunity (EOpp) constraint is:
\begin{align*}
    \hat{T}_1(f_\rvv; D, S) &= \hat{T}_2(f_\rvv; D) \\
    \hat{T}_e(f_\rvv; D, S) &= \frac{1}{M_{e,1}}\sum_{i\in{D_{e,1}}}{f_\rvv(\x_i)}
\end{align*}
\textbf{Additional Assumptions} We assume w.l.o.g $\theta_2 > \theta_1$, define $\Delta:=\theta_2-\theta_1 > 0$ and $r_{\mu} = \frac{\norm{\mu_s}}{\norm{\mu_c}} > 1$. We consider $r_\mu, \Delta$ as fixed numbers. That is, they do not depend on $N, d$ and other parameters of the problem. Also define $r:=\frac{\Delta\theta_{\max}}{\Delta + 4\theta_{\max}}$, where $\theta_{\max} := \mathrm{arg}{\max}\{|\theta_1|, |\theta_2|\} \leq 1$.
The following additional assumptions will be required for our concentration bounds.
\begin{assumption} \label{ass:norm_constraints_pos_result}
Let $t > 0$ be a fixed user specified value, which we define later and will control the success probability of the algorithm. We will assume that for each $e \in{\{1,2\}}$ and some universal constants $c_c, c_{s} > 0$:
\begin{align*}
    \norm{\mu_s}^2 &\geq t\sigma^2 c_s\max\left\{\frac{1}{r^2 N_e}, \frac{1}{(r\Delta)^2M_{e,1}}, \frac{\sqrt{d}}{M_{e, 1}r\Delta} \right\} \\
    \norm{\mu_c}^2 &\geq t\sigma^2 c_c\max\left\{\frac{1}{\Delta^2N_e}, \frac{r^2_\mu}{\left( \Delta^2 M_{e,1} \right)} , \frac{ r_{\mu}^2}{\Delta^2 M_e}, \frac{\sqrt{d}}{M_{e, 1}\Delta^2}, \frac{\sqrt{d}}{M_{e} \Delta} \right\}
\end{align*}
\end{assumption}

\textbf{Analyzing the EOpp constraint.} Writing the terms defined above in more detailed form gives:
\begin{align*}
    \epsilon_e(\rvv) =& \langle \bar{\rvm}_{e, 1}, v_1\left(\mu_c + \theta_1\mu_s + \bar{\rvn}_1\right) + v_2\left(\mu_c + \theta_2\mu_s + \bar{\rvn}_2\right) \rangle \\
    \delta_e(\rvv) =& \langle \bar{\rvm}_{e}, v_1\left(\mu_c + \theta_1\mu_s + \bar{\rvn}_1\right) + v_2\left(\mu_c + \theta_2\mu_s + \bar{\rvn}_2\right) \rangle \\
    \hat{T}_{e}(f_{\rvv}; D, S) =& (v_1 + v_2)\norm{\mu_c}^2 + (v_1\theta_1 + v_2\theta_2)\theta_e \norm{\mu_s}^2 + \\
    &\langle \mu_c + \theta_e\mu_s, v_1\bar{\rvn}_1 + v_2\bar{\rvn}_2 \rangle + \epsilon_e(\rvv)
\end{align*}
So the EOpp constraint is:
\begin{align} \label{eq:EOpp_constraint}
    v_1&\left[\theta_1\norm{\mu_s}^2 + \langle\bar{\rvn}_1, \mu_s\rangle\right]\theta_1 + v_2 \left[\theta_2\norm{\mu_s}^2 + \langle \bar{\rvn}_2, \mu_s \rangle\right] \theta_1 + \epsilon_1(\rvv) = \nonumber\\
    &v_1\left[\theta_1\norm{\mu_s}^2 + \langle\bar{\rvn}_1, \mu_s\rangle\right]\theta_2 + v_2 \left[\theta_2\norm{\mu_s}^2 + \langle \bar{\rvn}_2, \mu_s \rangle\right] \theta_2 + \epsilon_2(\rvv)
\end{align}

\begin{lemma} \label{lem:EOpp_unit_ball_intersection}
Consider all the solutions $\rvv = (v_1, v_2)$ that satisfy EOpp and have $\norm{\rvv}_\infty=1$. With probability $1$ there are exactly two such solutions $\rvv_{\mathrm{pos}}, \rvv_{\mathrm{neg}}$, where $\rvv_{\mathrm{pos}}=-\rvv_{\mathrm{neg}}$.
\end{lemma}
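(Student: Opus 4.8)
The plan is to recognize that the Equal Opportunity constraint is a single \emph{homogeneous linear} equation in the two-dimensional vector $\rvv=(v_1,v_2)$, so that its solution set is a line through the origin, and then to intersect that line with the $\ell_\infty$ sphere $\{\norm{\rvv}_\infty=1\}$.

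First I would rewrite \eqref{eq:EOpp_constraint} in the form $a_1v_1+a_2v_2=0$. Inspecting the expressions above for $\hat T_e(f_\rvv;D,S)$ and $\epsilon_e(\rvv)$, every summand is linear and homogeneous in $\rvv$: the signal terms $(v_1+v_2)\norm{\mu_c}^2$ and $(v_1\theta_1+v_2\theta_2)\theta_e\norm{\mu_s}^2$, and the noise inner products $\langle\mu_c+\theta_e\mu_s,\,v_1\bar{\rvn}_1+v_2\bar{\rvn}_2\rangle$ and $\langle\bar{\rvm}_{e,1},\,v_1\w_1+v_2\w_2\rangle$, where $\w_e=\mu_c+\theta_e\mu_s+\bar{\rvn}_e$. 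Subtracting, the constraint $\hat T_1(f_\rvv;D,S)=\hat T_2(f_\rvv;D,S)$ becomes $a_1v_1+a_2v_2=0$ with $a_e=(\theta_1-\theta_2)\bigl(\theta_e\norm{\mu_s}^2+\langle\mu_s,\bar{\rvn}_e\rangle\bigr)+\langle\bar{\rvm}_{1,1}-\bar{\rvm}_{2,1},\w_e\rangle$ for $e\in\{1,2\}$. Hence the set of feasible $\rvv$ is $L\cap\{\norm{\rvv}_\infty=1\}$, where $L=\{\rvv\in\R^2:a_1v_1+a_2v_2=0\}$.

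Second I would show $(a_1,a_2)\neq(0,0)$ with probability one, so that $L$ is a genuine line through the origin and not all of $\R^2$. It suffices that $a_1\neq0$ a.s. Conditioning on all randomness other than $\bar{\rvn}_1$ (i.e.\ on $\mu_c,\mu_s$, the labels, the train/fine-tune split, $\bar{\rvn}_2$, and $\bar{\rvm}_{1,1},\bar{\rvm}_{2,1}$), the quantity $a_1$ is an affine function of $\bar{\rvn}_1$, namely $a_1=(\text{const})+\langle\bar{\rvn}_1,\,u\rangle$ with $u=(\theta_1-\theta_2)\mu_s+\bar{\rvm}_{1,1}-\bar{\rvm}_{2,1}$. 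The vector $u$ is nonzero almost surely: it equals $-(\theta_1-\theta_2)\mu_s$ only on a null set (when $\bar{\rvm}_{1,1}-\bar{\rvm}_{2,1}$ has a density), and when $\bar{\rvm}_{1,1}-\bar{\rvm}_{2,1}=0$ it reduces to $(\theta_1-\theta_2)\mu_s\neq0$, using $\theta_1\neq\theta_2$ and $\norm{\mu_s}=r_s>0$. Since $\bar{\rvn}_1\sim\N(0,(\sigma^2/N_1)\I_d)$ is non-degenerate, $a_1$ is conditionally a non-degenerate univariate Gaussian, hence nonzero with conditional probability one; integrating out the conditioning gives $\P(a_1=0)=0$.

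Finally, the geometry is elementary: writing $L=\R\vd$ for a fixed nonzero $\vd\in\R^2$, a point $\rvv=t\vd$ satisfies $\norm{\rvv}_\infty=|t|\,\norm{\vd}_\infty=1$ iff $t=\pm1/\norm{\vd}_\infty$, so the only feasible vectors are $\rvv_{\mathrm{pos}}=\vd/\norm{\vd}_\infty$ and $\rvv_{\mathrm{neg}}=-\vd/\norm{\vd}_\infty=-\rvv_{\mathrm{pos}}$. I do not expect a real obstacle; the only step needing care is the almost-sure nondegeneracy, where one must exhibit a source of continuous noise present in $(a_1,a_2)$ and verify that its coefficient vector stays nonzero even in degenerate corner cases such as a fine-tuning split containing no positive labels.
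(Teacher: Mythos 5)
Your proposal is correct and follows the same route as the paper's proof: the EOpp constraint is a homogeneous linear equation in $(v_1,v_2)$ whose coefficient vector is nonzero almost surely, so its solution set is a line through the origin meeting the $\ell_\infty$ sphere in exactly two antipodal points. The only difference is that you actually justify the almost-sure nondegeneracy (by conditioning on everything but $\bar{\rvn}_1$ and using that $a_1$ is then a nondegenerate Gaussian), whereas the paper asserts it as "easy to see."
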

We will consider $\rvv_{\mathrm{pos}}$ as the solution that satisfies $v_{\mathrm{pos}, 1} + v_{\mathrm{pos}, 2} > 0$.
\begin{proof}
Is it easy to see that the EOpp constraint is a linear equation in $v_1, v_2$ and with probability $1$ the coefficients in this linear equations are nonzero. Therefore the solutions to this equation form a line in $\reals^2$ that passes through the origin. Consequently, this line intersects the $l_{\infty}$ unit ball at two points, that we denote $\rvv_{\mathrm{pos}}, \rvv_{\mathrm{neg}}$, which are negations of one another.
\end{proof}


\textbf{The proposed algorithm.} Now we can restate our algorithm in terms of $\rvv_{\mathrm{pos}}$ and $\rvv_{\mathrm{neg}}$ and analyze its retrieved solution.
\begin{itemize}
    \item Calculate $\w_1$ and $\w_2$ according to their definitions.
    \item Consider the solutions $\{\rvv_{\mathrm{pos}}, \rvv_{\mathrm{neg}}\}$ that satisfy EOpp and also $\norm{\rvv}_\infty=1$.
    \item Return the solution: $\rvv\in{\{\rvv_{\mathrm{pos}}, \rvv_{\mathrm{neg}}\}}$ which has the higher score, where the score is:
    \begin{align*}
       \rvv^* \in{\mathrm{arg}\max_{\rvv\in{\{\rvv_{\mathrm{pos}}, \rvv_{\mathrm{neg}}\}}}{\sum_{i\in{D}}{ \langle v_1\w_1 + v_2\w_2, y_i\x_i \rangle}}}
    \end{align*}
\end{itemize}

We first analyze the two possible solution $v_{\mathrm{pos}}$ and $v_{\mathrm{neg}}$ and show that their coordinates cannot be negations of each other. Intuitively, in an ideal scenario with infinite data, the EOpp constraint will enforce $v_1\theta_1 = -v_2\theta_2$. Then $v_1=-v_2$ is only possible if $\theta_1=\theta_2$, which we assume is not the case (if it is, we cannot identify the spurious correlation from data). The assumption of a fixed $\Delta>0$, will let us show that indeed with high probability $v_1=-v_2$ does not occur.
\begin{lemma} \label{lem:v_sum_bound}
Let $t>0$ and consider the solutions $v_{\mathrm{neg}},v_{\mathrm{pos}}$ that the algorithm may return. With probability at least \succesprobpos, the solutions satisfy $|v_1 + v_2| \geq \frac{\Delta}{2}$.
\end{lemma}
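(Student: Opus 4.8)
The plan is to reduce the empirical EOpp constraint to a single homogeneous linear equation in $(v_1,v_2)$, read off its two $\ell_\infty$-normalized solutions in closed form, and then show via Gaussian concentration that the ``noise'' part of that equation's coefficients is negligible next to the ``signal'' part $\Delta\theta_e\norm{\mu_s}^2$. Starting from \eqref{eq:EOpp_constraint} and the expression for $\hat{T}_e(f_\rvv;D,S)$ recorded above, the terms $(v_1+v_2)\norm{\mu_c}^2$ and $\inp{\mu_c}{v_1\bar{\rvn}_1+v_2\bar{\rvn}_2}$ appear on both sides of $\hat{T}_1=\hat{T}_2$ and cancel. Writing $a_e := \theta_e\norm{\mu_s}^2 + \inp{\mu_s}{\bar{\rvn}_e}$ and $g_e := \inp{\bar{\rvm}_{1,1}-\bar{\rvm}_{2,1}}{\w_e}$, the constraint becomes $\theta_1(v_1a_1+v_2a_2)+\epsilon_1(\rvv)=\theta_2(v_1a_1+v_2a_2)+\epsilon_2(\rvv)$, i.e.
\begin{equation*}
    c_1v_1+c_2v_2=0,\qquad c_e := \Delta a_e-g_e = \Delta\theta_e\norm{\mu_s}^2 + \big(\Delta\inp{\mu_s}{\bar{\rvn}_e}-g_e\big).
\end{equation*}
Since $(c_1,c_2)\neq(0,0)$ almost surely (this is the computation underlying \Cref{lem:EOpp_unit_ball_intersection}), the feasible set is the line $\{t(-c_2,c_1):t\in\reals\}$, and its two intersections with $\{\norm{\rvv}_\infty=1\}$ are $\pm(-c_2,c_1)/\max\{|c_1|,|c_2|\}$; hence for \emph{both} $\rvv_{\mathrm{pos}}$ and $\rvv_{\mathrm{neg}}$ one has $|v_1+v_2| = |c_1-c_2|/\max\{|c_1|,|c_2|\}$.

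Next I would isolate a purely deterministic sufficient condition. Put $\rho := \Delta^2/12$ and let $\mathcal{E}$ be the event that $\Delta\,|\inp{\mu_s}{\bar{\rvn}_e}|\le\rho\norm{\mu_s}^2$ and $|g_e|\le\rho\norm{\mu_s}^2$ for both $e\in\{1,2\}$. On $\mathcal{E}$, using $|\theta_e|\le1$ we get $\max_e|c_e|\le(\Delta+2\rho)\norm{\mu_s}^2$, while the signal part of $c_1-c_2$ equals $\Delta(\theta_1-\theta_2)\norm{\mu_s}^2=-\Delta^2\norm{\mu_s}^2$, so $|c_1-c_2|\ge(\Delta^2-4\rho)\norm{\mu_s}^2$. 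Therefore, using also $\Delta\le2$,
\begin{equation*}
    |v_1+v_2|\;\ge\;\frac{\Delta^2-4\rho}{\Delta+2\rho}\;=\;\frac{(2/3)\,\Delta}{1+\Delta/6}\;\ge\;\frac{(2/3)\,\Delta}{4/3}\;=\;\frac{\Delta}{2},
\end{equation*}
so everything reduces to showing that $\P[\mathcal{E}]$ is at least \succesprobpos.

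Finally I would bound the three flavours of noise that define $\mathcal{E}$. Conditionally on $\mu_s$ we have $\inp{\mu_s}{\bar{\rvn}_e}\sim\N(0,\sigma^2\norm{\mu_s}^2/N_e)$, so a Gaussian tail bound together with the $\tfrac{1}{r^2N_e}$ entry of \Cref{ass:norm_constraints_pos_result} (the fixed numbers $r$ and $\theta_{\max}$ being absorbed into $c_s$) handles the first family of events. For $g_e$ I would expand $\w_e=\mu_c+\theta_e\mu_s+\bar{\rvn}_e$: conditionally on $(\mu_c,\mu_s)$ the pieces $\inp{\bar{\rvm}_{1,1}-\bar{\rvm}_{2,1}}{\mu_c}$ and $\theta_e\inp{\bar{\rvm}_{1,1}-\bar{\rvm}_{2,1}}{\mu_s}$ are centered Gaussians with variances $\sigma^2\norm{\mu_c}^2(M_{1,1}^{-1}+M_{2,1}^{-1})$ and $\theta_e^2\sigma^2\norm{\mu_s}^2(M_{1,1}^{-1}+M_{2,1}^{-1})$, controlled respectively by the $\tfrac{r_\mu^2}{\Delta^2M_{e,1}}$ entry (recall $\norm{\mu_c}=\norm{\mu_s}/r_\mu$) and the $\tfrac{1}{(r\Delta)^2M_{e,1}}$ entry of \Cref{ass:norm_constraints_pos_result}. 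The leftover piece $\inp{\bar{\rvm}_{1,1}-\bar{\rvm}_{2,1}}{\bar{\rvn}_e}$ is a bilinear form in two \emph{independent} Gaussian vectors; conditioning on $\bar{\rvn}_e$ it is $\N\big(0,\norm{\bar{\rvn}_e}^2\sigma^2(M_{1,1}^{-1}+M_{2,1}^{-1})\big)$, and since $\norm{\bar{\rvn}_e}^2$ concentrates around $\sigma^2d/N_e$ (equivalently, one applies the Hanson--Wright inequality directly) this term has magnitude of order $\sigma^2\sqrt{d/(M_{e,1}N_e)}$ up to $t$-factors and is dominated by the $\tfrac{\sqrt d}{M_{e,1}r\Delta}$ entry of \Cref{ass:norm_constraints_pos_result}. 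A union bound over this fixed collection of events then shows that $\P[\mathcal{E}]$ is at least \succesprobpos, which completes the proof. I expect this last step---in particular pinning down the bilinear cross term and verifying that its $\sqrt d$-growth is exactly matched by the $\sqrt d$-scaling built into \Cref{ass:norm_constraints_pos_result}---to be the only part that is not essentially mechanical.
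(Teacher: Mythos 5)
Your proposal is correct and follows essentially the same route as the paper: reduce the empirical EOpp constraint to a homogeneous linear equation whose coefficients are $\Delta\theta_e\norm{\mu_s}^2$ plus noise, control the noise terms $\inp{\mu_s}{\bar{\rvn}_e}$ and $\inp{\bar{\rvm}_{1,1}-\bar{\rvm}_{2,1}}{\w_e}$ by the same Gaussian/Hanson--Wright concentration events backed by \Cref{ass:norm_constraints_pos_result}, and conclude $|v_1+v_2|\ge\Delta/2$. The only difference is presentational: you solve the constraint in closed form and compute $|v_1+v_2|=|c_1-c_2|/\max\{|c_1|,|c_2|\}$ with a uniform noise threshold $\rho=\Delta^2/12$, whereas the paper does a case analysis on which coordinate attains the $\ell_\infty$ norm and uses the tailored threshold $r=\Delta\theta_{\max}/(\Delta+4\theta_{\max})$ to make the algebra close at exactly $\Delta/(2\theta_{\max})$; both yield the same bound.
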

\begin{proof}
Assume that for $e\in{\{1, 2\}}$ the following events occur:
\begin{align} \label{eq:expected_loss_assum1}
|\langle\bar{\rvn}_e, \mu_s\rangle| &\leq r \norm{\mu_s}^2 \\
\label{eq:expected_loss_assum2}
|\langle \bar{\rvm}_{1,1} - \bar{\rvm}_{2,1}, \mu_c + \theta_e\mu_s + \bar{\rvn}_e \rangle| &\leq r\Delta \norm{\mu_s}^2
\end{align}

\corref{corro:positive_result_required_events} will show that they occur with the desired probability in our statement. Let us incorporate these events into the EOpp constraint. We group the items multiplied by $v_1$ and those multiplied by $v_2$:
\begin{align*}
    -v^*_1\big[\theta_1\norm{\mu_s}^2\Delta + \langle\bar{\rvn}_1, \mu_s\rangle&\Delta + \langle \bar{\rvm}_{1,1} - \bar{\rvm}_{2,1}, \mu_c + \theta_1\mu_s + \bar{\rvn}_1 \rangle \big] = \\
    & v^*_2 \left[\theta_2\norm{\mu_s}^2\Delta + \langle \bar{\rvn}_2, \mu_s \rangle \Delta + \langle \bar{\rvm}_{2,1} - \bar{\rvm}_{1,1}, \mu_c + \theta_2\mu_s + \bar{\rvn}_2 \rangle \right]
\end{align*}

Let us denote for convenience (where we drop the dependence on parameters in the notation):
\begin{align*}
    a = \norm{\mu_s}^{-2}\Delta \left( \langle\bar{\rvn}_1, \mu_s\rangle + \Delta^{-1}\langle \bar{\rvm}_{1,1} - \bar{\rvm}_{2,1}, \mu_c + \theta_1\mu_s + \bar{\rvn}_1 \rangle\right) \\
    b = \norm{\mu_s}^{-2}\Delta \left( \langle\bar{\rvn}_2, \mu_s\rangle + \Delta^{-1}\langle \bar{\rvm}_{2,1} - \bar{\rvm}_{1,1}, \mu_c + \theta_2\mu_s + \bar{\rvn}_2 \rangle\right)
\end{align*}
Now the EOpp constraint can be written as $-v_1^*\norm{\mu_s}^2\Delta\left( \theta_1 + a \right) = v_2^*\norm{\mu_s}^2\Delta\left( \theta_2 + b \right)$. Plugging in \eqref{eq:expected_loss_assum1} and \eqref{eq:expected_loss_assum2}, we see that $\max\{ |a|, |b| \} \leq r$.

Assume that $|\theta_1 + b| \geq |\theta_2 + a|$,
and note that since $\norm{\rvv^*}_\infty=1$ we have that $|v_1^*| = 1$ (the proof for the other case is analogous). \footnote{In the case where $|\theta_2 + a| \geq |\theta_1 + b|$ then $|v_2^*| = 1$ would hold.} We note that by definition $\Delta \leq 2\theta_{\max}$, hence if $v_2^*=0$ we have $|v_1^* + v_2^*| = 1 \geq \frac{\Delta}{2\theta_{\max}}$ and our claim holds. Otherwise, we can write:
\begin{align*}
    |v_1^* + v_2^*| &= \left|1 -\frac{\theta_2+b}{\theta_1+a}\right| = \left| \frac{\Delta + a - b}{\theta_1 + a} \right| \geq \frac{\Delta - 2r}{\theta_{\max} + r} = \frac{\Delta - 2\frac{\Delta\theta_{\max}}{\Delta + 4\theta_{\max}}}{\theta_{\max} + \frac{\Delta\theta_{\max}}{\Delta + 4\theta_{\max}}} \\
    &= \frac{\Delta\left(\Delta + 4\theta_{\max} - 2\theta_{\max}\right)}{\theta_{\max}\left(\Delta + 4\theta_{\max} + \Delta\right)} = \frac{\Delta}{2\theta_{\max}} \geq \frac{\Delta}{2}
\end{align*}
\end{proof}
The result above will be useful for proving the rest of our claims towards the performance guarantees of the algorithm. We first show that the retrieved solution is the one that is positively aligned with $\mu_c$.

\begin{lemma} \label{lem:retrieved_solution}
With probability at least \succesprobpos, between the two solutions considered at the second stage of our algorithm, the one with $v_1 + v_2 \geq 0$ achieves a higher score.
\end{lemma}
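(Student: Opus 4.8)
The plan is to convert the comparison between the two candidate scores into the single inequality $\mathrm{Score}(\rvv_{\mathrm{pos}}) > 0$, and then to split that score into a dominant ``signal'' term plus controllable remainders.

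\textbf{Reduction.} Write $\mathrm{Score}(\rvv) := \sum_{i\in D}\langle v_1\w_1 + v_2\w_2,\, y_i\x_i\rangle$ for the objective that Stage~2 maximizes. This is linear in $\rvv$, and by \lemref{lem:EOpp_unit_ball_intersection} the only feasible points are $\rvv_{\mathrm{pos}}$ and $\rvv_{\mathrm{neg}} = -\rvv_{\mathrm{pos}}$, so $\mathrm{Score}(\rvv_{\mathrm{neg}}) = -\mathrm{Score}(\rvv_{\mathrm{pos}})$. Since the score is a nonconstant affine function of the Gaussian noise it is almost surely nonzero, so the feasible point with $v_1 + v_2 > 0$ attains the strictly larger score if and only if $\mathrm{Score}(\rvv_{\mathrm{pos}}) > 0$; on the event of \lemref{lem:v_sum_bound} exactly one feasible point has $v_1 + v_2 \ge \Delta/2$, so $\rvv_{\mathrm{pos}}$ is well defined. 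Hence it suffices to lower bound $\mathrm{Score}(\rvv_{\mathrm{pos}})$.

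\textbf{Decomposition.} Substituting $y_i\x_i = \mu_c + \theta_e\mu_s + y_i n_i$ for $i\in D_e$, using $\w_e = \mu_c + \theta_e\mu_s + \bar{\rvn}_e$ and $\mu_c \perp \mu_s$, and writing $M = M_1 + M_2$, I would obtain
\begin{align*}
\mathrm{Score}(\rvv) = M(v_1 + v_2)\norm{\mu_c}^2 + (\theta_1 M_1 + \theta_2 M_2)(v_1\theta_1 + v_2\theta_2)\norm{\mu_s}^2 + R(\rvv),
\end{align*}
where $R(\rvv) = \sum_e M_e\langle v_1\bar{\rvn}_1 + v_2\bar{\rvn}_2,\, \mu_c + \theta_e\mu_s\rangle + \langle v_1\w_1 + v_2\w_2,\, \sum_{i\in D} y_i n_i\rangle$ gathers every term that is linear in the first-stage noise means $\bar{\rvn}_e$ or in the fresh second-stage noise. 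Call the three pieces the signal, the spurious contribution, and the noise remainder.

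\textbf{Bounding the pieces for $\rvv = \rvv_{\mathrm{pos}}$.} (i) By \lemref{lem:v_sum_bound}, $v_1 + v_2 \ge \Delta/2$, so the signal term is at least $\tfrac{\Delta}{2} M \norm{\mu_c}^2$. (ii) For the spurious term I would return to the empirical EOpp constraint \eqref{eq:EOpp_constraint}: subtracting its two sides cancels the $(v_1 + v_2)\norm{\mu_c}^2$ part and, after dividing by $\Delta$, writes $(v_1\theta_1 + v_2\theta_2)\norm{\mu_s}^2$ as a bounded combination (with $|v_e| \le 1$) of the inner products $\langle\bar{\rvn}_e, \mu_s\rangle$ and $\langle\bar{\rvm}_{1,1} - \bar{\rvm}_{2,1}, v_1\w_1 + v_2\w_2\rangle$; the concentration events collected in \corref{corro:positive_result_required_events}, together with the lower bounds on $\norm{\mu_s}^2, \norm{\mu_c}^2$ in \assumref{ass:norm_constraints_pos_result}, are exactly what force $|v_1\theta_1 + v_2\theta_2|\,\norm{\mu_s}^2 \le \tfrac{\Delta}{8}\norm{\mu_c}^2$, and since $|\theta_1 M_1 + \theta_2 M_2| \le M$ the spurious term is at most $\tfrac{\Delta}{8} M \norm{\mu_c}^2$ in magnitude. (iii) In $R(\rvv_{\mathrm{pos}})$ every summand is, up to the bounded factors $v_e$, a mean-zero Gaussian inner product ($\langle\bar{\rvn}_e, \mu_c\rangle$, $\langle\bar{\rvn}_e, \mu_s\rangle$, and $\langle\w_e, \sum_{i\in D} y_i n_i\rangle$), so Gaussian tail bounds (with a union bound over the two possible data-dependent values of $\rvv$) and \assumref{ass:norm_constraints_pos_result} give $|R(\rvv_{\mathrm{pos}})| \le \tfrac{\Delta}{8} M \norm{\mu_c}^2$ on the same events. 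Summing, $\mathrm{Score}(\rvv_{\mathrm{pos}}) \ge \bigl(\tfrac{\Delta}{2} - \tfrac{\Delta}{8} - \tfrac{\Delta}{8}\bigr) M \norm{\mu_c}^2 > 0$, which by the reduction step proves the lemma; the failure probability is that of \corref{corro:positive_result_required_events}, i.e.\ at most $34\exp(-t^2)$.

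\textbf{Main obstacle.} The only genuinely delicate step is (ii). Unlike $v_1 + v_2$, whose lower bound is the content of \lemref{lem:v_sum_bound}, here one must show that $v_1\theta_1 + v_2\theta_2$ is so small that, even after multiplication by $\norm{\mu_s}^2$ — which can exceed $\norm{\mu_c}^2$ by the fixed factor $r_\mu^2 > 1$ — and by $M$, it remains well below $\tfrac{\Delta}{2} M \norm{\mu_c}^2$. This needs the \emph{tightened} concentration events of \corref{corro:positive_result_required_events}, not the coarse constant $r$ used in \lemref{lem:v_sum_bound}, and is precisely where the $\sqrt{d}/M_e$ and $r_\mu^2/(\Delta^2 M_e)$ lower bounds on the signal norms in \assumref{ass:norm_constraints_pos_result} enter; everything else is routine Gaussian bookkeeping.
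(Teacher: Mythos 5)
Your overall architecture coincides with the paper's: reduce to the sign of the score at $\rvv_{\mathrm{pos}}$ (the paper does this implicitly by writing the score as $(v_1+v_2)\norm{\mu_c}^2+b$ with $|b|\le\tfrac12|v_1+v_2|\norm{\mu_c}^2$), lower bound the signal via \lemref{lem:v_sum_bound}, and control everything else with the EOpp constraint plus concentration. The genuine problem is in how you split the spurious contribution from the first-stage noise. The empirical EOpp constraint \eqref{eq:EOpp_constraint} only controls the \emph{sum} $(v_1\theta_1+v_2\theta_2)\norm{\mu_s}^2+\langle\mu_s,\,v_1\bar{\rvn}_1+v_2\bar{\rvn}_2\rangle$, whose absolute value times $\Delta$ equals $|\epsilon_2(\rvv)-\epsilon_1(\rvv)|$; this is exactly the combination (with the same prefactor $\theta_1M_1+\theta_2M_2$) that appears in the score, which is why the paper never separates the two pieces. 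You instead move $\langle\mu_s,\,v_1\bar{\rvn}_1+v_2\bar{\rvn}_2\rangle$ into $R(\rvv)$ and assert that $|v_1\theta_1+v_2\theta_2|\,\norm{\mu_s}^2\le\tfrac{\Delta}{8}\norm{\mu_c}^2$ follows from \corref{corro:positive_result_required_events}. Neither half of this split is supported by the cited events: the only bound on $\langle\bar{\rvn}_e,\mu_s\rangle$ available in \lemref{lem:positive_result_concentrations} and \corref{corro:positive_result_required_events} is the coarse $r\norm{\mu_s}^2$ of \eqref{eq:n_mu_assum}, and since $\norm{\mu_s}^2=r_\mu^2\norm{\mu_c}^2$ with $r_\mu>1$ a fixed constant and $r$ a constant of order $\Delta$, the quantity $r\norm{\mu_s}^2$ need not be below $\tfrac{\Delta}{8}\norm{\mu_c}^2$. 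Hence both your step (ii) (after inverting the EOpp identity to isolate $(v_1\theta_1+v_2\theta_2)\norm{\mu_s}^2$) and the $\mu_s$-direction part of $R$ in step (iii) rely on a tightened event, $|\langle\bar{\rvn}_e,\mu_s\rangle|\lesssim\Delta\norm{\mu_c}^2$, that is not in the paper's toolkit. Such an event can be derived from \assumref{ass:norm_constraints_pos_result} (via the $r_\mu^2/(\Delta^2 M_{e,1})$ term and $M_{e,1}\le N_e$), but you would have to state and prove it; as written, the step does not go through.

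There is also a smaller bookkeeping issue: your thresholds are absolute while the available bounds are relative. Event \eqref{eq:event_corro_eq1} gives $|\epsilon_2-\epsilon_1|\le\tfrac{\Delta}{6}|v_1+v_2|\norm{\mu_c}^2$, so $|\epsilon_2-\epsilon_1|/\Delta$ can be as large as $\tfrac13\norm{\mu_c}^2$, which exceeds your target $\tfrac{\Delta}{8}\norm{\mu_c}^2$ whenever $\Delta<8/3$, i.e.\ always. The clean repair is the paper's route: keep the EOpp-controlled combination intact, bound each error term by $\tfrac16|v_1+v_2|\norm{\mu_c}^2$, and compare against the signal $(v_1+v_2)M\norm{\mu_c}^2$; then the sign of the score equals the sign of $v_1+v_2$ and the lemma follows without any new concentration event.
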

\begin{proof}
    Let's write down the score on environment $e\in{\{1, 2\}}$ in detail:
    \begin{align} \label{eq:emp_score_decompose}
       \sum_{i\in{D_e}}{\w^\top \x_i y_i} =& (v_1+v_2)\norm{\mu_c}^2 + \langle \mu_c , v_1\bar{\rvn}_1 + v_2\bar{\rvn}_2\rangle + \\ & (v_1\theta_1 + v_2\theta_2)\theta_e\norm{\mu_s}^2 + \langle \mu_s, \theta_e\left(v_1\bar{\rvn}_1 + v_2\bar{\rvn}_2\right) \rangle + \nonumber \\
       &\langle \bar{\rvm}_e, (v_1+v_2)\mu_c + (\theta_1v_1 + \theta_2v_2)\mu_s + v_1\bar{\rvn}_1 + v_2\bar{\rvn}_2 \rangle \nonumber
    \end{align}
We will bound all the items other than $(v_1 + v_2)\norm{\mu_s}^2$ with concentration inequalities, and for the second line also use the EOpp constraint. Regrouping items in \eqref{eq:EOpp_constraint} we have:
\begin{align*}
    \left| \left(v_1\theta_1 + v_2\theta_2\right) \norm{\mu_s}^2 + \langle \mu_s, v_1\bar{\rvn}_1 + v_2\bar{\rvn}_2 \rangle \right|\cdot \Delta = |\epsilon_2(\rvv) - \epsilon_1(\rvv)|
\end{align*}
In \corref{corro:positive_result_required_events} we will prove that with probability at least \succesprobpos, it holds that $|\epsilon_2(\rvv) - \epsilon_1(\rvv)| \leq \frac{\Delta}{6}|v_1 + v_2|\cdot\norm{\mu_c}^2$. Combined with $|\theta_e| < 1$, we get that the magnitude of the terms in the second line of \eqref{eq:emp_score_decompose} is bounded by $\frac{1}{6}|v_1 + v_2|\cdot\norm{\mu_c}^2$. We will also show in \corref{corro:positive_result_required_events} that the other two terms in \eqref{eq:emp_score_decompose} besides $(v_1 + v_2)\norm{\mu_c}^2$, are bounded by $\frac{1}{6}|v_1 + v_2|\cdot\norm{\mu_c}^2$. Hence we have for some $b$ such that $|b| \leq \half |(v_1 + v_2)|\cdot\norm{\mu_c}^2$ that:
\begin{align*}
    \sum_{i\in{D_e}}{\w^\top \x_i y_i} = (v_1 + v_2)\norm{\mu_c}^2 + b
\end{align*}
We note that the score in the algorithm is a weighted average of the scores over the training environments, yet the derivation above holds regardless of $e$. That is, $\theta_e$ did not play a role in the derivation other than the assumption that its magnitude is smaller than $1$. Hence it is clear that the solution $\rvv^*=\rvv_{\mathrm{pos}}$ will be chosen over $\rvv_{\mathrm{neg}}$.
\end{proof}

Once we have characterized our returned solution, it is left to show its guaranteed performance over all environments $\theta \in{[-1, 1]}$. We can draw a similar argument to \lemref{lem:retrieved_solution} to reason about the expected score obtained in each environment.
\begin{lemma}\label{lem:expected_score}
Let $t>0$ and consider the retrieved solution $\rvv^*$. With probability at least \succesprobpos,
the expected score of $\rvv^*$ over any environment corresponding to $\theta\in{[-1, 1]}$ is larger than $\frac{\Delta}{3}\norm{\mu_c}^2$.
\end{lemma}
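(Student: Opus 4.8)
The plan is to expand the expected score of the retrieved classifier $\rvv^*$ under an arbitrary environment $\sP_\theta$ into a leading term proportional to $v_1^* + v_2^*$ plus lower-order terms that are already controlled by the EOpp constraint and by Gaussian concentration, and then to invoke \lemref{lem:v_sum_bound} and \lemref{lem:retrieved_solution} for the sign and magnitude of $v_1^*+v_2^*$. Concretely, for any $\theta\in[-1,1]$, using $\w_e=\mu_c+\theta_e\mu_s+\bar{\rvn}_e$, the identity $\E_{\x,y\sim\sP_\theta}[\x y]=\mu_c+\theta\mu_s$, and $\mu_c\perp\mu_s$, one obtains
\begin{align*}
    \E_{\x,y\sim\sP_\theta}\left[ f_{\rvv^*}(\x)\,y \right]
    &= (v_1^* + v_2^*)\norm{\mu_c}^2 + \langle \mu_c,\, v_1^*\bar{\rvn}_1 + v_2^*\bar{\rvn}_2 \rangle \\
    &\quad + \theta\Big[ (v_1^*\theta_1 + v_2^*\theta_2)\norm{\mu_s}^2 + \langle \mu_s,\, v_1^*\bar{\rvn}_1 + v_2^*\bar{\rvn}_2 \rangle \Big].
\end{align*}
By \lemref{lem:retrieved_solution} the returned solution satisfies $v_1^*+v_2^*\ge 0$, and by \lemref{lem:v_sum_bound} it satisfies $|v_1^*+v_2^*|\ge \Delta/2$; hence the leading term equals $|v_1^*+v_2^*|\,\norm{\mu_c}^2 \ge \tfrac{\Delta}{2}\norm{\mu_c}^2$.

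Next I would bound the remaining two terms by $\tfrac{1}{6}|v_1^*+v_2^*|\,\norm{\mu_c}^2$ each, reusing the estimates already assembled in the proof of \lemref{lem:retrieved_solution}. The noise term $\langle \mu_c,\, v_1^*\bar{\rvn}_1 + v_2^*\bar{\rvn}_2 \rangle$ is the same zero-mean quantity bounded there by standard Gaussian tail bounds under the events of \corref{corro:positive_result_required_events}. For the spurious term, regrouping the EOpp constraint \eqref{eq:EOpp_constraint} exactly as in the proof of \lemref{lem:retrieved_solution} gives $\big|(v_1^*\theta_1 + v_2^*\theta_2)\norm{\mu_s}^2 + \langle \mu_s,\, v_1^*\bar{\rvn}_1 + v_2^*\bar{\rvn}_2 \rangle\big|\cdot\Delta = |\epsilon_2(\rvv^*)-\epsilon_1(\rvv^*)| \le \tfrac{\Delta}{6}|v_1^*+v_2^*|\,\norm{\mu_c}^2$ (again by \corref{corro:positive_result_required_events}), so the bracketed quantity has magnitude at most $\tfrac{1}{6}|v_1^*+v_2^*|\,\norm{\mu_c}^2$, and multiplying by $|\theta|\le 1$ preserves this bound uniformly over $\theta\in[-1,1]$.

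Combining the three estimates yields
\begin{align*}
    \E_{\x,y\sim\sP_\theta}\left[ f_{\rvv^*}(\x)\,y \right]
    &\ge |v_1^*+v_2^*|\,\norm{\mu_c}^2\left(1-\tfrac{1}{6}-\tfrac{1}{6}\right) \\
    &= \tfrac{2}{3}|v_1^*+v_2^*|\,\norm{\mu_c}^2 \ge \tfrac{\Delta}{3}\norm{\mu_c}^2,
\end{align*}
which is the claim, holding with probability at least \succesprobpos over the same events by the union bound behind \lemref{lem:v_sum_bound}, \lemref{lem:retrieved_solution} and \corref{corro:positive_result_required_events}. I do not anticipate a real obstacle here: all the genuine work (the Gaussian and Hanson--Wright concentration estimates) is packaged into \corref{corro:positive_result_required_events}, and the only points requiring care are that $v_1^*+v_2^*$ is nonnegative---supplied by \lemref{lem:retrieved_solution}---and that the dependence on the test environment $\theta$ is handled uniformly, which is immediate because the only $\theta$-dependent contribution factors as $\theta$ times a $\theta$-independent bracket with $|\theta|\le 1$.
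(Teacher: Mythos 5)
Your proposal is correct and follows essentially the same route as the paper: the same decomposition of the expected score, the same use of the EOpp constraint (via \corref{corro:positive_result_required_events}) to bound the spurious bracket and the noise term each by $\tfrac{1}{6}|v_1^*+v_2^*|\norm{\mu_c}^2$, and the same appeal to \lemref{lem:v_sum_bound} and \lemref{lem:retrieved_solution} to conclude $v_1^*+v_2^*\ge\Delta/2$. No gaps.
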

\begin{proof}
The expected score can be written same as in \eqref{eq:emp_score_decompose}, except we can drop the last item since it has expected value $0$. We let $\theta\in{[-1, 1]}$ and write:
\begin{align*}
    \E_{\x,y\sim P_\theta}\left[ \w^\top \x y \right] = &(v^*_1 + v^*_2)\norm{\mu_c}^2 + \langle \mu_c , v^*_1\bar{\rvn}_1 + v^*_2\bar{\rvn}_2\rangle + \\
    & (v^*_1\theta_1 + v^*_2\theta_2)\theta\norm{\mu_s}^2 + \langle \mu_s, \theta\left(v^*_1\bar{\rvn}_1 + v^*_2\bar{\rvn}_2\right) \rangle \geq \frac{2}{3}(v^*_1 + v^*_2)\norm{\mu_c}^2.
\end{align*}
The inequality follows from the arguments already stated in \lemref{lem:retrieved_solution}, where the second and third items in the above expression have magnitude at most $\frac{1}{6}(v_1^* + v_2^*)\norm{\mu_c}^2$. Now it is left to conclude that $(v_1^* + v_2^*) \geq \frac{\Delta}{2}$, which is a direct consequence of \lemref{lem:v_sum_bound} and \lemref{lem:retrieved_solution}.
\end{proof}

\subsection{Proof of \Cref{prop:positive_result_main}} \label{sec:positive_proof_appendix}
Now we are in place to prove the guarantee given in the main paper on the robust error of the model returned by the algorithm. We will restate it here with compatible notation to the earlier parts of this section which slightly differ from those in the main paper (e.g. by incorporating $\Delta$). We also note that to obtain the statement in the main paper we should eliminate the dependence of Assumption ~\ref{ass:norm_constraints_pos_result} on $M_{e,1}$.
We do this by assuming that our algorithm draws $M_{e}$ as half of the original dataset for environment $e$. Then we have that $\sP(M_{e,1} \leq N_{\min} / 8)$ is bounded by  the cumulative probability of a Binomial variable with $k=N_{\min} / 8$ successes and at least $N_{\min}$ trials. This may be bounded with a Hoeffding bound by $1-2 \exp ( \half N_{\min} )$ and with a union bound over the two environments. To absorb this into our failure probability we require $N_{\min} > c_{eo}\log( 1/ \delta)$, leading to this added constraint in the main paper.
\begin{proposition} 
Under \assumref{ass:norm_constraints_pos_result}, let $\epsilon > 0$ be the target maximum error of the model and $t>0$. If $\norm{\mu_c}^2 \geq t Q^{-1}(\epsilon)\frac{15}{\Delta}\sigma^2\sqrt{\frac{d}{N_{\min}}}$, then with probability at least \succesprobpos the robust accuracy error of the model is at most $\epsilon$.
\end{proposition}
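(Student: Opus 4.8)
The plan is to obtain the robust-error bound by assembling the structural lemmas already established in this section together with the exact error formula \eqref{eq:error_explicit}. First I would condition on the event of \corref{corro:positive_result_required_events}, which occurs with probability at least \succesprobpos and under which the conclusions of \lemref{lem:v_sum_bound}, \lemref{lem:retrieved_solution} and \lemref{lem:expected_score} hold simultaneously. On this event the algorithm returns the classifier $\w = v_1^\star\w_1 + v_2^\star\w_2$ with $\rvv^\star = \rvv_{\mathrm{pos}}$ (by \lemref{lem:retrieved_solution}) and $v_1^\star + v_2^\star \ge \Delta/2$ (by \lemref{lem:v_sum_bound}).

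Next I would fix an arbitrary $\theta\in[-1,1]$ and invoke \eqref{eq:error_explicit}, which gives
\begin{equation*}
\epsilon_\theta(\w) = Q\left( \frac{\inp{\w}{\vmu_c} + \theta\inp{\w}{\vmu_s}}{\sigma\norm{\w}} \right).
\end{equation*}
The key observation is that the numerator is exactly the expected score $\E_{\x,y\sim\sP_\theta}[f_{\rvv^\star}(\x)\,y]$ analyzed in \lemref{lem:expected_score} (using $\vmu_c\perp\vmu_s$ to cancel the cross-term), and is therefore at least $\tfrac{\Delta}{3}\norm{\vmu_c}^2 > 0$ uniformly over $\theta\in[-1,1]$. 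Since $Q$ is strictly decreasing, this yields $\max_{\theta\in[-1,1]}\epsilon_\theta(\w) \le Q\left( \tfrac{(\Delta/3)\norm{\vmu_c}^2}{\sigma\norm{\w}} \right)$, so it only remains to upper bound $\norm{\w}$.

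For the norm bound I would use $\norm{\rvv^\star}_\infty = 1$ to write $\norm{\w} \le \norm{\w_1} + \norm{\w_2} \le 2\norm{\vmu_c} + 2\norm{\vmu_s} + \norm{\bar{\rvn}_1} + \norm{\bar{\rvn}_2}$, bound each $\norm{\bar{\rvn}_e}$ by $O(\sigma\sqrt{d/N_e})$ via the $\chi^2$ concentration already folded into \corref{corro:positive_result_required_events}, and use \assumref{ass:norm_constraints_pos_result} together with the overparameterized scaling (under which $\norm{\vmu_c}$ and $\norm{\vmu_s}$ are of lower order than the Gaussian term $\sigma\sqrt{d/N_{\min}}$) to conclude $\sigma\norm{\w} \le 5t\,\sigma^2\sqrt{d/N_{\min}}$. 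Substituting this into the display above, the hypothesis $\norm{\vmu_c}^2 \ge t\,Q^{-1}(\epsilon)\tfrac{15}{\Delta}\sigma^2\sqrt{d/N_{\min}}$ makes the argument of $Q$ at least $Q^{-1}(\epsilon)$, so $\max_{\theta}\epsilon_\theta(\w) \le Q(Q^{-1}(\epsilon)) = \epsilon$, which is the claim.

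I expect the genuine difficulty of the positive result to reside entirely in the lemmas preceding this statement rather than in the statement itself: the delicate parts are \lemref{lem:v_sum_bound} and \lemref{lem:retrieved_solution}, where one must show that the empirical EOpp constraint forces $v_1^\star\theta_1 + v_2^\star\theta_2$ to be small while keeping $|v_1^\star + v_2^\star|$ bounded away from zero (so the two candidate solutions are \emph{not} negations of each other), which rests on Hanson--Wright-type control of the noise cross-terms under \assumref{ass:norm_constraints_pos_result}. Granting those, the present proposition is a short assembly step; the only care needed here is making the constants in the $\norm{\w}$ bound match the factor $\tfrac{15}{\Delta}$ and the single power of $t$ in the stated hypothesis.
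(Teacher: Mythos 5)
Your overall strategy coincides with the paper's: condition on the event of \corref{corro:positive_result_required_events}, lower bound the numerator $\inp{\w}{\vmu_c+\theta\vmu_s}$ uniformly over $\theta\in[-1,1]$ by the expected-score bound $\tfrac{\Delta}{3}\norm{\vmu_c}^2$ of \lemref{lem:expected_score}, upper bound $\sigma\norm{\w}$ by $5t\,\sigma^2\sqrt{d/N_{\min}}$, and finish by monotonicity of $Q$. The one place where you deviate---and where there is a genuine gap---is the derivation of the norm bound. You write $\norm{\w}\le 2\norm{\vmu_c}+2\norm{\vmu_s}+\norm{\bar{\rvn}_1}+\norm{\bar{\rvn}_2}$ and then discard the mean terms by asserting that, ``under the overparameterized scaling,'' $\norm{\vmu_c}$ and $\norm{\vmu_s}$ are of lower order than $\sigma\sqrt{d/N_{\min}}$. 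But \assumref{ass:norm_constraints_pos_result} and the hypothesis of the proposition only provide \emph{lower} bounds on $\norm{\vmu_s}^2$ and $\norm{\vmu_c}^2$; no upper bound on $\norm{\vmu_s}$ is available at this point (and in the regime the paper cares about $\norm{\vmu_s}$ may be much larger than $\norm{\vmu_c}$), so $\sigma\norm{\w}\le 5t\,\sigma^2\sqrt{d/N_{\min}}$ does not follow from your triangle-inequality bound, and with an uncontrolled $\norm{\vmu_s}$ contribution in the denominator both the constant $\tfrac{15}{\Delta}$ and the conclusion are lost.

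The paper closes exactly this step differently: \eqref{eq:event_corro_eq6} in \corref{corro:positive_result_required_events} proves $\norm{\w}\le 5t\sqrt{\sigma^2 d/N_{\min}}$ by exploiting the EOpp constraint (through \lemref{lem:retrieved_solution}) to show that the spurious-plus-noise component of $\w$, namely $(v_1^*\theta_1+v_2^*\theta_2)\norm{\vmu_s}^2+2\inp{v_1^*\bar{\rvn}_1+v_2^*\bar{\rvn}_2}{\vmu_s}$, is at most $\tfrac{1}{3}(v_1^*+v_2^*)\norm{\vmu_c}^2$, so that $\norm{\w}\le 4\norm{\vmu_c}+\norm{\bar{\rvn}_1}+\norm{\bar{\rvn}_2}$ and no upper bound on $\norm{\vmu_s}$ is ever needed. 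Since \eqref{eq:event_corro_eq6} is already part of the event you condition on, the simplest repair is to cite it directly instead of rederiving the norm bound; with that substitution your assembly step matches the paper's proof.
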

\begin{proof}
The error of the model in the environment defined by $\theta\in{[-1, 1]}$ is given by the Gaussian tail function:
\begin{align*}
    Q\left(\frac{\langle \w, \mu_c + \theta\mu_s \rangle}{\sigma\norm{\w}}\right)
\end{align*}
The nominator of this expression is simply the expected score from \lemref{lem:expected_score}, which we already proved is at least $\frac{\Delta}{3}\norm{\mu_c}^2$. Then we need to bound $\norm{\w}$ from above to get a bound on the robust accuracy. According to \corref{corro:positive_result_required_events}, if we denote $N_{\min} = \min\{N_1, N_2 \}$, this upper bound can be taken as $5t\sqrt{\sigma^2 d / N_{\min}}$. We plug this in to get:
\begin{align*}
    \frac{\langle \w, \mu_c + \theta \mu_s \rangle}{\sigma\norm{\w}} \geq \frac{\Delta}{15t}\norm{\mu_c}^2\frac{1}{\sigma^2}\sqrt{\frac{N_{\min}}{d}}
\end{align*}
Since $Q$ is a monotonically decreasing function, if $\norm{\mu_c}^2 \geq t Q^{-1}(\mathrm{\epsilon})\frac{15}{\Delta}\sigma^2\sqrt{\frac{d}{N_{\min}}}$ our model achieves the desired performance.
\end{proof}

\subsection{Required Concentration Bounds}
To conclude the proof we now show all the concentration results used in the above derivation.
Note that $\rvv^*$ is determined by all the other random factors in the problem, hence we should be careful when using them in our bounds. We will only use the fact that $\norm{\rvv^*}_\infty = 1$ and hence $\norm{\rvv^*}_1 \leq 2$.


To bound the inner product of noise vectors, we use \cite[Theorem 1.1]{rudelson2013hanson}:
\begin{theorem} (Hanson-Wright inequality). Let $X=\left(X_1, \ldots, X_n\right) \in \mathbb{R}^n$ be a random vector with independent components $X_i$ which satisfy $\mathbb{E} X_i=0$ and $\left\|X_i\right\|_{\psi_2} \leq$ $K$. Let $A$ be an $n \times n$ matrix. Then, for every $t \geq 0$,
$$
\mathbb{P}\left\{\left|X^{\top} A X-\mathbb{E}[ X^{\top} A X\right]|>t\right\} \leq 2 \exp \left[-c \min \left(\frac{t^2}{K^4\|A\|_{\mathrm{HS}}^2}, \frac{t}{K^2\|A\|}\right)\right]
$$
\end{theorem}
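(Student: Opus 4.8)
The displayed statement is the Hanson--Wright inequality; I would prove it following the exponential-moment argument of Rudelson and Vershynin \cite{rudelson2013hanson}. Decompose $S := X^{\top}AX - \E[X^{\top}AX] = S_{\mathrm{diag}} + S_{\mathrm{off}}$ with $S_{\mathrm{diag}} = \sum_{i} A_{ii}(X_i^2 - \E X_i^2)$ and $S_{\mathrm{off}} = \sum_{i \ne j} A_{ij}X_iX_j$. Since each $X_i$ is sub-Gaussian with $\|X_i\|_{\psi_2}\le K$, the variables $X_i^2 - \E X_i^2$ are independent, centered, and sub-exponential with norm $O(K^2)$, so Bernstein's inequality gives $\P(|S_{\mathrm{diag}}|>t) \le 2\exp(-c\min\{t^2/(K^4\sum_i A_{ii}^2),\, t/(K^2\max_i|A_{ii}|)\})$, which is absorbed into the claimed bound because $\sum_i A_{ii}^2 \le \|A\|_{\mathrm{HS}}^2$ and $\max_i |A_{ii}| \le \|A\|$. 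The work is therefore concentrated on bounding the moment generating function $\E\exp(\lambda S_{\mathrm{off}})$ for the off-diagonal chaos.

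First I would decouple: by the standard decoupling inequality for quadratic forms in independent random variables, $\E\exp(\lambda S_{\mathrm{off}}) \le \E\exp\!\big(4\lambda\sum_{i,j}A_{ij}X_iX_j'\big)$ where $X'$ is an independent copy of $X$; this removes the diagonal restriction and decouples the two factors. Conditioning on $X$, the sum $\sum_{i,j}A_{ij}X_iX_j' = \langle A^{\top}X, X'\rangle$ is, in $X'$, a sum of independent mean-zero sub-Gaussian random variables whose variance proxy is $O(K^2\|A^{\top}X\|_2^2)$, so its conditional MGF is at most $\exp(CK^2\lambda^2\|A^{\top}X\|_2^2) = \exp(CK^2\lambda^2\, X^{\top}AA^{\top}X)$.

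The remaining step is to take the expectation over $X$, i.e.\ to bound $\E_X\exp(\mu\, X^{\top}BX)$ with $B := AA^{\top}\succeq 0$ and $\mu := CK^2\lambda^2$. Here I would use the comparison lemma that for a sub-Gaussian $X$ with independent mean-zero entries and $g\sim\N(0,I_n)$ one has $\E_X\exp(\mu X^{\top}BX) \le \E_g\exp(C'\mu K^2 g^{\top}Bg)$ for $\mu$ below a threshold, after which the Gaussian quadratic-form MGF is computed explicitly as $\prod_k(1-2s\,\sigma_k(B))^{-1/2} \le \exp(C''s\,\Tr B)$, valid whenever $2s\|B\| \le 1/2$. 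With $\Tr B = \|A\|_{\mathrm{HS}}^2$ and $\|B\| = \|A\|^2$ this yields $\E\exp(\lambda S_{\mathrm{off}}) \le \exp(CK^4\lambda^2\|A\|_{\mathrm{HS}}^2)$ for all $|\lambda|\le c/(K^2\|A\|)$. Finally, the generic Chernoff argument for a random variable with $\E e^{\lambda S}\le e^{\nu^2\lambda^2}$ on $|\lambda|\le 1/b$ --- optimize over $\lambda$ to get $\P(|S|>t)\le 2\exp(-c\min\{t^2/\nu^2, t/b\})$ --- applied with $\nu^2 \asymp K^4\|A\|_{\mathrm{HS}}^2$ and $b\asymp K^2\|A\|$ gives the tail for $S_{\mathrm{off}}$; combining it with the diagonal tail and adjusting absolute constants completes the proof.

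The step I expect to be the main obstacle is passing the expectation through the quadratic form in the last paragraph: controlling $\E_X\exp(\mu X^{\top}BX)$ for a general sub-Gaussian (non-Gaussian) $X$ requires either the Gaussian comparison lemma or an iterated conditioning / contraction argument, and one must keep the admissible range of $\mu$ --- hence of $\lambda$ --- consistent with the final $t/(K^2\|A\|)$ regime of the bound; this bookkeeping of constants and validity ranges is where the argument is most delicate. The decoupling inequality and the Bernstein estimate for $S_{\mathrm{diag}}$ are, by contrast, routine.
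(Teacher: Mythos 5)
The paper does not prove this statement at all---it imports it verbatim as Theorem~1.1 of \cite{rudelson2013hanson}---and your sketch is precisely the standard Rudelson--Vershynin argument (Bernstein for the diagonal part, decoupling plus a conditional sub-Gaussian MGF bound for the off-diagonal chaos, a Gaussian comparison/linearization step to control $\E_X\exp(\mu X^{\top}AA^{\top}X)$, and a sub-exponential Chernoff bound at the end), so it matches the source the paper relies on. The outline is correct, including the restriction $|\lambda|\le c/(K^2\|A\|)$ that yields the $t/(K^2\|A\|)$ regime, so nothing further is needed.
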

We can apply this theorem to get the following result.
\begin{corollary} \label{corr:noises_inner_prod}
for some universal constant $c>0$ (when we assume w.l.o.g that $M_{e'} \leq N_e$):
\begin{align} \label{eq:noise_inprod_bnd}
\sP \left\{ | \langle \bar{\rvn}_e, \bar{\rvm}_{e'} \rangle | > t \right\} \leq 2\exp\left[ -c \min\left(\frac{M_{e'}^2 t^2}{\sigma^4 d}, \frac{M_{e'} t}{\sigma^2\sqrt{d}}\right) \right]
\end{align}
\end{corollary}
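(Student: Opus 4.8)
The plan is to cast $\inp{\bar{\rvn}_e}{\bar{\rvm}_{e'}}$ as a quadratic form in a single standard Gaussian vector and invoke the Hanson--Wright inequality stated above. First I would note that $S$ and $D$ index the disjoint first- and second-stage splits of the data, so the noise vectors averaged in $\bar{\rvn}_e$ are disjoint from, and hence independent of, those averaged in $\bar{\rvm}_{e'}$; consequently $\bar{\rvn}_e$ and $\bar{\rvm}_{e'}$ are independent with $\bar{\rvn}_e\sim\N(0,\sigma^2 N_e^{-1}\I_d)$ and $\bar{\rvm}_{e'}\sim\N(0,\sigma^2 M_{e'}^{-1}\I_d)$. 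Writing $\bar{\rvn}_e=\sigma N_e^{-1/2}X_1$ and $\bar{\rvm}_{e'}=\sigma M_{e'}^{-1/2}X_2$ with $X=(X_1,X_2)\in\reals^{2d}$ a standard Gaussian vector (so that $\norm{X_i}_{\psi_2}\le K$ for a universal $K$), we obtain $\inp{\bar{\rvn}_e}{\bar{\rvm}_{e'}}=\tfrac{\sigma^2}{\sqrt{N_e M_{e'}}}\inp{X_1}{X_2}=X^\top A X$, where $A\in\reals^{2d\times 2d}$ is the symmetric matrix with zero diagonal blocks and off-diagonal blocks equal to $\tfrac{\sigma^2}{2\sqrt{N_e M_{e'}}}\I_d$.

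Next I would compute the three quantities Hanson--Wright requires. Since $A$ has zero diagonal, $\E[X^\top A X]=\Tr(A)=0$, consistent with $\E\inp{\bar{\rvn}_e}{\bar{\rvm}_{e'}}=0$. The symmetric matrix with zero diagonal blocks and identity off-diagonal blocks has eigenvalues $\pm1$, so $\norm{A}=\tfrac{\sigma^2}{2\sqrt{N_e M_{e'}}}$; and summing the $2d$ nonzero entries gives $\norm{A}_{\mathrm{HS}}^2=\tfrac{\sigma^4 d}{2 N_e M_{e'}}$. Substituting into the inequality and absorbing the universal constant $K$ gives $\sP\{|\inp{\bar{\rvn}_e}{\bar{\rvm}_{e'}}|>t\}\le 2\exp[-c'\min(t^2 N_e M_{e'}/(\sigma^4 d),\, t\sqrt{N_e M_{e'}}/\sigma^2)]$.

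The last step is merely to relax this to the stated form. Under the hypothesis $M_{e'}\le N_e$ we have $N_e M_{e'}\ge M_{e'}^2$ and $\sqrt{N_e M_{e'}}\ge M_{e'}\ge M_{e'}/\sqrt{d}$ (the last inequality since $d\ge1$), so replacing $N_e M_{e'}$ by $M_{e'}^2$ and $\sqrt{N_e M_{e'}}$ by $M_{e'}/\sqrt{d}$ only makes the exponent more negative, which yields the claimed bound for a suitable universal $c$. (One can also note that the inner product is symmetric in its arguments, so the sharp bound really has $\min\{N_e,M_{e'}\}$; the hypothesis simply selects the case relevant to the application.) I do not expect a genuine obstacle here: the only points needing care are writing the bilinear form as a quadratic form in one standard Gaussian vector so that the quoted theorem applies verbatim, and noticing that the target inequality is a deliberately loose weakening (the honest sub-exponential term carries no $\sqrt{d}$, and the honest sub-Gaussian term has $N_e M_{e'}$ rather than $M_{e'}^2$), which is harmless because it is only used to upper bound a failure probability.
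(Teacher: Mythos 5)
Your proposal is correct and follows essentially the same route as the paper: both cast $\langle \bar{\rvn}_e, \bar{\rvm}_{e'}\rangle$ as a quadratic form in the concatenated Gaussian vector and apply Hanson--Wright, the only difference being that you standardize the Gaussians and push the scales into $A$ (obtaining the sharper intermediate exponent with $N_e M_{e'}$ and then relaxing), whereas the paper keeps a $0$--$1$ matrix and absorbs the scales into $K \le C\sigma/\sqrt{\min\{N_e,M_{e'}\}}$. Your bookkeeping of $\norm{A}$ and $\norm{A}_{\mathrm{HS}}$ is in fact cleaner than the paper's, and the final relaxation to the stated (looser) form is valid.
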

\begin{proof}
We take $X$ as the concatenation of $\bar{\rvn}_e$ and $\bar{\rvm}_{e'}$, then $A$ is set such that $X^\top A X = \langle \bar{\rvn}_e, \bar{\rvm}_{e'} \rangle$ (e.g. $A_{i, i+d}=1$ for $1 \leq i \leq d$ and $0$ elsewhere).
Then $\norm{A}_{HS}^2 = d$ and $\norm{A}=\sqrt{d}$. Since entries in $\bar{\rvn}_e, \bar{\rvm}_{e'}$ are distributed as $\N(0, \frac{\sigma^2}{N_e}), \N(0, \frac{\sigma^2}{M_e})$ respectively, we have $K \leq C \frac{\sigma}{\sqrt{\min{\{N_e, M_{e'}\}}}}$ (assume w.l.o.g that $M_{e'} < N_e$) for some universal constant $C$ which we can incorporate into the constant $c$ in the theorem. This gives:
\begin{align*}
    \sP \left\{ | \langle \bar{\rvn}_e, \bar{\rvm}_{e'} \rangle | > t \right\} \leq 2\exp\left[ -c \min\left(\frac{M_{e'}^2 t^2}{\sigma^4 d}, \frac{M_{e'} t}{\sigma^2\sqrt{d}}\right) \right]
\end{align*}
\end{proof}

The next statement collects all of the concentration results we require for the other parts of the proof.
\begin{lemma} \label{lem:positive_result_concentrations}
Define $r:= \frac{\Delta \theta_{\max}}{\Delta + 4\theta_{\max}}$ where $\theta_{\max}:=\mathrm{arg}\max_{e\in{\{1,2\}}}\{|\theta_e|\}$, denote by $v^*$ the solution retrieved by the algorithm, and let $t>0$. When \assumref{ass:norm_constraints_pos_result} holds,
then with probability at least \succesprobpos we have that all the following events occur simultaneously (for all $e,e'\in{\{1, 2\}}$):
\begin{align}
    |\langle \bar{\rvn}_e, \mu_s \rangle| \leq& r \norm{\mu_s}^2 \label{eq:n_mu_assum}\\
    |\langle \bar{\rvn}_e, \mu_c \rangle| \leq& \frac{\Delta}{24} \norm{\mu_c}^2 \label{eq:n_muc_assum}\\
    |\langle \bar{\rvm}_{e,1}, \mu_c + \theta_{e'}\mu_s \rangle| \leq& \min \left\{ \frac{1}{4} r \Delta \norm{\mu_s}^2, \frac{\Delta}{36}\norm{\mu_c}^2 \right\} \label{eq:m_mus_assum}\\
    |\langle \bar{\rvm}_{e,1}, \mu_s  \rangle| \leq& \frac{\Delta}{64}\norm{\mu_c}^2 \label{eq:m_mu_spu_assum}\\
    |\langle \bar{\rvn}_e, \bar{\rvm}_{e',1} \rangle| \leq& \min\left\{\frac{1}{4} r\Delta \norm{\mu_s}^2, \frac{\Delta^2}{288} \norm{\mu_c}^2\right\} \label{eq:noise_prod_assum} \\
    \left| \langle \bar{\rvm}_e, \left(\mu_c + \theta_{e'}\mu_s\right) \rangle \right| \leq& \frac{1}{48}\Delta\cdot{\norm{\mu_c}}^2  \label{eq:m_mus_assum2}\\
    \left| \langle \bar{\rvn}_e, \bar{\rvm}_{e'} \rangle \right| \leq& \frac{1}{48}\Delta\cdot{\norm{\mu_c}}^2 \label{eq:noise_prod_assum2} \\
    \norm{\bar{\rvn}_e} \leq& t \sqrt{\frac{2\sigma^2 d}{N_e}} \label{eq:noise_norm_assum}
\end{align}
\end{lemma}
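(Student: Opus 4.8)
The key observation is that, conditional on the random orthogonal matrix $\rmU$ (equivalently, on $\vmu_c$ and $\vmu_s$), every averaged noise vector in the statement is a centered isotropic Gaussian: $\bar{\rvn}_e\sim\N(\vzero,\tfrac{\sigma^2}{N_e}\rmI_d)$, $\bar{\rvm}_e\sim\N(\vzero,\tfrac{\sigma^2}{M_e}\rmI_d)$ and $\bar{\rvm}_{e,1}\sim\N(\vzero,\tfrac{\sigma^2}{M_{e,1}}\rmI_d)$, and the first-stage vectors $\{\bar{\rvn}_e\}$ are independent of the second-stage vectors $\{\bar{\rvm}_e,\bar{\rvm}_{e,1}\}$ since they are built from disjoint samples. (Note that none of the eight displayed inequalities actually involves the data-dependent direction $\rvv^\star$, so there is no measurability subtlety here.) The plan is to bound the probability that each inequality fails by a fixed multiple of $\exp(-t^2)$ and then take a union bound over the $O(1)$ listed events.

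For the inner products with a fixed direction --- \eqref{eq:n_mu_assum}, \eqref{eq:n_muc_assum}, \eqref{eq:m_mus_assum}, \eqref{eq:m_mu_spu_assum} and \eqref{eq:m_mus_assum2} --- I would use that, conditional on $\rmU$, each scalar such as $\langle\bar{\rvn}_e,\vmu\rangle$ is distributed $\N\!\big(0,\tfrac{\sigma^2\norm{\vmu}^2}{N_e}\big)$, so that $\P\!\big(|\langle\bar{\rvn}_e,\vmu\rangle|>s\big)\le 2\exp\!\big(-\tfrac{s^2 N_e}{2\sigma^2\norm{\vmu}^2}\big)$ (and analogously with $M_e$ or $M_{e,1}$ for $\bar{\rvm}_e,\bar{\rvm}_{e,1}$); for $\vmu=\vmu_c+\theta_{e'}\vmu_s$ one uses $\vmu_c\perp\vmu_s$ to get $\norm{\vmu_c+\theta_{e'}\vmu_s}^2=\norm{\vmu_c}^2+\theta_{e'}^2\norm{\vmu_s}^2\le\norm{\vmu_c}^2+\norm{\vmu_s}^2$. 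Setting $s$ equal to the target right-hand side in each case, the resulting exponent is a universal constant times $\norm{\vmu}^2$ times the relevant sample count over $\sigma^2$; \assumref{ass:norm_constraints_pos_result} lower-bounds $\norm{\vmu_c}^2$ and $\norm{\vmu_s}^2$ by precisely the term of each maximum needed to drive every such exponent to at least $t^2$, provided the universal constants $c_c,c_s$ are chosen large enough (here one also uses that $r$ is comparable to $\Delta$ since $\theta_{\max}\in(0,1]$, and without loss of generality $t\ge 1$).

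For the noise--noise inner products \eqref{eq:noise_prod_assum} and \eqref{eq:noise_prod_assum2} I would invoke \corref{corr:noises_inner_prod}: since $\bar{\rvn}_e$ and $\bar{\rvm}_{e'}$ (respectively $\bar{\rvm}_{e',1}$) are independent given $\rmU$, the Hanson--Wright estimate of that corollary gives $\P\!\big(|\langle\bar{\rvn}_e,\bar{\rvm}_{e'}\rangle|>s\big)\le 2\exp\!\big[-c\min\!\big(\tfrac{M_{e'}^2 s^2}{\sigma^4 d},\tfrac{M_{e'}s}{\sigma^2\sqrt d}\big)\big]$. Plugging in the target values of $s$ --- constant multiples of $\Delta\norm{\vmu_c}^2$, $\Delta^2\norm{\vmu_c}^2$, or $r\Delta\norm{\vmu_s}^2$ --- and using exactly the $\tfrac{\sqrt d}{M_e\Delta}$, $\tfrac{\sqrt d}{M_e}$, $\tfrac{1}{\Delta^2 M_e}$ and $\tfrac{1}{(r\Delta)^2 M_e}$ terms in the maxima of \assumref{ass:norm_constraints_pos_result}, both arguments of the $\min$ exceed $t^2$, so these failures are also $\le 2e^{-t^2}$. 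Finally, for \eqref{eq:noise_norm_assum}, $\norm{\bar{\rvn}_e}^2\sim\tfrac{\sigma^2}{N_e}\chi^2_d$, and the standard Gaussian-norm concentration inequality (the same tool used in \lemref{lem:high_prob_events}) gives $\norm{\bar{\rvn}_e}\le\sqrt{\sigma^2 d/N_e}+t\sqrt{\sigma^2/N_e}\le t\sqrt{2\sigma^2 d/N_e}$ with probability $\ge 1-e^{-t^2}$ once $t^2\le d$, which holds under the dimension hypotheses. A union bound over the (at most seventeen distinct) events then yields the claimed overall failure probability $34\exp(-t^2)$.

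No single step should be hard; the work here is essentially bookkeeping. The one thing that genuinely requires care is the matching between the many target inequalities and the individual terms sitting inside the maxima of \assumref{ass:norm_constraints_pos_result}: one must verify that for each inequality some term in the appropriate maximum makes the corresponding Gaussian or Hanson--Wright exponent at least $t^2$, that a single choice of the universal constants $c_c,c_s$ (and of the Hanson--Wright constant) works for all of them at once, and that the three distinct sample counts $N_e$, $M_e$, $M_{e,1}$ are tracked correctly throughout.
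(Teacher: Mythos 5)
Your proposal is correct and follows essentially the same route as the paper: Gaussian tail bounds for the inner products of averaged noise with the fixed directions $\vmu_c$, $\vmu_s$ and $\vmu_c+\theta_{e'}\vmu_s$, the Hanson--Wright corollary (\corref{corr:noises_inner_prod}) for the noise--noise inner products, a standard Gaussian-norm bound for \eqref{eq:noise_norm_assum}, and a union bound, with each exponent driven to order $t^2$ by the matching term in \assumref{ass:norm_constraints_pos_result}. The only substance of the proof is indeed the bookkeeping you identify, and your sketch handles it correctly.
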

\begin{proof}

We first treat \eqref{eq:n_mu_assum} with a tail bound for Gaussian variables:
\begin{align*}
    \langle \bar{\rvn}_e, \mu_s
    \rangle \sim \N (0, \frac{\sigma^2\norm{\mu_s}^2}{N_e}) \Rightarrow \sP\left(|\langle \bar{\rvn}_e, \mu_s\rangle| > t_2 \right) \leq 2\exp\left( -\frac{t_2^2 N_e}{2\sigma^2\norm{\mu_s}^2} \right)
\end{align*}
Hence as long as $\norm{\mu_s}^2 \geq t\frac{2\sigma^2}{r^2 N_e}$, \eqref{eq:n_mu_assum} holds with probability at least $1-4\exp\{-t^2\}$ (since we take a union bound on the two environments). Following the same inequality and taking a union bound, \eqref{eq:n_muc_assum} also hold with probability at least $1-8\exp\{-t^2\}$ if $\norm{\mu_c}^2 \geq t\frac{1152\sigma^2}{\Delta^2N_e}$.

We use the same bound for \eqref{eq:m_mus_assum}, \eqref{eq:m_mu_spu_assum} and \eqref{eq:m_mus_assum2} while using $|\theta_e|\leq 1$. Hence for $t_2=\frac{1}{4} r \Delta \norm{\mu_s}^2$ and $t_2=\frac{\Delta}{36}\norm{\mu_c}^2$:
\begin{align*}
    \sP\left(|\langle \bar{\rvm}_{e,1}, \mu_c + \theta_{e'}\mu_s \rangle| > t_2 \right) &\leq 2\exp\left( -\frac{t_2^2 M_{e,1}}{2\sigma^2\norm{\mu_c + \theta_{e'}\mu_s}^2} \right) = 2\exp\left( -\frac{(r\Delta)^2\norm{\mu_s}^4 M_{e,1}}{32\sigma^2\norm{\mu_c + \theta_{e'}\mu_s}^2} \right) \\
    &\leq 2\exp\left( -\frac{(r\Delta)^2\norm{\mu_s}^2 M_{e,1}}{128\sigma^2} \right) \\
    \sP\left(|\langle \bar{\rvm}_{e,1}, \mu_c + \theta_{e'}\mu_s \rangle| > t_2 \right) &\leq 2\exp\left( -\frac{\Delta^2 \norm{\mu_c}^4 M_{e,1}}{2592\sigma^2\norm{\mu_c + \theta_{e'}\mu_s}^2} \right) = 2\exp\left( -\frac{\Delta^2 \norm{\mu_c}^2 M_{e,1}}{10368\sigma^2r^2_\mu} \right)
\end{align*}
Similarly with $t_2 = \frac{1}{48}\Delta\cdot\norm{\mu_c}^2$:
\begin{align*}
    \sP\left( \left| \langle \bar{\rvm}_e, \left(\mu_c + \theta_{e'}\mu_s\right) \rangle \right| > t_2 \right) \leq 2\exp\left( -\frac{\Delta^2\norm{\mu_c}^4 M_{e}}{(48\sigma\norm{\mu_c + \theta_{e'}\mu_s})^2} \right)
\end{align*}
Taking the required union bounds we get that with probability at least $1-24\exp{\left( -t^2 / 2 \right)}$ \eqref{eq:m_mus_assum}, \eqref{eq:m_mu_spu_assum} and \eqref{eq:m_mus_assum2} hold, as long as $\norm{\mu_s}^2 \geq t\cdot 128\sigma^2((r\Delta)^2 M_{e,1})^{-1}$ and $\norm{\mu_c}^2 \geq t\cdot\max\left\{10368\sigma^2r^2_\mu\left( \Delta^2 M_{e,1} \right)^{-1} , (96\sigma r_{\mu})^2(\Delta^2 M_e)^{-1}\right\}$.

For \eqref{eq:noise_prod_assum} and \eqref{eq:noise_prod_assum2} we use \corref{corr:noises_inner_prod}: \footnote{For simplicity, assume we have $\sqrt{M_{1,1}^{-2} + M_{2,1}^{-2}} \leq N^{-1}_1$ and that we set $t$ large enough such that $\left(M_{1,1}^{-1} + M_{2,1}^{-1}\right)^{-2}t^2 / (\sigma^4 d) \geq \left(M_{1,1}^{-1} + M_{2,1}^{-1}\right)^{-1}t / (\sigma^2 \sqrt{d})$}
\begin{align*}
    \sP\left\{ |\langle \bar{\rvn}_e, \bar{\rvm}_{e',1} \rangle| \geq t_2  \right\} \leq 2\exp \left[ -c \frac{M^2_{e', 1}t_2^2}{\sigma^4 d} \right]
\end{align*}
Setting $t_2=\frac{r\Delta}{4}\norm{\mu_s}^2$ or $t_2=\frac{\Delta^2}{288}\norm{\mu_c}^2$ we will get that:
\begin{align*}
    &\sP\left( |\langle \bar{\rvn}_e, \bar{\rvm}_{e',1} \rangle| \geq \min\left\{ \frac{r\Delta}{4}\norm{\mu_s}^2,  \frac{\Delta^2}{288}\norm{\mu_c}^2\right\} \right) \leq \\
    &\qquad\qquad 2\exp\left( -c\frac{M^2_{e',1}}{\sigma^4 d} \min \left\{ \frac{(r\Delta)^2}{16}\norm{\mu_s}^4, \frac{\Delta^4}{288^2}\norm{\mu_c}^4\right\} \right)
\end{align*}
Hence we require $\norm{\mu_c}^2 \geq t\cdot c \cdot(M_{e', 1}\Delta^2)^{-1}\cdot(288 \sigma^2\sqrt{d})$ and $\norm{\mu_s}^2 \geq t\cdot c \cdot(M_{e', 1}r\Delta)^{-1}\cdot(4 \sigma^2\sqrt{d})$ for \eqref{eq:noise_prod_assum} to hold. For \eqref{eq:noise_prod_assum2} we can get in a similar manner that it holds in case that $\norm{\mu_c}^2 \geq t \cdot c \cdot (M_{e'} \Delta)^{-1}(48\sigma^2 \sqrt{d})$. The probability for all the events listed so far to occur is at last $1-32\exp\left(-t^2/2\right)$. Finally, for \eqref{eq:noise_norm_assum} we simply use the bound on a norm of Gaussian vector:
\begin{align*}
    \sP\left( \norm{\bar{\rvn}_e} \geq t_2\right) \leq 2\exp\left(-\frac{t_2^2N_e}{2\sigma^2 d}\right)
\end{align*}
Plugging in $t\sqrt{\frac{2\sigma^2 d}{N_e}}$ we arrive at the desired result with a final union bound that give the overall probability of at least $1-34\exp\left(-t^2/2\right)$.
\end{proof}

We now use the bounds above to write down the specific bounds on expressions that we used during proof.
\begin{corollary} \label{corro:positive_result_required_events}
Conditioned on all the events in \lemref{lem:positive_result_concentrations}, we have for $e\in{\{1,2\}}$ that:
\begin{align}
\frac{\Delta}{6}|v_1 + v_2|\cdot\norm{\mu_c}^2 &\geq |\epsilon_2(\rvv) - \epsilon_1(\rvv)| \label{eq:event_corro_eq1}\\
\frac{1}{6}|v_1 + v_2|\cdot\norm{\mu_c}^2 &\geq |\langle \mu_c , v_1\bar{\rvn}_1 + v_2\bar{\rvn}_2\rangle| \label{eq:event_corro_eq2}\\
\frac{1}{6}|v_1 + v_2|\cdot\norm{\mu_c}^2 &\geq |\langle \bar{\rvm}_e, (v_1+v_2)\mu_c + (\theta_1v_1 + \theta_2v_2)\mu_s + v_1\bar{\rvn}_1 + v_2\bar{\rvn}_2 \rangle | \label{eq:event_corro_eq3}\\
r\Delta \norm{\mu_s}^2 &\geq |\langle \bar{\rvm}_{1,1} - \bar{\rvm}_{2,1}, \mu_c + \theta_e\mu_s + \bar{\rvn}_e \rangle| \label{eq:event_corro_eq4}\\
r \norm{\mu_s}^2 &\geq |\langle\bar{\rvn}_e, \mu_s\rangle| \label{eq:event_corro_eq5}\\
5t\sqrt{\frac{\sigma^2 d}{\min_e{N_e}}} &\geq \norm{\w} \label{eq:event_corro_eq6}
\end{align}
\end{corollary}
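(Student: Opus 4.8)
The plan is to prove \corref{corro:positive_result_required_events} as a purely deterministic consequence of \lemref{lem:positive_result_concentrations}: once the noise inner products \eqref{eq:n_mu_assum}--\eqref{eq:noise_norm_assum} are controlled, each of \eqref{eq:event_corro_eq1}--\eqref{eq:event_corro_eq6} will follow by expanding the relevant vector along the directions $\mu_c$, $\mu_s$ and $\bar{\rvn}_1,\bar{\rvn}_2$ and applying Cauchy--Schwarz and the triangle inequality. The only facts about the returned coefficient vector $\rvv=(v_1,v_2)$ that I would use are: $\norm{\rvv}_\infty=1$ (hence $|v_1|,|v_2|\le 1$ and $|v_1|+|v_2|\le 2$), together with $|v_1+v_2|\ge \Delta/2$ and $|v_1\theta_1+v_2\theta_2|\le 2r<\Delta/2$. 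Since these last two bounds come precisely from combining the EOpp constraint \eqref{eq:EOpp_constraint} with the inequalities \eqref{eq:event_corro_eq4}--\eqref{eq:event_corro_eq5} (this is the content of the proof of \lemref{lem:v_sum_bound}), I would derive the corollary in the order: first \eqref{eq:event_corro_eq5} and \eqref{eq:event_corro_eq4}, then the coefficient bounds, then the rest.

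First, \eqref{eq:event_corro_eq5} is literally \eqref{eq:n_mu_assum}. For \eqref{eq:event_corro_eq4} I would write $\langle \bar{\rvm}_{1,1}-\bar{\rvm}_{2,1},\,\mu_c+\theta_e\mu_s+\bar{\rvn}_e\rangle$ as a sum of four inner products, two of the form $\langle\bar{\rvm}_{j,1},\mu_c+\theta_e\mu_s\rangle$ (each $\le\frac14 r\Delta\norm{\mu_s}^2$ by \eqref{eq:m_mus_assum}) and two of the form $\langle\bar{\rvm}_{j,1},\bar{\rvn}_e\rangle$ (each $\le\frac14 r\Delta\norm{\mu_s}^2$ by \eqref{eq:noise_prod_assum}), giving the bound $r\Delta\norm{\mu_s}^2$. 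With \eqref{eq:event_corro_eq4}--\eqref{eq:event_corro_eq5} established, I would re-run the manipulation in the proof of \lemref{lem:v_sum_bound}: from $-v_1(\theta_1+a)=v_2(\theta_2+b)$ with $\max\{|a|,|b|\}\le r$ one gets both $|v_1+v_2|\ge\Delta/2$ and $v_1\theta_1+v_2\theta_2=-(v_1a+v_2b)$, hence $|v_1\theta_1+v_2\theta_2|\le 2r$.

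The remaining bounds then reduce to careful accounting. For \eqref{eq:event_corro_eq2}, $|\langle\mu_c,v_1\bar{\rvn}_1+v_2\bar{\rvn}_2\rangle|\le(|v_1|+|v_2|)\max_e|\langle\bar{\rvn}_e,\mu_c\rangle|\le 2\cdot\frac{\Delta}{24}\norm{\mu_c}^2\le\frac16|v_1+v_2|\norm{\mu_c}^2$ by \eqref{eq:n_muc_assum} and $\Delta\le 2|v_1+v_2|$. For \eqref{eq:event_corro_eq6}, I would write $\w=v_1\w_1+v_2\w_2$, bound $\norm{\w}\le 2\norm{\mu_c}+2\norm{\mu_s}+\norm{\bar{\rvn}_1}+\norm{\bar{\rvn}_2}$, and use \eqref{eq:noise_norm_assum} together with the fact that in the regime of \assumref{ass:norm_constraints_pos_result} the signal norms are dominated by $t\sqrt{\sigma^2 d/N_{\min}}$. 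For \eqref{eq:event_corro_eq1} and \eqref{eq:event_corro_eq3} I would expand $\epsilon_2(\rvv)-\epsilon_1(\rvv)=\langle\bar{\rvm}_{2,1}-\bar{\rvm}_{1,1},\,(v_1+v_2)\mu_c+(v_1\theta_1+v_2\theta_2)\mu_s+v_1\bar{\rvn}_1+v_2\bar{\rvn}_2\rangle$ and its $\bar{\rvm}_e$ analogue. The choice of grouping matters: for \eqref{eq:event_corro_eq3} only the combined bounds \eqref{eq:m_mus_assum2}--\eqref{eq:noise_prod_assum2} on $\langle\bar{\rvm}_e,\mu_c+\theta_{e'}\mu_s\rangle$ and $\langle\bar{\rvm}_e,\bar{\rvn}_{e'}\rangle$ are available, so I would keep the grouping $v_1(\mu_c+\theta_1\mu_s)+v_2(\mu_c+\theta_2\mu_s)$, bound the signal and noise parts each by $2\cdot\frac{\Delta}{48}\norm{\mu_c}^2$, and convert $\Delta$ to $|v_1+v_2|$ at the end; for \eqref{eq:event_corro_eq1} I would instead split off the $\mu_c$ part, keeping its $(v_1+v_2)$ factor (its coefficient $|\langle\bar{\rvm}_{2,1}-\bar{\rvm}_{1,1},\mu_c\rangle|$ is $O(\Delta\norm{\mu_c}^2)$ via \eqref{eq:m_mus_assum} and \eqref{eq:m_mu_spu_assum}), and the $\mu_s$ part, whose coefficient $v_1\theta_1+v_2\theta_2$ is itself $O(\Delta)$ by the EOpp-derived bound; the pure-noise term is already $O(\Delta^2)$ by \eqref{eq:noise_prod_assum}. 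Collecting these yields an $O(\Delta|v_1+v_2|\norm{\mu_c}^2)$ bound fitting under $\frac{\Delta}{6}|v_1+v_2|\norm{\mu_c}^2$.

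The main obstacle is bookkeeping, not any genuine difficulty: the constants in \eqref{eq:n_mu_assum}--\eqref{eq:noise_norm_assum} are engineered so the relevant sums close with the stated factors, often tightly, which forces one to (i) select the $\norm{\mu_c}^2$-branch of the minima in \eqref{eq:m_mus_assum} and \eqref{eq:noise_prod_assum} when proving \eqref{eq:event_corro_eq1} and \eqref{eq:event_corro_eq3}, reserving the $\norm{\mu_s}^2$-branch for \eqref{eq:event_corro_eq4}; (ii) use the EOpp-derived smallness of $v_1\theta_1+v_2\theta_2$ at every term along $\mu_s$, never the crude bound $|v_1|+|v_2|\le 2$; and (iii) invoke $\Delta\le 2|v_1+v_2|$ only to convert residual $\Delta^2$ factors, never applying it to the leading term that already carries a $(v_1+v_2)$ factor. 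With these choices made consistently, all six inequalities follow.
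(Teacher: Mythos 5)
Your proposal follows the same overall route as the paper: establish \eqref{eq:event_corro_eq5} and \eqref{eq:event_corro_eq4} first from \eqref{eq:n_mu_assum}, \eqref{eq:m_mus_assum} and \eqref{eq:noise_prod_assum}, feed them into the argument of \lemref{lem:v_sum_bound} to obtain $|v_1+v_2|\ge\Delta/2$ (equivalently $\norm{\rvv}_1\le\frac{4}{\Delta}|v_1+v_2|$), and then close \eqref{eq:event_corro_eq1}--\eqref{eq:event_corro_eq3} by triangle-inequality bookkeeping with exactly the groupings the constants in \lemref{lem:positive_result_concentrations} were engineered for. Your treatment of \eqref{eq:event_corro_eq1} differs slightly from the paper's: you extract $|v_1\theta_1+v_2\theta_2|\le 2r\le\Delta/2$ directly from the normalized EOpp identity $-v_1(\theta_1+a)=v_2(\theta_2+b)$, whereas the paper splits the signal part as $(v_1+v_2)\bigl(\mu_c+\tfrac12(\theta_1+\theta_2)\mu_s\bigr)+\tfrac12(v_1-v_2)\Delta\mu_s$ and never isolates $v_1\theta_1+v_2\theta_2$; both decompositions work, and yours is in fact slightly tighter on the $\mu_s$ term.

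The one step that does not go through as written is \eqref{eq:event_corro_eq6}. You bound $\norm{\w}\le 2\norm{\mu_c}+2\norm{\mu_s}+\norm{\bar{\rvn}_1}+\norm{\bar{\rvn}_2}$ and then invoke ``the fact that in the regime of \assumref{ass:norm_constraints_pos_result} the signal norms are dominated by $t\sqrt{\sigma^2 d/N_{\min}}$.'' But \assumref{ass:norm_constraints_pos_result} only provides \emph{lower} bounds on $\norm{\mu_s}^2$ and $\norm{\mu_c}^2$; no upper bound on $\norm{\mu_s}$ is available at this point, and $\norm{\mu_s}$ is the larger of the two signals since $r_\mu>1$. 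The paper's proof is built precisely to avoid needing one: it writes $\w=(v_1^*+v_2^*)\mu_c+(v_1^*\theta_1+v_2^*\theta_2)\mu_s+v_1^*\bar{\rvn}_1+v_2^*\bar{\rvn}_2$ and uses the EOpp-derived inequality from \lemref{lem:retrieved_solution}, $(v_1^*\theta_1+v_2^*\theta_2)\norm{\mu_s}^2+2\langle v_1^*\bar{\rvn}_1+v_2^*\bar{\rvn}_2,\mu_s\rangle\le\frac13(v_1^*+v_2^*)\norm{\mu_c}^2$, to absorb the entire $\mu_s$-dependent contribution into a multiple of $\norm{\mu_c}$, so that only an (implicit) upper bound on $\norm{\mu_c}$ enters the final comparison with $5t\sqrt{\sigma^2 d/N_{\min}}$. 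Your own observation that $|v_1\theta_1+v_2\theta_2|\le 2r$ yields a repair in the same spirit --- the $\mu_s$-component of $\w$ has norm at most $2r\,\norm{\mu_s}\le\frac{\Delta}{2}r_\mu\norm{\mu_c}$ with $r_\mu$ fixed --- but the crude coefficient-$2$ bound you wrote down for the $\mu_s$ direction does not follow from the stated hypotheses and would not recover the constant $5$.
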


\begin{proof}
\eqref{eq:event_corro_eq5} is just \eqref{eq:n_mu_assum} restated for convenience. \eqref{eq:event_corro_eq4} is a combination of \eqref{eq:m_mus_assum} and \eqref{eq:noise_prod_assum}:
\begin{align*}
    |\langle \bar{\rvm}_{1,1} - \bar{\rvm}_{2,1}, \mu_c + \theta_e\mu_s + \bar{\rvn}_e \rangle| \leq \sum_{e'} | \langle \bar{\rvm}_{e',1}, \mu_c + \theta_e\mu_s \rangle | + | \langle \bar{\rvm}_{e',1}, \bar{\rvn}_e \rangle| \leq r\Delta\norm{\mu_s}^2
\end{align*}
These are the events required for \lemref{lem:v_sum_bound}, hence from now on we can now assume that:
\begin{align*}
    |v_1 + v_2| \geq \frac{\Delta}{2} = \frac{\Delta}{4}\cdot 2 \geq \frac{\Delta}{4}\norm{\rvv}_1
\end{align*}
Now we can combine with \eqref{eq:n_muc_assum} to prove \eqref{eq:event_corro_eq2}:
\begin{align*}
    \langle \mu_c, v_1\bar{\rvn}_1 + v_2\bar{\rvn}_2 \rangle \leq \sum_{e}{|v_e|\cdot|\langle \mu_c, \bar{\rvn}_e \rangle|} \leq \norm{\rvv}_1 \frac{\Delta}{24}\norm{\mu_c}^2 \leq \frac{1}{6}|v_1 + v_2|\cdot\norm{\mu_c}^2
\end{align*}
Next we prove \eqref{eq:event_corro_eq3} in a similar manner using \eqref{eq:m_mus_assum2} and \eqref{eq:noise_prod_assum2}:
\begin{align*}
    |\langle \bar{\rvm}_e,& (v_1+v_2)\mu_c + (\theta_1v_1 + \theta_2v_2)\mu_s + v_1\bar{\rvn}_1 + v_2\bar{\rvn}_2 \rangle| \leq \\
    &\sum_{e'}{|v_{e'}|\cdot\left(|\langle \bar{\rvm}_e, \mu_c + \theta_{e'}\mu_s \rangle| + |\langle \bar{\rvm}_e, \bar{\rvn}_{e'} \rangle|\right)} \leq \norm{\rvv}_1\cdot 2\cdot\frac{1}{48}\Delta\norm{\mu_c}^2 \leq \frac{1}{6}|v_1 + v_2|\cdot\norm{\mu_c}^2
\end{align*}

For \eqref{eq:event_corro_eq1}, let us write the right hand side:
\begin{align*}
\left| \epsilon_2(\rvv) - \epsilon_1(\rvv) \right| &= \left| \langle \bar{\rvm}_{2,1}-\bar{\rvm}_{1,1}, v_1(\mu_c + \theta_1\mu_s +\bar{\rvn}_1) + v_2(\mu_c + \theta_2\mu_s +\bar{\rvn}_2) \rangle \right| \\
& = |(v_1 + v_2)\cdot\langle \bar{\rvm}_{2,1}-\bar{\rvm}_{1,1}, \mu_c + \half(\theta_1+\theta_2)\mu_s \rangle \\
& + \langle \bar{\rvm}_{2,1}-\bar{\rvm}_{1,1}, v_1\bar{\rvn}_1 + v_2\bar{\rvn}_2 \rangle + \half(v_1-v_2)\langle \bar{\rvm}_{2,1}-\bar{\rvm}_{1,1}, \Delta\mu_s \rangle| \\
& \leq |v_1 + v_2|\cdot\sum_{e}{|\langle \bar{\rvm}_{e,1}, \mu_c + \half(\theta_1+\theta_2\mu_s) \rangle|} + \norm{\rvv}_1\sum_{e,e'}{|\langle \bar{\rvm}_{e,1}, \bar{\rvn}_{e'} \rangle|} \\
& + \half \Delta \norm{\rvv}_1 \sum_{e}{|\langle \bar{\rvm}_{e,1}, \mu_s  \rangle|} \\
& \leq |v_1 + v_2|\cdot\sum_{e}{|\langle \bar{\rvm}_{e,1}, \mu_c + \half(\theta_1+\theta_2\mu_s) \rangle|} + \frac{4}{\Delta}|v_1 + v_2|\sum_{e,e'}{|\langle \bar{\rvm}_{e,1}, \bar{\rvn}_{e'} \rangle|} \\
& + 2 |v_1 + v_2| \sum_{e}{|\langle \bar{\rvm}_{e,1}, \mu_s  \rangle|} \\
&\leq \frac{1}{6}\Delta|v_1 + v_2|
\end{align*}
The first inequality is simply a triangle inequality, the second plugs in the bound we obtained for $\norm{\rvv}_1$ and the last uses the relevant inequalities from \lemref{lem:positive_result_concentrations}.

For \eqref{eq:event_corro_eq6}, we write the weights of the returned linear classifier as:
\begin{align*}
    \w = v^*_1(\mu_c + \theta_1\mu_s + \bar{\rvn}_1) + v^*_2(\mu_c + \theta_2\mu_s + \bar{\rvn}_2)
\end{align*}
Hence we can bound:
\begin{align*}
\norm{\w} -& (v^*_1+v^*_2) \norm{\mu_c} \leq \norm{(v^*_1\theta_1 + v^*_2\theta_2)\mu_s + v^*_1\bar{\rvn}_1 + v^*_2\bar{\rvn}_2} \\
&= \sqrt{(v_1^*\theta_1 + v_2^*\theta_2)^2\norm{\mu_s}^2 + 2\langle v_1^*\bar{\rvn}_1 + v_2^*\bar{\rvn}_2 , (v_1^*\theta_1 + v_2^*\theta_2)\mu_s \rangle + \norm{v_1^*\bar{\rvn}_1 + v_2^*\bar{\rvn}_2}^2} \\
&=\sqrt{(v_1^*\theta_1 + v_2^*\theta_2)\left((v_1^*\theta_1 + v_2^*\theta_2)\norm{\mu_s}^2 + 2\langle v_1^*\bar{\rvn}_1 + v_2^*\bar{\rvn}_1, \mu_s \rangle\right) + \norm{v_1^*\bar{\rvn}_1 + v_2^*\bar{\rvn}_2}^2}
\end{align*}
We also proved in \lemref{lem:retrieved_solution}, that under the events we assumed and the EOpp constraint:
\begin{align*}
    (v_1^*\theta_1 + v_2^*\theta_2)\norm{\mu_s}^2 + 2\langle v_1^*\bar{\rvn}_1 + v_2^*\bar{\rvn}_2, \mu_s \rangle &\leq 2\left((v_1^*\theta_1 + v_2^*\theta_2)\norm{\mu_s}^2 + |\langle v_1^*\bar{\rvn}_1 + v_2^*\bar{\rvn}_2, \mu_s \rangle|)\right) \\
    &\leq \frac{1}{3}(v_1^* + v_2^*)\norm{\mu_c}^2
\end{align*}
Incorporating with $v_1^*\theta_1 + v_2^*\theta_2 \leq 2(v_1^* + v_2^*)$, the concavity of the square root and \eqref{eq:noise_norm_assum}, we get:
\begin{align*}
    \norm{\w} &\leq \left(1 + \sqrt{2 / 3}\right)(v_1^* + v_2^*)\norm{\mu_c} + \norm{v_1^*\bar{\rvn}_1 + v_2^*\bar{\rvn}_2} \\
    &\leq \left(1 + \sqrt{2 / 3}\right)(v_1^* + v_2^*)\norm{\mu_c} + \norm{\bar{\rvn}_1} + \norm{\bar{\rvn}_2} \\
    &\leq \left(1 + \sqrt{2 / 3}\right)(v_1^* + v_2^*)\norm{\mu_c} + t\cdot\sqrt{\frac{\sigma^2 d}{\min_e{N_e}}} \\
    &\leq 4\norm{\mu_c} + t\cdot\sqrt{\frac{\sigma^2 d}{\min_e{N_e}}} \\
    &\leq 5t\cdot\sqrt{\frac{\sigma^2 d}{\min_e{N_e}}}
\end{align*}
\end{proof}
\section{Proof of \Cref{thm:main_statement}}\label{sec:app_main_statement_proof}
\yc{todo: restate}
\begin{proof}[Proof of \Cref{thm:main_statement}]
Our proof simply consists of choosing the free parameters in \Cref{thm:main_statement} ($r_c,r_s,d$ and $\sigma$) based on \Cref{prop:neg_result_details,prop:positive_result_main,prop:achievable_margin} such that all the claims in the theorem hold simultaneously. Keeping in line with the setting of \Cref{prop:neg_result_details,prop:achievable_margin}, we take $\sigma^2 = 1/d$. Next, our strategy is to pick $r_s$ and $r_c$ so as to satisfy the requirements of \Cref{prop:neg_result_details,prop:achievable_margin}, and then pick a sufficiently large $d$ so that the requirements of \Cref{prop:positive_result_main} hold as well. Throughout, we set $\delta=99/100$ so as to meet the failure probability requirement stated in the theorem; it is straightforward to adjust the proof to guarantee lower error probabilities. 

Starting with the value of $r_s$, we let
\begin{equation*}
    r_s^2 = \frac{\min\{c_n, c_n'\}}{N}
\end{equation*}
where the parameters $c_n, c_m$ and $c_n'$ are as given by \Cref{prop:neg_result_details,prop:achievable_margin}, respectively.
Next, we pick $r_c$ to be
\begin{equation*}
    r_c^2 = 
    \frac{r_s^2}{C_r \left(1+ \frac{\sqrt{N_2}}{N_1\gamma} \right)}
    = \frac{\min\{c_n, c_n'\}}{C_r N \left(1+ \frac{\sqrt{N_2}}{N_1\gamma} \right)}
\end{equation*}
with $C_r$ from \Cref{prop:neg_result_details} (this setting guarantees $r_c \le r_s$ as $C_r\ge 1$). Thus, we have satisfied the requirements in \eqref{eq:negative_res_norm_constraint} in \Cref{prop:neg_result_details}, as well as the requirement $\max\{r_c,r_s\} \le \frac{c_n'}{N}$ in \Cref{prop:achievable_margin}; it remains to choose $d$ so that the remaining requirements hold.

\Cref{prop:neg_result_details} requires the dimension to satisfy $d \ge C_d \frac{N}{\gamma^2 N_1^2 r_c^2} \log\frac{1}{\delta}$ and \Cref{prop:achievable_margin} requires $d \ge C_d' N^2 \log\frac{1}{\delta}$. Substituting our choices of $\sigma^2 = 1/d$, $r_s$ and $r_c$ above, let us rewrite the requirements of \Cref{prop:positive_result_main} as lower bounds on $d$. The requirement in \eqref{eq:mus_constraint_positive} reads
\begin{equation*}
    d \ge C_s^2 \frac{\log\frac{1}{\delta}}{N_{\min}^2 r_s^4},
\end{equation*}
while the requirement in \cref{eq:muc_constraint_positive} (with minor simplifications) reads
\begin{equation*}
    d \ge \frac{C_c^2 \log\frac{1}{\delta}}{N_{\min}r_c^4 } \max\left\{ (Q^{-1}(\epsilon))^2, \frac{1}{N_{\min}}, r_s^2
    \right\}.
\end{equation*}
Using $r_s \ge r_c$ and $r_s^2 \le \frac{1}{N_{\min}}$, the above two displays simplify to
\begin{equation*}
    d \ge \frac{\max\{C_c, C_s\}^2 \log\frac{1}{\delta}}{N_{\min}r_c^4 } \max\left\{ (Q^{-1}(\epsilon))^2, \frac{1}{N_{\min}}\right\}.
\end{equation*}
Therefore, taking
\begin{equation*}
    d = \max\{C_d, C_d', C_s^2, C_c^2\} \max\left\{ N^2, \frac{N}{\gamma^2 N_1^2 r_c^2}, \frac{(Q^{-1}(\epsilon))^2}{N_{\min} r_c^4}, \frac{1}{N_{\min}^2 r_c^4}\right\}\log \frac{1}{\delta}
\end{equation*}
fulfills all the requirements and completes the proof.
\end{proof}
\section{Definitions of Invariance and Their Manifestation In Our Model}\label{sec:invariance_defs}
In \secref{sec:positive_result} we show that the Equalized Odds principle in our setting reduces to the demand that $\inp{\w}{\vmu_s}=0$. Here we provide short derivations that show this is also the case for some other invariance principles from the literature. We will show this in the population setting, that is in expectation over the training data. We also assume that $\theta_1 \neq \theta_2$.
\paragraph{Calibration over environments \citep{wald2021calibration}} Assume $\sigma(\inp{\w}{\x})$ is a probabilistic classifier with some invertible function $\sigma:\reals\rightarrow [0, 1]$ such as a sigmoid, that maps the output of the linear function to a probability that $y=1$. Calibration can be written as the condition that:
\begin{align*}
    \sP_{\theta}(y=1 \mid \sigma(\inp{\w}{\x} - b)=\hat{p}) = \hat{p} \quad \forall \hat{p}\in{[0,1]}.
\end{align*}
Calibration on training environments in our setting then requires that this holds simultaneously for $\sP_{\theta_1}$ and $\sP_{\theta_2}$. We can write the conditional probability of $y$ on the prediction (when the prior over $y$ is uniform) as:
\begin{align*}
    \sP_{\theta_e}(y=1 \mid \inp{\w}{\x} - b = \alpha) = \frac{\exp\left(\frac{\left(\alpha-\inp{\w}{\vmu_c + \theta_1 \vmu_s} + b\right)^2}{2\sigma^2\norm{\w}^2}\right)}{\exp\left(\frac{\left(\alpha-\inp{\w}{\vmu_c + \theta_1 \vmu_s} + b\right)^2}{2\sigma^2\norm{\w}^2}\right) + \exp\left(\frac{\left(\alpha+\inp{\w}{\vmu_c + \theta_1 \vmu_s} + b\right)^2}{2\sigma^2\norm{\w}^2}\right)}
\end{align*}

Now it is easy to see that if the classifier is calibrated across environments, we must have equality in the log-odds ratio for the above with $e=1$ and $e=2$ and all $\alpha\in{\reals}$:
\begin{align*}
    \frac{\left(\alpha-\inp{\w}{\vmu_c + \theta_1 \vmu_s} + b\right)^2}{2\sigma^2\norm{\w}^2} - \frac{\left(\alpha+\inp{\w}{\vmu_c + \theta_1 \vmu_s} + b\right)^2}{2\sigma^2\norm{\w}^2} = \\
    \frac{\left(\alpha-\inp{\w}{\vmu_c + \theta_2 \vmu_s} + b\right)^2}{2\sigma^2\norm{\w}^2} - \frac{\left(\alpha+\inp{\w}{\vmu_c + \theta_2 \vmu_s} + b\right)^2}{2\sigma^2\norm{\w}^2}.
\end{align*}
After dropping all the terms that cancel out in the subtractions we arrive at:
\begin{align*}
    \inp{\w}{\vmu_c + \theta_1 \vmu_s} = \inp{\w}{\vmu_c + \theta_2 \vmu_s}.
\end{align*}
Clearly this holds if and only if $\inp{\w}{\vmu_s} = 0$, hence calibration on both environments entails invariance in the context of the data generating process of \Cref{def:learning_setup}.

\paragraph{Conditional Feature Matching \citep{li2018deep, veitch2021counterfactual}} Treating the environment index as a random variable, the conditional independence relation $\inp{\w}{\x} \indep e \mid y$ is a popular invariance criterion in the literature. Other works besides the ones mentioned in the title of this paragraph have used this, like the Equalized Odds criterion \citep{hardt2016equality}. This independence is usually enforced w.r.t available training distributions, hence in our case w.r.t $\sP_{\theta_1}, \sP_{\theta_2}$. Writing this down we can see that:
\begin{align*}
    \sP_{\theta_e}(\inp{\w}{\x} \mid y=1) = \N (\inp{\w}{\mu_c + \theta_e\mu_s}, \norm{\w}^2\sigma^2I).
\end{align*}
Hence requiring conditional independence in the sense of $\sP_{\theta_1}(\inp{\w}{\x} \mid y=1)=\sP_{\theta_2}(\inp{\w}{\x} \mid y=1)$ means we need to have equality of the expectations, i.e. $\inp{\w}{\mu_c + \theta_1\mu_s} = \inp{\w}{\mu_c + \theta_2\mu_s}$ which happens only if $\inp{\w}{\mu_s}=0$.
\paragraph{Other notions of invariance.} It is easy to see that even without conditioning on $y$, the independence relation $\inp{\w}{\x} \indep e$ used in \citet{veitch2021counterfactual} among many others will also require that $\inp{\w}{\vmu_s}=0$. For the last invariance principle we discuss here, we note that VREx and CVaR Fairness essentially require equality in distribution of losses \citep{pmlr-v97-williamson19a, krueger2020out} under both environments. Examining the expression for the error of $\w$ under our setting (\eqref{eq:error_explicit}) reveals immediately that these conditions will also impose $\inp{\w}{\vmu_s}=0$.

\section{Invariant Risk Minimization and Maximum Margin} \label{sec:irm_max_margin}
In the main paper we note that the IRMv1 penalty of \citet{arjovsky2019invariant} can be shown to prefer large margins when applied with linear models to separable datasets. This can be shown when we apply the IRMv1 principle with exponentially decaying losses such as the logistic or the exponential loss. We characterize the condition on the losses below and then give the result using a technique similar to \cite{NIPS2003_0fe47339} who prove that exponentially decaying losses maximize margins under separable datasets.

For generality we do not assume anything about the data generating process (specifically, we do not assume the data is Gaussian as we do in the main paper) and also allow for more than two training environments. We do assume for simplicity that the datasets for each environment are of the same size, yet the proof can be easily adjusted to account for varying sizes. Let $S_e = \{(\x^e_i, y^e_i)\}_{i=1}^{m}$ be datasets for each environment $e\in{E_{\text{train}}}$, with $\cX=\reals^d, \Y=\{-1, 1\}$. Assume the pooled dataset $S=\{(\x_i^e, y_i^e)\}_{i,e}$ is linearly separable and we are learning with an $l_2$ regularized IRM, that is
\begin{align} \label{eq:irm_l2_reg}
     \mathcal{L}(\w; S) + \lambda_1\sum_{e}{\| \nabla_{v:v=1}\mathcal{L}(v\cdot\w; S^e) \|} + \lambda_2\|\w\|_2^2.
\end{align}
Here we defined the average loss over a dataset as $\mathcal{L}(\w; \tilde{S}) = \frac{1}{|\tilde{S}|}\sum_{(\x, y)\in{\tilde{S}}}{l(\w^\top\x y)}$. Our result holds for losses that satisfy the following conditions for any $\epsilon > 0$:
\begin{align}
    \lim_{t\rightarrow\infty}{\frac{l(t\cdot[1-\epsilon])}{l(t)}}&=\infty, \label{eq:logloss_condition}\\
    \lim_{t\rightarrow\infty}{\frac{\nabla_{s:s=t\cdot[1-\epsilon]} l(s)}{\nabla_{s=t}l(s)}}&=\infty. \label{eq:grad_logloss}
\end{align}

\paragraph{Relation to Previous Formal Results.} Previous works \citep{zhou2022sparse, lin2022bayesian} have noted that the IRMv1 penalty cannot distinguish between solutions that achieve $0$ loss. That is, they prove that the solution of the IRMv1 problem is not unique when $0$ loss is achievable and that the set of possible solutions coincides with the set of possible solutions for ERM. Yet here we are interested in the implicit bias of learning algorithms, hence we are interested in a more specific characterization of the IRMv1 solutions that is not provided by prior works. We ask whether out of the hyperplanes that separate $S$, will the IRMv1 principle find one that attains a margin that is considerably smaller than the attainable margin (hence our negative result would not imply its failure), or instead it finds a large-margin separator and hence our theory predicts its failure in learning a robust classifier?

While ideally we would like to give a characterization of the solutions towards SGD will converge, as in e.g. \citet{JMLR:v19:18-188}, the techniques used to gain such results are inapplicable to non-convex losses such as the IRMv1 penalty. Hence we turn to prove a different type of result, concerning the convergence of solutions for an $\ell_2$ regularized problem, as the regularization term vanishes. This has been used in previous work to gain intuition on the type of losses that lead to margin-maximizing solutions \citep{NIPS2003_0fe47339}.

\begin{claim}
Let $\hat{\w}(\lambda_2)$ be a minimizer of \eqref{eq:irm_l2_reg} with $\lambda_2 > 0$, where we assume that the empirical loss $l:\reals \rightarrow (0, \infty)$ is monotone non-increasing, and satisfies \eqref{eq:logloss_condition} and \eqref{eq:grad_logloss}. Any convergence point of $\frac{\hat{\w}(\lambda_2)}{\|\hat{\w}(\lambda_2)\|_2}$ as $\lambda_2 \rightarrow 0$ is a maximum margin classifier on $S$.
\end{claim}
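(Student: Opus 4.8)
The plan is to adapt the margin‑maximization argument of \cite{NIPS2003_0fe47339} to the $\ell_2$‑regularized IRMv1 objective \eqref{eq:irm_l2_reg}; the one genuinely new ingredient is showing that the IRMv1 penalty is asymptotically negligible next to the empirical loss, which is exactly where condition \eqref{eq:grad_logloss} enters. Write $F_{\lambda_2}(\w)$ for the objective in \eqref{eq:irm_l2_reg}, $P(\w):=\sum_{e\in E_{\text{train}}}\bigl|\nabla_{v:v=1}\mathcal{L}(v\w;S^e)\bigr| = \sum_{e}\frac1m\sum_{(\x,y)\in S^e}|l'(\w^\top\x y)|\,\w^\top\x y$ for its (nonnegative, on separating $\w$) penalty part, $\gamma^\star:=\max_{\|\w\|=1}\min_{(\x,y)\in S} y\w^\top\x>0$ the max margin (positive by separability), attained at a unit vector $\w^\star$, and $R:=\max_{(\x,y)\in S} y(\w^\star)^\top\x$. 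For a minimizer $\hat\w(\lambda_2)$ write $B(\lambda_2):=\|\hat\w(\lambda_2)\|$ and $\bar\w(\lambda_2):=\hat\w(\lambda_2)/B(\lambda_2)$. We use that $|l'|$ is non‑increasing (as for the logistic and exponential losses), that $l(s)\to0$ as $s\to\infty$ (which follows from \eqref{eq:logloss_condition}), and the elementary bound: for $\epsilon\in(0,1/2)$ and $s>0$, integrating $|l'|$ over $[s(1-2\epsilon),s(1-\epsilon)]$ gives $\epsilon s\,|l'(s(1-\epsilon))|\le l(s(1-2\epsilon))$, whence $s|l'(s)|\le\frac1\epsilon\cdot\frac{|l'(s)|}{|l'(s(1-\epsilon))|}\cdot l(s(1-2\epsilon))$.

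\emph{Step 1: $B(\lambda_2)\to\infty$ as $\lambda_2\to0^+$.} For any fixed $t>0$, optimality gives $\mathcal{L}(\hat\w(\lambda_2);S)\le F_{\lambda_2}(t\w^\star)=\mathcal{L}(t\w^\star;S)+\lambda_1 P(t\w^\star)+\lambda_2 t^2$. Here $\mathcal{L}(t\w^\star;S)\le l(t\gamma^\star)$ and $P(t\w^\star)\le|E_{\text{train}}|\,R\,t|l'(t\gamma^\star)|$; applying the bound above with $s=t\gamma^\star$, $\epsilon=\tfrac14$ and \eqref{eq:grad_logloss} shows $t|l'(t\gamma^\star)|\to0$, so letting $\lambda_2\to0$ and then $t\to\infty$ yields $\limsup_{\lambda_2\to0}\mathcal{L}(\hat\w(\lambda_2);S)=0$. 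But if $B(\lambda_2)\le M$ along a subsequence, then every margin $y\hat\w(\lambda_2)^\top\x$ is at most $M\max_i\|\x_i\|$, so $\mathcal{L}(\hat\w(\lambda_2);S)\ge l\bigl(M\max_i\|\x_i\|\bigr)>0$ there, a contradiction. Hence $B(\lambda_2)\to\infty$.

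\emph{Step 2: every limit point of $\bar\w(\lambda_2)$ has margin $\gamma^\star$.} Let $\bar\w$ be a limit point along $\lambda_2\to0$ and suppose, for contradiction, $\gamma(\bar\w):=\min_{(\x,y)\in S} y\bar\w^\top\x=\gamma^\star-2\delta$ with $\delta>0$. Since $\w\mapsto\gamma(\w)$ is continuous and $B(\lambda_2)\to\infty$, along the relevant subsequence we may assume $\gamma(\bar\w(\lambda_2))\le\gamma^\star-\delta$ with $B:=B(\lambda_2)$ as large as we like. The key trick is to compare $\hat\w(\lambda_2)$ with $B\w^\star$, which has the \emph{same} norm, so the $\lambda_2\|\cdot\|^2$ terms cancel in $F_{\lambda_2}(\hat\w(\lambda_2))\le F_{\lambda_2}(B\w^\star)$, leaving $\mathcal{L}(\hat\w(\lambda_2);S)\le\mathcal{L}(B\w^\star;S)+\lambda_1 P(B\w^\star)$. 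Bound the three pieces: (i) keeping only the worst‑margin term, $\mathcal{L}(\hat\w(\lambda_2);S)\ge\frac1{|S|}l\bigl(B(\gamma^\star-\delta)\bigr)$; (ii) $\mathcal{L}(B\w^\star;S)\le l(B\gamma^\star)$; (iii) $P(B\w^\star)\le|E_{\text{train}}|R\,B|l'(B\gamma^\star)|\le\frac{2|E_{\text{train}}|R}{\delta}\cdot\frac{|l'(B\gamma^\star)|}{|l'(B(\gamma^\star-\delta/2))|}\cdot l\bigl(B(\gamma^\star-\delta)\bigr)$, using the displayed $s|l'(s)|$ bound with $s=B\gamma^\star$, $\epsilon=\tfrac{\delta}{2\gamma^\star}$. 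Dividing through by $l(B(\gamma^\star-\delta))$ gives
\[
\frac1{|S|}\le\frac{l(B\gamma^\star)}{l(B(\gamma^\star-\delta))}+\frac{2\lambda_1|E_{\text{train}}|R}{\delta}\cdot\frac{|l'(B\gamma^\star)|}{|l'(B(\gamma^\star-\delta/2))|}.
\]
As $\lambda_2\to0$ we have $B\to\infty$, and the first right‑hand term tends to $0$ by \eqref{eq:logloss_condition} (take $t=B\gamma^\star$, $\epsilon=\delta/\gamma^\star$), while the second tends to $0$ by \eqref{eq:grad_logloss} (take $t=B\gamma^\star$, $\epsilon=\delta/(2\gamma^\star)$) — contradicting the fixed lower bound $1/|S|$. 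Therefore $\gamma(\bar\w)\ge\gamma^\star$, and since $\|\bar\w\|=1$ forces $\gamma(\bar\w)\le\gamma^\star$, we get $\gamma(\bar\w)=\gamma^\star$; i.e.\ $\bar\w$ is a maximum‑margin classifier on $S$. As the limit point was arbitrary, the claim follows.

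\emph{Main obstacle.} The only real difficulty beyond the classical argument is controlling $P(B\w^\star)$: crudely it is only $O\bigl(B|l'(B\gamma^\star)|\bigr)$, which for the exponential loss equals $\Theta\bigl(B\,l(B\gamma^\star)\bigr)$ and does not visibly vanish against the loss lower bound $\tfrac1{|S|}l(B(\gamma^\star-\delta))$. Condition \eqref{eq:grad_logloss} resolves this by letting us trade $B|l'(B\gamma^\star)|$ for $\frac{|l'(B\gamma^\star)|}{|l'(B(\gamma^\star-\delta/2))|}\,l(B(\gamma^\star-\delta))$ with a vanishing prefactor, so the penalty is asymptotically dominated by the loss gap that \eqref{eq:logloss_condition} already handles; choosing the comparison vector to have exactly norm $B(\lambda_2)$ (so the ridge terms cancel) is what isolates $P$ as the sole extra term to bound. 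A minor point to pin down is the assumption that $|l'|$ is non‑increasing (equivalently $l$ convex), which holds for the losses of interest and is what makes the integral bounds above clean.
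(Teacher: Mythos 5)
Your proof is correct and reaches the same conclusion, but the core of your argument is organized differently from the paper's. The paper's Step 2 compares two arbitrary scaled unit vectors $t\w_1$ and $t\w_2$ with different margins, shows the larger-margin one eventually has smaller IRMv1 penalty, and then outsources the corresponding comparison of the loss values themselves to the cited margin-maximization argument of Rosset et al.; Step 3 then rules out sub-optimal limit points via a topological neighborhood argument (every $\w$ near a sub-optimal limit point is eventually beaten by $t\w_2$ at the same scale). You instead compare the actual minimizer $\hat\w(\lambda_2)$ directly against the norm-matched competitor $B\w^\star$, so the ridge terms cancel exactly, and you derive a single quantitative inequality $\tfrac{1}{|S|}\le \tfrac{l(B\gamma^\star)}{l(B(\gamma^\star-\delta))}+\tfrac{2\lambda_1|E_{\text{train}}|R}{\delta}\cdot\tfrac{|l'(B\gamma^\star)|}{|l'(B(\gamma^\star-\delta/2))|}$ whose right-hand side vanishes by \eqref{eq:logloss_condition} and \eqref{eq:grad_logloss}. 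This buys you a self-contained argument (no appeal to the external loss-value comparison, no neighborhood/compactness step) and makes explicit how the penalty is traded against the loss gap via your integral bound $\epsilon s\,|l'(s(1-\epsilon))|\le l(s(1-2\epsilon))$. The one thing you pay for is an extra hypothesis: that integral bound needs $|l'|$ to be non-increasing (equivalently, $l$ convex and differentiable) on the relevant intervals, which is not among the paper's stated assumptions (monotone non-increasing $l$ plus the two limit conditions); the paper instead extracts the analogous monotonicity statements only asymptotically, directly from \eqref{eq:grad_logloss}, for $t$ large enough. You flag this yourself, and it holds for the logistic and exponential losses the claim is aimed at, so I would treat it as a mild strengthening of the hypotheses rather than a gap — but you should state it explicitly in the claim if you keep your route. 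One further small point to pin down: when the limit point $\bar\w$ does not separate the data, $\delta$ as you define it can exceed $\gamma^\star/2$ and violate $\epsilon<1/2$; just take $\delta$ smaller (any $\delta$ with $\gamma(\bar\w)\le\gamma^\star-2\delta$ and $\delta<\gamma^\star/2$ works), since you only need some fixed positive gap to reach the contradiction.
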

\begin{proof}
We prove the claim in three steps.
\paragraph{Showing that $\lim_{\lambda_2\rightarrow 0}{\|\hat{\w}(\lambda_2)\|_2^2}=\infty$.}
To this end we note that the loss is strictly positive and due to \eqref{eq:logloss_condition} it approaches $0$ if and only if the margin approaches $\infty$. The margin can only grow unboundedly large if the weights also do. Therefore if a sequence $\hat{\w}(\lambda_2)$ approaches loss $0$ as
$\lambda_2\rightarrow 0$ then also $\|\hat{\w}(\lambda_2)\|\rightarrow \infty$. 
Now we will show that this must happen, thus concluding this part of the proof. Due to \eqref{eq:grad_logloss} we can also observe that the IRMv1 regularizer approaches $0$ as the margin approaches $\infty$. This holds since for any dataset $\tilde{S}$,
\begin{align*}
    \left\| \nabla_{v:v=1}\cL\left(v\cdot \w; \tilde{S}\right) \right\| = \left\| \frac{1}{|\tilde{S}|}\sum_{(\x_i, y_i)\in{\tilde{S}}}{\langle \w, \x_i \cdot y_i \rangle\cdot\nabla_{s:s=\langle \w, \x_i \cdot y_i \rangle}\ell\left( s \right)} \right\|.
\end{align*}
Invoking \eqref{eq:grad_logloss}, we may gather that for any $\epsilon > 0$ it holds that
\begin{align*}
  \lim_{t\rightarrow \infty}{\frac{t\cdot[1-\epsilon]\cdot \nabla_{s:s=t\cdot [1-\epsilon]}l(s)}{t\cdot \nabla_{s:s=t} l(s)}} = (1-\epsilon)\cdot \lim_{t\rightarrow \infty}{\frac{\nabla_{s:s=t\cdot [1-\epsilon]}l(s)}{\nabla_{s:s=t} l(s)}} = \infty.
\end{align*}
Due to non-increasing monotonicity of the loss, we see that $t\cdot \nabla_{s:s=t} l(s) < 0$ any $t>t_0$ for some $t_0$ and approaches $0$ as $t\rightarrow \infty$.\footnote{we should also note that the derivative of the loss does not approach $-\infty$ since it is bounded below by $0$, hence it must indeed be the case that the denominator approaches $0$.} This means that if the margin attained by some series of hyperplanes approaches $\infty$, then the value of the IRMv1 regularizer also approaches $0$. Now to see that $\| \hat{\w}(\lambda_2) \| \rightarrow \infty$ as $\lambda_2 \rightarrow 0$, we observe that the objective of this series as given by \eqref{eq:irm_l2_reg} must approach $0$. This holds since there is a series of hyperplanes whose objective approaches $0$ (e.g. any series whose norm grows sub-linearly with $\lambda^{-1}_2$ and separates the dataset with margin that grows to infinity). Hence any series whose objective does not approach $0$ cannot be a series of minimizers. As mentioned before, the loss can only approach $0$ if $\| \hat{\w}(\lambda_2) \|$ approaches $\infty$.

\paragraph{Maximum margin separators approach $0$ faster than others.} Let $\w_1$ and $\w_2$ be vectors on the unit sphere that define separating hyperplanes, and assume $\w_1$ achieves a larger margin than $\w_2$. That is, if we define for $k\in{\{1, 2\}}$
\begin{align*}
    s^{(k)}_{i,e} = \w_k^\top\x^e_iy^e_i,~ s^{(k)}_{e} = \min_i{s^{(k)}_{i,e}}, \text{ and } s^{(k)} = \min_{e}{s^{(k)}_e},
\end{align*}
then our assumption on the margins is that $s^{(1)} > s^{(2)}$. We will show that the direction of $\hat{\w}(\lambda_2)$ cannot approach $\w_2$ as $\lambda_2\rightarrow 0$. To this end, note that for some $t_0$ it holds that for all $t>t_0$ we have simultaneously that:
\begin{align}
    t\cdot s^{(1)} \nabla_{s:s=t\cdot s^{(1)}}l(s) &< (m^2 |E_{\text{train}}|)^{-1}t\cdot s^{(2)} \nabla_{s:s=t \cdot s^{(2)}}l(s) \label{eq:irm_bound_eq1}\\
    t\cdot s^{(1)}_{i,e} \nabla_{s:s=t \cdot s^{(1)}_{i,e}}l(s) &\geq t\cdot s^{(1)} \nabla_{s:s=t\cdot s^{(1)}}l(s) \quad \forall i, e \label{eq:irm_bound_eq2}
\end{align}
This is true since $s^{(1)}>s^{(2)}$, $s^{(1)}_{i,e}\geq s^{(1)}$ for all $i,e$, and \eqref{eq:grad_logloss} holds. We will now show that the value of the IRMv1 regularizer under $t\w_1$ is lower than that under $t\w_2$:
\begin{align*}
    \sum_{e}{\| \nabla_{v:v=1}\mathcal{L}(v\cdot\w_1; S^e) \|} &= \sum_{e}\left(\frac{1}{m}\sum_{i=1}^{m}t\cdot s^{(1)}_{i, e}\nabla_{s:s=t\cdot s^{(1)}_{i,e}}l(s)\right)^2 \\
    &\leq  |E_{\text{train}}|\cdot{(t\cdot s^{(1)} \nabla_{s:s=t\cdot s^{(1)}}l(s))^2} \\
    &< {(m^{-1}\cdot t\cdot s^{(2)} \nabla_{s:s=t\cdot  s^{(2)}}l(s))^2} \\
    &\leq \sum_{e}\left(\frac{1}{m}\sum_{i=1}^{m}t\cdot s^{(2)}_{i, e}\nabla_{s:s=t\cdot s^{(2)}_{i,e}}l(s)\right)^2 \\
    &=\sum_{e}{\| \nabla_{v:v=1}\mathcal{L}(v\cdot\w_2; S^e) \|}.
\end{align*}
The third inequality is true since for a separating hyperplane, all the summands are negative, thus if we only add summands then the square of the entire sum becomes larger. The second one is due to \eqref{eq:irm_bound_eq1}, while the first inequality is due to \eqref{eq:irm_bound_eq2} when we again use the negativity of the summands. 

Following the proof of \cite{NIPS2003_0fe47339}, we can gather that the above also holds when we replace the 
multiplication between gradients of the loss and the margin, simply to the value the loss. Taken together, this means that $t\w_1$ achieves a lower loss than $t\w_2$ for all $t> t_0$ for some $t_0$.

\paragraph{Showing that only max-margin classifiers can be limit points.} To finish the proof we simply note that if $\w_1$ is a convergence point of $\hat{\w}(\lambda_2) / \| \hat{\w}(\lambda_2) \|$ and it achieves margin $\gamma_1$, while $\w_2$ is another vector on the unit sphere that attains margin $\gamma_2 > \gamma_1$. Then there is also a neighborhood around $\w_1$, $N_{\w_1} = \{\w : \|\w\|=1, \|\w-\w_1\|\leq \delta\}$ with sufficiently small $\delta$, such that each $\w\in{N_{\w_1}}$ attains margin at most $\gamma_2 - \epsilon > \gamma_1$. Therefore by our previous paragraph we know that for some $t_0$ and all $t>t_0$ then $t\w_2$ attains a lower loss than $t\w$ for all $\w\in{N_{\w_1}}$, which means $\w_1$ cannot be a convergence point of $\hat{\w}(\lambda_2) / \| \hat{\w}(\lambda_2) \|$ (since the items of this series are kept out of a neighborhood $N_{\w_1}$ around $\w_1$). This is a contradiction to $\w_1$ being a convergence point, meaning $\w_2$ cannot attain a larger margin than $\w_1$.
\end{proof}

Our simulations indicate that indeed, when we apply unregularized IRMv1 to the synthetic setting we study, it finds a separator that is very close to the max-margin solution. This also holds for some other penalties such as VREx and we believe that the proof above can be adapted to this loss and several other ones, which we leave for future work. We illustrate this in \yw{!!TODO: add figure!!} ref by plotting the cosine similarity of the learned hyperplane with the hyperplane found by an SVM classifier (trained with a hinge loss using LinearSVC from sklearn \citep{scikit-learn} with $C=1000$) as $d$ grows and we approach the interpolating regime. We note that while some other methods such as CORAL and MMD do not maximize margin, they still separate the data with non-vanishing margin and indeed fail achieve worst robust accuracy as the dimension grows.

\section{Experimental Details for Waterbirds Dataset} \label{sec:waterbirds_details}
Here we elaborate on experimental details in our Waterbirds experiment that were left out from the main paper due to lack of space. The dataset is split into training, validation and test sets with 4795, 1199 and 5794 images in each set, respectively. We follow previous work \citep{pmlr-v119-sagawa20a, veldanda2022fairness} in defining a binary task in which waterbirds is the positive class and landbirds are the negative class, and using the following random features setup: for every image, a fixed pre-trained ResNet-18 model is used to extract a $d_{\mathrm{rep}}$-dimensional feature vector $\x'$ ($d_{\mathrm{rep}}=512$). This feature vector is then converted into an $d$-dimensional 
feature vector $\x = \text{ReLU}(U\x')$, where $U \in \R^{d\times d_{\mathrm{rep}}}$ is a random matrix with Gaussian entries. Finally, a logistic regression classifier is trained on $\x$. The extent of over-parameterization in this setup is controlled by varying $d$, the dimensionality of $\x$. In our experiments we vary $d$ from $50$ to $2500$, with interpolation empirically observed at $d=1000$ (which we refer to as the interpolation threshold).

For all the experiments we use the Adam optimizer, a batch size of 128 and a learning rate schedule with initial rate of 0.01 and a decay factor of 10 for every 10,000 gradient steps. Every experiment is repeated 25 times and results are reported over all runs. For the baseline model we train for a total of 30,000 gradient steps whereas for our two-phased algorithm we use 15,000 gradient steps for each model in Phase A and an additional 250 steps for Phase B. 



\end{document}